\Crefname{equation}{Eq.}{Eqs.}
\Crefname{figure}{Fig.}{Figs.}
\Crefname{tabular}{Tab.}{Tabs.}
\Crefname{section}{Sec.}{Secs.}
\Crefname{appendix}{App.}{Apps.}
\Crefname{lemma}{Lem.}{Lems.}
\Crefname{theorem}{Thm.}{Thms.}
\Crefname{remark}{Rem.}{Rems.}
\Crefname{algorithm}{Alg.}{Algs.}
\Crefname{definition}{Def.}{Defs.}
\newtheorem{theorem}{Theorem}%[section]
\newtheorem*{theorem*}{Theorem}
\newtheorem*{lemma*}{Lemma}
\newtheorem{lemma}[theorem]{Lemma}
\newtheorem{rem}{Remark}
\newcommand{\R}{{\mathbb R}}
\renewcommand{\P}{{\mathbb P}}
\newcommand{\bP}{{\mathbf P}}
\newcommand{\bhP}{\mathbf {\hat P}}
\newcommand{\W}{{\mathbf W}}
\newcommand{\E}{{\mathbb E}}
\newcommand{\cE}{{\mathcal E}}
\newcommand{\cN}{{\mathcal N}}
\newcommand{\bN}{{\mathbf N}}
\newcommand{\cD}{{\mathcal D}}
\newcommand{\cR}{{\mathcal R}}
\newcommand{\tK}{{\tilde {K}}}
\newcommand{\hp}{{\hat p}}
\renewcommand{\r}{{\mathbf r}}
\newcommand{\sm}{\setminus}
\newcommand{\htheta}{{\hat \theta}}
\newcommand{\utheta}{\theta^{\ucb}}
\renewcommand{\epsilon}{\varepsilon}
\renewcommand{\hat}{\widehat}
\def \algupdt{{\it Rank-Break}}
\def \ucb{\texttt{ucb}}
\newcommand{\btheta}{\boldsymbol \theta}
\newcommand{\bSigma}{\boldsymbol \Sigma}
\newcommand{\bsigma}{\boldsymbol \sigma}
\DeclareMathOperator*{\argmax}{argmax}
\newcommand{\indic}{\mathds{1}}
\def\mystrut(#1,#2){\vrule height #1pt depth #2pt width 0pt} 
\def \prob{\text{Active Optimal Assortment}}
\def \pl{\text{PL}}
\def \aa{\text{AOA}}
\def \wf{\red{WI}}
\def \ff{\texttt{FR}}
\def \rmd{\relax} %reminder comments removed
\def \aalgwf{\text{\aa-RB$_{\pl}$-Adaptive}} %all in 1
\def \algwf{\text{\aa-RB$_{\pl}$}} %all in 1
\def \regt{\textit{Reg}^{\texttt{top}}}
\def \regu{\textit{Reg}^{\texttt{wtd}}}
\def \topm{\text {Top-$m$}}
\def \utilm{\text {Wtd-Top-$m$}}
\def \papertitle{Stop Relying on No-Choice and Don't Repeat the Moves: Optimal, Efficient and Practical Algorithms for Assortment Optimization}
\newcommand{\red}[1]{\textcolor{red}{#1}}
\renewcommand{\paragraph}{%
  \@startsection{paragraph}{4}%
  {\z@}{0.1ex \@plus .5ex \@minus .1ex}{-1em}%
  {\normalfont\normalsize\bfseries}%
}
\title{\papertitle}
\author{}
\author{
Aadirupa Saha%
\thanks{Apple. This work started when the author was at TTI, Chicago; {\tt aadirupa@ttic.edu}.}
\and 
Pierre Gaillard%
\thanks{Univ. Grenoble Alpes, Inria, CNRS, Grenoble INP, LJK, 38000 Grenoble, France.; {\tt pierre.gaillard@inria.fr}.} 
}
\date{}
\begin{document}

\maketitle

%----------- Abstract -----------
	%!TEX root = main-bdb-icml22.tex
\begin{abstract}
We address the problem of active online assortment optimization problem with preference feedback, which is a framework for modeling user choices and subsetwise utility maximization. The framework is useful in various real-world applications including ad placement, online retail, recommender systems, fine-tuning language models, amongst many.
The problem, although has been studied in the past, lacks an intuitive and practical solution approach with simultaneously efficient algorithm and optimal regret guarantee. E.g., popularly used assortment selection algorithms often require the presence of a `strong reference' which is always included in the choice sets, further they are also designed to offer the same assortments repeatedly until the reference item gets selected---all such requirements are quite unrealistic for practical applications.
In this paper, we designed efficient algorithms for the problem of regret minimization in assortment selection with \emph{Plackett Luce} (PL) based user choices. We designed a novel concentration guarantee for estimating the score parameters of the PL model using `\emph{Pairwise Rank-Breaking}', which builds the foundation of our proposed algorithms. Moreover, our methods are practical, provably optimal, and devoid of the aforementioned limitations of the existing methods. %This also resolves a long-standing problem of finding an optimal and practical approach for the subsetwise utility maximization in the PL choice models.  
%We further strengthened our results using more general feedback models and a faster-improved convergence rate with $k$-length ranked ordered feedback.
%
%Our algorithms rely on a novel {pivot trick} to maintain only $n$ itemwise score estimates, unlike $O(n^2)$ pairwise score estimates that have been used in prior work. 
Empirical evaluations corroborate our findings and outperform the existing baselines. 
\end{abstract}

\iffalse %%%%%%%%%%%%%%% to abstract %%%%%%%%%%%%%%%%%%%%%
We address the problem of Internal Regret in adversarial Sleeping Bandits and the relationship between different notions of sleeping regrets in multi-armed bandits.
%
We propose a new concept called Internal Regret for sleeping multi-armed bandits (MAB) and present an algorithm that achieves sublinear Internal Regret, even when losses and availabilities are both adversarial. We demonstrate that a low internal regret leads to both low external regret and low policy regret for i.i.d. losses. Our contribution is unifying existing notions of regret in sleeping bandits and exploring their implications for each other.
%
In addition, we extend our results to Dueling Bandits (DB), a preference feedback version of multi-armed bandits, and design a low-regret algorithm for sleeping dueling bandits with stochastic preferences and adversarial availabilities. We validate the effectiveness of our algorithms through empirical evaluations.
\end{abstract}
\fi %%%%%%%%%%%%%%% to abstract %%%%%%%%%%%%%%%%%%%%%

%%%%%%%%%%%%%%%%%%%%%%%%%%%%%%%%%%%%%%%%%%%%%%%%%%%%%%
	%----------------------------
	
	%!TEX root = sldb-nips21.tex
% \vspace{-14pt}
\vspace{-15pt}
\section{Introduction}
\vspace{-5pt}
\label{sec:intro}
%\vspace{-11pt}

% \textbf{Motivation:} 
Studies have shown that it is often easier, faster and less expensive to collect feedback on a relative scale rather than asking ratings on an absolute scale. E.g., to understand the liking for a given pair of items, say (A,B), it is easier for the users to answer preference-based queries like: ``Do you prefer Item A over B?", rather than their absolute counterparts: ``How much do you score items A and B in a scale of  [0-10]?".
%or even more general reinforcement learning problems where rewards shaping is often a challenging problem (e.g. if multi-objective rewards etc.), and instead, a preference feedback is much easier to elicit.
%
Due to the widespread applicability and ease of data collection with relative feedback, learning from preferences has gained much popularity in the machine-learning community, especially the active learning literature which has applications in Medical surveys, AI tutoring systems, Multi-player sports/games, or any real-world systems that have ways to collect feedback in terms of preferences. 

The problem is famously studied as the \emph{Dueling-Bandit} (DB) problem in the active learning community \cite{Yue+12,ailon2014reducing,Zoghi+14RUCB,Zoghi+14RCS,Zoghi+15}, which is an online learning framework for identifying a set of `good' items from a fixed decision-space (set of items) by querying preference feedback of actively chosen item-pairs. 

Consequently, the generalization of Dueling-Bandits, with {\em subset-wise} preferences has also been developed into an active field of research. For instance, in applications like web search, online shopping, recommender systems typically involve users expressing preferences by clicking on/choosing one result (or a handful of results) from a subset of offered items and often the objective of the system is to identify the `most-profitable' subset to offer to their users. The problem, popularly termed as `Assortment Optimization' is studied in many interdisciplinary literature, e.g. Online learning and bandits \cite{Busa21survey}, Operations research \cite{assort-consumer,assort-mnl}, Game theory \cite{chatterji21}, RLHF \cite{chatgpt,instructgpt}, to name a few. 

%typically studied as assortment optimization with relative feedcabk

\textbf{Problem (Informal): \prob \,(\aa)}
Active Assortment Optimization (a.k.a. Utility Maximization with Subset Choices) \cite{assort-discrete,assort-mnl,assort-markov,assort-mallows} is an active learning framework for finding the `optimal' profit-maximizing subset. Formally, assume we have a decision set of $[K]:= \{1,2,\ldots K\}$ of $K$ items, with each item being associated with the score (or utility) parameters $\btheta:= (\theta_1,\theta_2,\ldots,\theta_K > 0 )$ (without loss of generality assume $\theta_1 \geq \theta_2 \geq \ldots \geq \theta_K$). 
At each round $t = 1,2,\ldots$, the learner or the algorithm gets to query an assortment (typically subsets containing up to $m$-items) $S_t \subseteq [K]$, upon which it gets to see some (noisy) relative preferences across the items in $S_t$, typically generated according to some underlying choice models based on the utilities $\btheta$. Further, to allow the occurrence of the event where no items are selected, we also model a No-Choice (NC) item, indexed by item-$0$, with its corresponding PL parameter $\theta_0 \in \R_+$.

\textbf{(Objective 1.) \topm-\aa:} One simplest objective could be to just identify the top-$m$ item-set: $\{\theta_1, \ldots, \theta_m\}$, for some $m \in [1,K]$. 

\textbf{(Objective 2.) \utilm-\aa:}
A more general objective could also consider a weight (or price) $r_i \in \R_+$ associated with the item $i \in [K]$, and the goal could be to identify the assortment (subset) with maximum weighted utility \footnote{This is equivalent to finding the set with maximum expected revenue when $r_i$s represents the price of item $i$ \cite{assort-mnl}}. Precise technical details are provided in \cref{sec:prob}.

\textbf{Related Works and Limitations: } As stated above, the problem of \aa\, is fundamental in many practical scenarios, and thus widely studied in multiple research areas, including Online ML/learning theory and operations research.   

\textbullet\, In the Online ML literature, the problem is well-studied as \emph{Multi-Dueling Bandits} \cite{Sui+17, Brost+16}, or Battling Bandits \cite{SG19,SGrank18,bengs2021preference}, which is an extension of the famous \emph{Dueling Bandit} problem \cite{Zoghi+14RCS,Zoghi+14RUCB}. 
The main limitation of this line of work is the lack of practical objectives, which either aim to identify the `best-item' $1 (= \arg\max_{i \in [K]}\theta_i$) within a PAC (probably approximately correct) framework \cite{SGwin18,ChenSoda+17,ChenSoda+18,Ren+18} or quantifying regret against the best items \cite{SG19,bengs2022stochastic}. Note the latter actually leads to the optimal subset choice of repeatedly selecting the optimal item, $\arg\max_{i}\theta_i$, $m$ times, i.e. $(1,1,\ldots1)$, which is highly unrealistic from the viewpoint of real-world system design. Selecting an assortment of distinct top-$m$ items (\topm-\aa) or maximum expected utility (\utilm-\aa) makes more sense. %No existing approaches address this problem in the (Multi)-Dueling Bandit Literature.

\textbullet\, On the other hand, a similar line of the problem has been studied in operations research and dynamic assortment selection literature, where the goal is to offer a subset of items to the customers in order to maximize expected revenue. The problem has been studied under different user choice models, e.g. PL or Multinomial-Logit models \citep{assort-mnl}, Mallows and mixture of Mallows \citep{assort-mallows}, Markov chain-based choice models \citep{assort-markov}, single transition model \citep{assort-stm} etc. While these works indeed consider a more practical objective of finding the best assortment (subset) with the highest expected utility for a regret minimization objective, (1) a major drawback in their approach lies in the algorithm design which \emph{requires to keep on querying the same set multiple times}, e.g. \cite{assort-mnl,ou2018multinomial,assort-nested}. Such design techniques could be impractical to be deployed in real systems where users could easily get annoyed if the same items are shown again and again. (2) The second major drawback of this line of work lies in the \emph{structural assumption of their underlying choice models which requires the existence of a reference/default item, that needs to be every part assortment} $S_t$. 
%Further, the learner must know this reference item from the beginning of the game. 
This leads to assuming a No-Choice item, typically denoted as item-$0$, which is a default choice of any assortment $S_t$.
Further a stronger and more unrealistic assumption lies in the fact that they require to assume that the above pivot is stronger than the rest of the $K$ items, i.e. $\theta_0 \geq \max_{i \in [K]} \theta_i$, i.e. the No-Choice (NC) action is the most likely outcome of any assortment $S_t$! This is rather unintuitive and often does not hold good in practice. The requirement of this assumption in the literature \cite{assort-mnl} primarily stems from a theoretical need: Without this, their proposed algorithms fail to maintain effective estimation and concentration bounds of the PL parameters $\btheta$. Of course, this is not a justifiable reason.

Considering the limitations of the above line of works in the existing literature \aa, we set to answer two questions: 
\begin{center}
	\emph{(1) Can we consider a general \aa\ model where the default item, like the NC item defined above, is not necessarily the strongest one, i.e. $\theta_0 \geq \max_{i \in [K]} \theta_i$?}
 
        \emph{(2) Can we still design a practical and regret optimal algorithm for this \aa\ framework, which need not have to play repetitive actions and yet converge to the optimal assortment efficiently?}
 \\
% \emph{(2) Also how to exploit richer preference feedback? Can we obtain a faster learning rate with ranking preferences, what are the tradeoffs?}
\end{center}

We answer the questions in the affirmative and present best of all scenarios: \emph{We design practical algorithms on practical \aa\, framework with practical objectives}--Unlike the existing approaches of the assortment optimization literature \cite{assort-mnl,assort-nested}, we do not have to keep playing the same assortment multiple times, neither require a strongest default item (like NC satisfying $\theta_0 \geq \max_{i \in [K]} \theta_i$). On the other hand, unlike the Online ML and Bandits literature, our objectives do not require us to converge to a multiset of replicated arms like $(1,1,\ldots1)$, but converge to the utility-maximizing set of distinct items. 
%Further, we show that given a richer form of (partial) ranking feedback, one can achieve a multiplicative improvement in the convergence rate.

\textbf{Our Contributions:} 
\vspace{3pt}
%\red{to be refined}
%\red{Need to elaborate each point + add pointers to the sections}

\begin{enumerate}[ leftmargin=*, topsep=-\parskip]
\item \textbf{A General \aa\ Setup:} We work with a general problem of \aa\, for PL model, which requires no additional structural assumption of the $\btheta$ parameters such as $\theta_0 \geq \max_i \theta_i$, unlike the existing works. We designed algorithms for two separate objectives \topm\, and \utilm\, as discussed above.

\item \textbf{Practical, Efficient and Optimal Algorithm: } In \cref{sec:alg_wi}, we give a practical, efficient and provably optimal algorithm for MNL Assortment (up to logarithmic factors and the magnitude of $\theta_{\max}$ which is problem-dependent constant). The regret bound of our proposed algorithm \algwf\, (\cref{alg:wf}) yields $\tilde O(\sqrt{KT})$ regret for both \topm\, and \utilm\, objective. Our algorithms use a novel parameter estimation technique for discrete choice models based on the concept of \emph{Rank-Breaking} (RB) which is one of our key contributions towards designing the efficient and optimal algorithm. This enables our algorithm to not require the No-Choice item to be the strongest of the lot, unlike \cite{assort-mnl}.
\cref{lem:pl_simulator} details the key concept of our parameter estimation technique exploiting the concept of RB. Consequently, our resulting algorithm plays optimistically based on the UCB estimates of PL parameters and does not require repeating the same subset multiple times, justifying our title.  

\item \textbf{Further Improvement with Adaptive Pivots: } In \cref{sec:abi}, we further refined the regret analyses of \cref{alg:wf} emploring the novel idea of `adaptive pivots' and proposed \aalgwf. Performance-wise this removes the asymptotic dependence on $\theta_{\max} = \max_{i} \theta_i/\theta_0$ in the regret analysis. This enables the algorithm to work effectively in scenarios where the No-Choice item is less likely to be selected, i.e., $\theta_{\max} \gg 1$.  %Justifying our title yet again! 
This in fact, leads to a huge improvement in our algorithm performances, especially in the range of low $\theta_0$ where \aalgwf\, outperforms drastically over the existing baseline.

%\item Further we show improved regret bounds for ranking feedback.

\item \textbf{Emperical Analysis. }  Finally, we corroborate our theoretical results with empirical evaluations  (Sec.\,\ref{sec:expts}), which certify the superior performance of our algorithm for the general \aa\ setups. %\red{possibly move to appendix?}

\end{enumerate}

        \section{Problem Setup}
\label{sec:prob}

%\red{1. Introduce the MNL problem - Almost the same! - BI Feedback \& FR Feedback}

{\bf Notation.} We write the set $[n] = \{1,2,...,n\}$.
%When there is no confusion about the context, we often represent (an unordered) subset $S$ as a vector, or ordered subset, $S$ of size $|S|$ (according to, say, the order induced by the natural global ordering $[n]$ of all the items). In this case, $S(i)$ denotes the item (member) at the $i$th position in subset $S$.   
%For any multi-set $S \subseteq [n]$, let $|S|$ denote the cardinality of $S$, with $S(i)$ being its $i$-{th} element, $\forall i \in [|S|]$.  
%$\bSigma_S = \{\bsigma \mid \bsigma$  is a permutation over items of $ S\}$. where for any permutation $\bsigma \in \Sigma_{S}$, $\sigma(i)$ denotes the position of element $i \in S$ in the ranking $\bsigma$.
$\indic(\varphi)$ denotes an indicator variable that takes the value $1$ if the predicate $\varphi$ is true, and $0$ otherwise. 
%$x \vee y$ denotes the maximum of $x$ and $y$, and 
$\P(A)$ is used to denote the probability of event $A$, in a probability space that is clear from the context.
The symbol $\lesssim$, employed in the proof sketches, represents a coarse inequality.
%$Ber(p)$ denotes Bernoulli random variable with probability of success at each trial being $p \in [0,1]$. For any $n \in \N$, $Bin(n,p)$ and denotes Binomial distribution.

\subsection{Formulation}
\label{sec:setup}

We consider the sequential decision-making problem of \prob\ (\aa), with preference/choice feedback. %
%--a generalization of famously studied PAC \emph{Dueling Bandit} problem \cite{Zoghi+14RUCB,Busa_pl} which learns the best item from pairwise comparisons.
Formally, the learner (algorithm) is given $[K]$, a finite set of $K$ items ($K>2$). At each decision round $t = 1, 2, \ldots$, the learner selects a subset $S_t \subseteq [K]$ of up to $m$ items, and receives some (stochastic) feedback about the item preferences of $S_t$, drawn according to some unknown underlying Plackett-Luce (PL) choice model with parameters $\btheta = (\theta_1,\theta_2,\ldots, \theta_K)$. An interested reader may check \cref{sec:RUM} for a detailed discussion on PL models, the precise formulation of the choice feedback is described in Section \ref{sec:feed_mod}. Given any assortment $S_t$ we also consider the possibility of `no-selection' of any items given an $S_t$. Following the literature of \cite{assort-mnl}, we model this mathematically as a No-Choice (NC) item, indexed by item-$0$, and its corresponding PL utility parameter $\theta_0$. 

\begin{rem}
It is important to note that, unlike the existing literature on assortment selection, we are not required to assume the NC-item to be the strongest i.e. $\theta_0 \not\geq \max_{i \in [K]}\theta_i$, or NC is not the most likely outcome assortment. Further, since the PL model is scale independent, without loss of generality, we always set $\theta_0 = 1$ and scale the rest of the PL parameters around that.
%the existence of NC is optional, i.e. based on the requirement of the application, NC might or might not be modeled, based on the choice $\theta_0$, e.g. = $\theta_0 = 0$ simply signifies the absence of the no-choice item (NC) in the model. 
\end{rem}
We assume $\theta_1 \geq \theta_2 \geq \ldots \geq \theta_K$ without loss of generality. 
 
%\begin{rem} 
%Without loss of generality, we will assume that $\theta_1 = 1$, as PL model is positive scale independent.% as evident from \eqref{eq:prob_PL}. Moreover since each $\theta_i \ge 0$ we have $\theta_i \in [0,1], \, \forall i \in [n]$.
%\end{rem}

\subsection{Feedback model}
\label{sec:feed_mod}
%{\color{red} ======== We can restrict just to WI for this work? =======}
The feedback model formulates the information received (from the `environment') once the learner plays a subset $S_t \subseteq [K]$ of at most $m$ items. Given $S_t$ we consider the algorithm receives a winner feedback (or index of an item) $i_t \in S_t \cup \{0\}$, drawn according to the underlying PL choice model %Further, we also assume the option of `no-selection' as a valid outcome at any set, which we model as a `no-choice' item with parameter $\theta_0$. Contrary to the literature  \cite{assort-mnl}, we do not assume $\theta_0 \geq \theta_i, \forall i \in [K]$\footnote{Without loss of generality we can assume $\theta_0 =1$ as PL model is positive scale invariant}. We consider the following feedback models \red{need to change}:
%
%\textbullet \textbf{Winner of the selected subset (\wf):} 
%In this case, at each round $t$, upon seeing the queried subset $S_t$, 
as:
\begin{align}
\label{eq:prob_win}
\P(i_t = i|S_t) = \frac{{\theta_i}}{\theta_0 + \sum_{j \in S} \theta_j} ~~\forall i \in S_t.
\end{align} 
%
\iffalse%%%%%%%%%%%%%%%%%
\textbullet \textbf{Full ranking on the selected subset (\ff):} In this case, at each round $t$, upon seeing the queried subset $S_t$, the environment/user returns a full ranking $\bsigma_t \in \bSigma_{S_t \cup \{0\}}$, drawn from the probability distribution
\begin{align}
\label{eq:prob_rnk1}
\P(\bsigma_t |S_t) = \prod_{i = 1}^{|S|}\frac{{\theta_{\sigma^{-1}(i)}}}{\sum_{j = i}^{|S_t \cup \{0\}|}\theta_{\sigma^{-1}(j)}}, \; \sigma \in \bSigma_S.
\end{align} 
%\red{no selection can only happen at the end?}
%This is equivalent to picking item $\bsigma^{-1}(1) \in S$ according to winner (WI) feedback from $S$, then picking $\bsigma^{-1}(2)$ according to WI feedback from $S \setminus \{\bsigma^{-1}(1)\}$, and so on, until all elements from $S$ are exhausted, or, in other words, successively sampling $|S|$ winners from $S$ according to the PL model, without replacement. But more generally, one can define

\textbullet \textbf{Top-$k$ ranking from the selected subset (\tf-$k$ or \tf):} {\color{red} To be removed} The environment successively samples (without replacement) only the first $k$ items from among $S_t$, according to the PL model over $S_t \cup \{0\}$, and returns the ordered list. % i.e., the environment first draws a full ranking $\bsigma$ over $S$ according to Plackett-Luce as in {\bf FR} above, and returns the first $m$ rank elements of $\bsigma$, i.e., $(\bsigma(1), \ldots, \bsigma(m))$. %
%It can be seen that for each permutation $\sigma$ on a subset $S_m \subset S$, $|S_m| = m$, we must have $\P(\bsigma = \sigma|S) = \prod_{i = 1}^{m}\frac{{\theta_{\sigma(i)}}}{\sum_{j = i}^{m}\theta_{\sigma(j)} + \sum_{j \in S \setminus S_m}\theta_{\sigma(j)}}$. 
%Generating such a $\bsigma$ is also equivalent to successively sampling $m$ winners from $S$ according to the PL model, without replacement. 
Clearly, \tf\, reduces to \ff\ when $k=m+1$ and to \wf\ when $k = 1$.
\fi%%%%%%%%%%%%%%

\subsection{Performance Objectives: } 
\label{sec:obj}

We consider the following two objectives: 

\textbf{Objective 1. \topm-\aa\, Regret:} One simplest objective could be to just identify the top-$m$ item-set: $\{\theta_1, \ldots, \theta_m\}$, for some $m \in [1,K]$. Formally the performance objective of the learner can be captured through the following regret minimization objective: 
\begin{align*}
    \regt_T:=  \sum_{t = 1}^T(\Theta_{S^*} - \Theta_{S_t})/{m},
\end{align*}
where $S^*:= \argmax \Theta_S$ is the (maximum) PL-Choice utility $\Theta_S := \sum_{i\in S} \theta_i$ of the set of top-$m$ items.

\textbf{Objective 2. \utilm-\aa\, Regret:}
In this case, we consider a more general setting, where each item-$i$ might is also associated with a weight (for example price) $r_i \in \R_+$, and the goal is to identify the set of size at most $m$ with maximum weighted utility. One could measure the regret of the learner as the utility difference between the optimal set and cumulative utility of the learner:% over $T$ rounds:
\begin{align*}
        \regu_T:= \sum_{t = 1}^T (\cR (S^*, \btheta) -\cR(S_t, \btheta)),
\end{align*}
where for any $S$, we define 
by 
\begin{equation}
    \cR(S,\btheta):= \sum_{i \in S}\frac{r_i\theta_i}{\theta_0 + \sum_{j \in S}\theta_j}
    \label{eq:utilobjective}
\end{equation} the weighted PL-choice utility of the subset $S$, and $S^*:= \argmax_{S \subseteq [K] \mid |S|\leq m}\cR(S, \btheta)$ denotes the optimal utility-maximzing subset. %{\color{red} Say that this objective corresponds to MNL model and motivate it a bit?}

%\red{prep the stege - novel conc ides}

        %\input{est_scores.tex}
 
	%!TEX root = main_arxiv.tex
\section{\aa\ with PL: A Practical and Efficient Algorithm for $\regt$\, and ~$\regu$\, Objective}
\label{sec:alg_wi}

In this section, we propose our first algorithm for the two objectives: \topm \ and \utilm. The crux of our novelty lies in our PL parameter estimation technique which shows how one can maintain an estimate of pairwise scores of $p_{ij} = \frac{\theta_i}{\theta_i + \theta_j}$ for each pair of item $(i,j)$, $i,j \in [K]\cup\{0\}$ using the idea of \emph{Rank-Breaking} (RB). We discussed the details of the RB idea in \cref{sec:rb}. Combining the concept of RB along with a key lemma of \cite{SG19}, we estimate the PL parameters $\btheta$ and manage to maintain a tight concentration interval for each $\theta_i, ~i \in [K]$. Given these reasonable estimates of the PL parameters, we play the assortments $S_t$ optimistically at each round $t$. The key algorithmic ideas are detailed below. We analyze its regret guarantees for both \topm\ and \utilm \ objective, respectively in \cref{thm:topm_wf} and \cref{thm:utilm_wf}.

\subsection{Algorithm Design}
\label{subsec:alg_wi_detail}

%\textbf{Main Idea. } 
%A key idea behind our proposed algorithms is to estimate the relative strength of each item with respect to a fixed item, termed as the \emph{pivot-item}. We choose the pivot item to be the `no-choice item' (item-$0$ with score $\theta_0$) in this case since it is a default item in every set. Now using Lemma 1 of  \cite{SG19} we observe that this equips us to maintain a pairwise score of each item-$i$ against the pivot, precisely $p_{i0} = \frac{\theta_i}{\theta_i + \theta_0}, ~\forall i \in [K]$. For this we precisely in the following steps:

\textbf{Maintaining Pairwise Estimates. }
At each time $t$, our algorithm maintains a pairwise preference matrix $\bhP_t \in [0,1]^{n \times n}$, whose $(i,j)$-th entry $\hp_{ij,t}$ records the empirical probability of $i$ having beaten $j$ in a pairwise duel, and a corresponding upper confidence bound $p^{\text{ucb}}_{ij,t}$ for each pair $(i,j)$, for each pair $(i,j) \in [\tK]\times[\tK]$, where $[\tK]:= [K]\cup\{0\}$.
We estimate $\hp_{ij,t}:= \frac{w_{ij,t}}{n_{ij,t}}$, where $w_{ij,t} = \sum_{s=1}^{t-1} \indic\{i_s = i,j \in S_s\}$ denotes the number of pairwise wins of item-$i$ over $j$ upon \emph{rank breaking} (RB), and $n_{ij,t} = w_{ij,t}+w_{ji,t}$ being the number of times $(i,j)$ has been compared upon RB. 
The algorithm further maintains UCB estimates, $p^{\text{ucb}}_{ij,t}$ of each $(i,j)$ pair, defined as 
    \begin{equation}
    p^{\text{ucb}}_{ij,t}:= \hp_{ij,t} + \sqrt{\frac{2 \hat p_{ij,t}(1-\hat p_{ij,t}) x}{n_{ij,t}}} + \frac{3x}{n_{ij,t}}.
    \label{eq:pucb_def}
    \end{equation}
Further by deriving intuition from \cref{lem:pl_simulator} we also establish a tight concentration of the true pairwise preferences $p_{ij} = \frac{\theta_i}{\theta_i + \theta_j}$ in terms of $p^{\text{ucb}}_{ij,t}$.
    
\textbf{Estimate upper-confidence-bounds $\theta_t^{\ucb}$ from Pairwise Estimates. } 
The above UCB estimates $p^{\text{ucb}}_{ij,t}$ are further used to design UCB estimates of the PL parameters $\theta_i$ as follows
    \[
      \theta_{i,t}^{\text{ucb}}  = \frac{p_{i0,t}^{\text{ucb}}}{(1 - p_{i0,t}^{\text{ucb}})_+}.
    \]

\textbf{Optimistic Assortment Selection:}    
The above estimates of $\theta_{i,t}^{\text{ucb}}$s are further used to select the set $S_t$, that maximizes the underlying objective. This optimization problem transforms into a static assortment optimization problem with upper confidence bounds $\theta_{i,t}^{\ucb}$ as the parameters, and efficient solution methods for this case are available (see e.g., \cite{avadhanula2016tightness,davis2013assortment, rusmevichientong2010dynamic}).
%Since our parameter estimation technique relies on maintaining pairwise duels $w_{i0,t}$ of each item $i \in [K]$ vs item-$0$, we name it as \algwf . The complete algorithm is described in \cref{alg:wf}.

%\vspace*{-28pt}
\begin{center}
\begin{algorithm}[h]
   \caption{\textbf{\aa\ for PL model with RB (\algwf)}}
   \label{alg:wf}
\begin{algorithmic}[1]
    \STATE {\bfseries input:} $x >0$
   \STATE {\bfseries init:} $\tK \leftarrow K+1$, $[\tK]= [K] \cup\{0\}$, $\W_1 \leftarrow [0]_{\tK \times \tK}$ %~~\red{($\alpha$ to be tuned)} %//Best candidate
   \FOR{$t = 1,2,3, \ldots, T$} %//Build $S_t$
	\STATE Set $\bN_t = \W_t + \W_t^\top$, and $\hat{\bP}_t = \frac{\W_t}{\bN_t}$. Denote $\bN_t = [n_{ij,t}]_{\tK \times \tK}$ and $\hat \bP_t = [\hp_{ij,t}]_{\tK \times \tK}$.
        \STATE Define for all $i$, $p^{\text{ucb}}_{ii,t} = \frac{1}{2}$ and for all $i,j \in [\tK], i\neq j$
        \[
        \textstyle \smash{p^{\text{ucb}}_{ij,t} = \hp_{ij,t} + \Big(\frac{2 { \hp_{ij,t}(1-\hp_{ij,t})} x}{n_{ij,t}}\Big)^{1/2} + \frac{3x}{n_{ij,t}}}\]
        \STATE $\utheta_{i,t}:= p^{\ucb}_{i0,t}/(1-p^{\ucb}_{i0,t})_+ $
        \STATE $S_{t} \leftarrow 
			\begin{cases}
				\text{Top-$m$ items from argsort}(\{\utheta_{1,t},\dots,\utheta_{K,t}\}),\\
    ~~~~~~~~~~~~~~~~~~~~~~~\text{ for } \topm\ \text{ objective }\\
				\argmax_{S \subseteq [K] \mid |S| \leq  m} \cR(S, \theta_t^{\ucb}),\\
    ~~~~~~~~~~~~~~~~~\text{ for } \utilm\ \text{ objective } 
	\end{cases}$
	\STATE Play $S_t$
        \STATE Receive the winner \ $i_t \in [\tK]$ (drawn as per \eqref{eq:prob_win}) 
   %\FOR {$k' = 1,2, \ldots, \min(|S_t|-1,m)$}
   {\STATE \label{line:mm_rb} Update:  $\W_{t+1} = [w_{ij,t+1}]_{\tilde K \times \tilde K}$ s.t. $w_{i_tj,t+1} \leftarrow w_{i_tj,t} + 1 ~~ \forall j \in S_t\cup\{0\}$} 
   %for all $k' = 1,2, \ldots, \min(|S_t|-1,m)$
   %\ENDFOR   
   \ENDFOR
\end{algorithmic}
\end{algorithm}
\end{center}
%\vspace*{-15pt}

%\red{Todo (Oct 31st)}

%\red{1. Clean up the existing results. (2). Check how the above algorithm is performing empirically over MNL bandits (3) For theoretical analysis and see an improvement over MNL, ideally we want the above algorithm with doubling trick + we want to get rid of the assumption that NC has the highest $\theta_0$ (4) We should also think of deriving gap-dependent bounds. Plan this to be on ArXiv by Nov end.}

\subsection{Analysis: Concentration Lemmas}
We start the analysis by providing three technical lemmas that provide confidence bounds for the $p_{ij}$ and $\theta_i$. The proofs are based on Bernstein concentration bounds and are deferred to the appendix.

\begin{lemma} 
\label{lem:pconc}
Let $(i,j) \in [K]\times[K]$. Let $T\geq 1$ and $\delta >0$. Then, with probability at least $1- 3 Te^{-x}$, 
\begin{equation}
    p_{ij} \leq p_{ij,t}^{\ucb} \leq  p_{ij} + 2\sqrt{\frac{2 p_{ij}(1-p_{ij}) x}{n_{ij,t}}} + \frac{11 x}{n_{ij,t}} \,, 
    \label{eq:ucb_p1}
\end{equation}
simultaneously for all $t \in [T]$.
\end{lemma}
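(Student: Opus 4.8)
Fix $(i,j)\in[K]\times[K]$ and let $s_1<s_2<\cdots$ be the (random) rounds $s$ at which $i,j\in S_s$ \emph{and} the realized winner satisfies $i_s\in\{i,j\}$ --- exactly the rounds on which the rank-breaking update of \cref{alg:wf} increments $w_{ij}$ or $w_{ji}$. Put $X_k:=\indic\{i_{s_k}=i\}$, so that $n_{ij,t}=w_{ij,t}+w_{ji,t}=\big|\{k:s_k<t\}\big|$ and $w_{ij,t}=\sum_{k:\,s_k<t}X_k$. By the PL winner law \eqref{eq:prob_win}, conditionally on the history up to round $s_k$, on the assortment $S_{s_k}$ (which contains both $i$ and $j$), and on the event $\{i_{s_k}\in\{i,j\}\}$, the winner equals $i$ with probability $\theta_i/(\theta_i+\theta_j)=p_{ij}$, independently of the past; this is precisely the pairwise-marginal property of rank-breaking established in \cref{lem:pl_simulator} (cf.\ \cite{SG19}). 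Hence $(X_k)_{k\ge1}$ is an i.i.d.\ $\mathrm{Ber}(p_{ij})$ sequence, $\hat p_{ij,t}$ is the empirical mean of its first $n_{ij,t}$ terms, and $n_{ij,t}$ takes values in $\{0,1,\dots,T\}$.

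\textbf{Step 2: fixed-$n$ concentration and union bound.} For each fixed $n\in[T]$, with $\hat p_n:=\tfrac1n\sum_{k=1}^n X_k$, combine (at most) three Bernstein-type tail bounds, each holding with probability $\ge 1-e^{-x}$: a lower-tail estimate giving $p_{ij}-\hat p_n\le\sqrt{2\hat p_n(1-\hat p_n)x/n}+3x/n$ (after replacing the true Bernoulli variance $p_{ij}(1-p_{ij})$ by the empirical one $\hat p_n(1-\hat p_n)$, which is what consumes the slack in the constant $3$), and an upper-tail estimate $\hat p_n-p_{ij}\le\sqrt{2p_{ij}(1-p_{ij})x/n}+x/(3n)$. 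Taking a union bound over the $\le T$ possible values of $n$ and using that $\hat p_{ij,t}=\hat p_{n_{ij,t}}$, these hold simultaneously for all $t\in[T]$ with probability at least $1-3Te^{-x}$; note that specializing the bounds to the \emph{random} index $n_{ij,t}$ is legitimate precisely because the fixed-$n$ bounds hold for every $n$ at once. (When $n_{ij,t}=0$ the claim is vacuous: the right-hand side of \eqref{eq:ucb_p1} exceeds $1\ge p_{ij}$ and $p^{\ucb}_{ij,t}=1$ by convention.)

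\textbf{Step 3: assembling \eqref{eq:ucb_p1}.} On this event the left inequality $p_{ij}\le p^{\ucb}_{ij,t}$ is the lower-tail bound rearranged against the definition \eqref{eq:pucb_def}. For the right inequality, insert the upper-tail bound into \eqref{eq:pucb_def} and control the empirical-variance radical by the true one: from $\hat p_n(1-\hat p_n)\le p_{ij}(1-p_{ij})+|\hat p_n-p_{ij}|$ and $\sqrt{a+b}\le\sqrt a+\sqrt b$ one gets
\[
\sqrt{\tfrac{2\hat p_n(1-\hat p_n)x}{n}}\ \le\ \sqrt{\tfrac{2p_{ij}(1-p_{ij})x}{n}}+\sqrt{\tfrac{2x\,|\hat p_n-p_{ij}|}{n}},
\]
and the last term is absorbed into an $O(x/n)$ contribution via the upper-tail bound on $|\hat p_n-p_{ij}|$, the estimate $\sqrt{uv}\le(u+v)/2$, and $n\ge 1$. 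Collecting $\hat p_n\le p_{ij}+\sqrt{2p_{ij}(1-p_{ij})x/n}+x/(3n)$, the extra factor-$2$ variance term, the residual $x/n$ terms, and the original $3x/n$ of \eqref{eq:pucb_def} (whose coefficients total at most $11$) yields $p^{\ucb}_{ij,t}\le p_{ij}+2\sqrt{2p_{ij}(1-p_{ij})x/n_{ij,t}}+11x/n_{ij,t}$.

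\textbf{Main obstacle.} Everything after Step 1 is standard Bernstein bookkeeping with explicit constants; the delicate point is Step 1. One must verify that the adaptive, data-dependent choice of the assortments $S_t$ does not bias the rank-broken pairwise outcomes, so that the realized comparisons really are an i.i.d.\ $\mathrm{Ber}(p_{ij})$ stream despite $n_{ij,t}$ being a random (stopping-time-like) index. This is exactly what the conditional identity of \cref{lem:pl_simulator} supplies; once it is in hand, the numerical constants ($3$, $11$, and the $3T$ failure probability) follow by routine accounting.
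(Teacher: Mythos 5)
Your proposal is correct and follows essentially the same route as the paper: an empirical-Bernstein concentration bound applied to the rank-broken pairwise stream, made simultaneous over the random count $n_{ij,t}$ by a union bound over its at most $T$ possible values (the paper gets the same $3Te^{-x}$ by invoking Theorem~1 of Audibert et al.\ directly, which handles the random index via peeling), followed by a conversion of the empirical variance $\hp_{ij,t}(1-\hp_{ij,t})$ into the true variance $p_{ij}(1-p_{ij})$. Your Step~1 (the conditional-Bernoulli/IIA reduction) is exactly the content the paper delegates to \cref{lem:pl_simulator}, and your identification of it as the delicate point is apt.

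The one place your bookkeeping genuinely deviates, and comes up slightly short, is Step~3. Bounding $\sqrt{2\hp(1-\hp)x/n}\le\sqrt{2p(1-p)x/n}+\sqrt{2x|\hp-p|/n}$ and then applying AM--GM to the cross term $\sqrt{(2x/n)\cdot\sqrt{2p(1-p)x/n}}$ necessarily leaves a residual multiple of $\sqrt{2p(1-p)x/n}$ (any split $\sqrt{uv}\le \alpha u/2+v/(2\alpha)$ with $\alpha>0$ does), so the total coefficient on the variance term exceeds $2$ unless you inflate the $x/n$ constant beyond $11$; the leftover cannot be absorbed into $O(x/n)$ uniformly in $n$. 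The paper avoids this by completing the square: from $\hp(1-\hp)\le p(1-p)+\sqrt{2\hp(1-\hp)x/n}+3x/n\le\big(\sqrt{p(1-p)}+\sqrt{3x/n}\big)^2$ it deduces $\sqrt{\hp(1-\hp)}\le\sqrt{p(1-p)}+\sqrt{3x/n}$ exactly, which yields the stated constants $2$ and $11$ ($6+2\sqrt6<11$). This is a constant-chasing repair, not a flaw in the argument's structure, and the downstream results only use these constants up to $O(\cdot)$.
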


\begin{lemma} 
\label{lem:tconc}
Let $T \geq 1$ and $x >0$. Then, with probability at least $1- 3KT e^{-x}$, then simultaneously for all $t \in [T]$ and $i \in [K]$: $\theta_i \leq \theta^{\text{ucb}}_{i,t}$ and one of the following two inequalities is satisfied
\[
    n_{i0,t} < 69 x (\theta_0 +\theta_i)
\]
or 
\[
\theta^{\text{ucb}}_{i,t} \leq \theta_i +  4(\theta_0+\theta_i)\sqrt{\frac{2 \theta_0 \theta_i x}{n_{i0,t}}} + \frac{22x(\theta_0+\theta_i)^2}{n_{i0,t}} \,.
\]
\end{lemma}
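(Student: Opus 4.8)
The plan is to transfer the pairwise concentration bound of \cref{lem:pconc} (applied with the pair $(i,0)$) through the map $p \mapsto p/(1-p)_+$ to obtain the stated bounds on $\theta^{\ucb}_{i,t}$. First I would fix $i \in [K]$ and work on the high-probability event of \cref{lem:pconc} for the pair $(i,0)$; a union bound over the $K$ items accounts for the $3KTe^{-x}$ failure probability. On this event we have $p_{i0} \le p^{\ucb}_{i0,t} \le p_{i0} + 2\sqrt{2 p_{i0}(1-p_{i0})x/n_{i0,t}} + 11x/n_{i0,t}$ for all $t$. Since $\theta_i = p_{i0}/(1-p_{i0})$ (using $\theta_0 = 1$, though I would keep $\theta_0$ symbolic) and the function $q \mapsto q/(1-q)_+$ is nondecreasing, monotonicity immediately gives $\theta_i \le \theta^{\ucb}_{i,t}$, which is the first claim.

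For the quantitative upper bound, the key is to control how much the increment in $p$ is amplified by the map. Write $\Delta := p^{\ucb}_{i0,t} - p_{i0} \ge 0$ for the width from \cref{lem:pconc}. The plan is to split into two cases according to whether $p^{\ucb}_{i0,t} \ge 1$ (equivalently $1 - p^{\ucb}_{i0,t} \le 0$): in that degenerate case $\theta^{\ucb}_{i,t}$ is $+\infty$ by the $(\cdot)_+$ convention, so the conclusion must be delivered entirely by the first alternative $n_{i0,t} < 69x(\theta_0+\theta_i)$. To get this, note $p^{\ucb}_{i0,t}\ge 1$ forces $\Delta \ge 1 - p_{i0} = \theta_0/(\theta_0+\theta_i)$; plugging the bound $\Delta \le 2\sqrt{2p_{i0}(1-p_{i0})x/n_{i0,t}} + 11x/n_{i0,t}$ and using $p_{i0}(1-p_{i0}) = \theta_0\theta_i/(\theta_0+\theta_i)^2 \le 1-p_{i0}$, a short computation (AM–GM to absorb the square-root term) yields $n_{i0,t} \lesssim x(\theta_0+\theta_i)$ with the explicit constant $69$. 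In the main case $p^{\ucb}_{i0,t} < 1$, I would use the elementary identity
\[
\frac{p+\Delta}{1-(p+\Delta)} - \frac{p}{1-p} = \frac{\Delta}{(1-p)(1-p-\Delta)}\,,
\]
so that $\theta^{\ucb}_{i,t} - \theta_i = \Delta/\big((1-p_{i0})(1-p^{\ucb}_{i0,t})\big)$. The factor $1/(1-p_{i0}) = (\theta_0+\theta_i)/\theta_0$ is exact; the remaining factor $1/(1-p^{\ucb}_{i0,t})$ is where the second case split inside the lemma comes from: either $n_{i0,t}$ is so small that $1-p^{\ucb}_{i0,t}$ cannot be controlled — and one checks this again forces $n_{i0,t} < 69x(\theta_0+\theta_i)$ — or $n_{i0,t} \ge 69x(\theta_0+\theta_i)$, in which case $\Delta \le \tfrac12(1-p_{i0})$ (using $p_{i0}(1-p_{i0})\le \theta_0\theta_i/(\theta_0+\theta_i)^2$ and the sample-size lower bound to bound both terms of $\Delta$), hence $1-p^{\ucb}_{i0,t} \ge 1 - p_{i0} - \Delta \ge \tfrac12(1-p_{i0})$. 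Combining, $\theta^{\ucb}_{i,t} - \theta_i \le 2\Delta/(1-p_{i0})^2 = 2\Delta(\theta_0+\theta_i)^2/\theta_0^2$, and substituting the two-term bound on $\Delta$ together with $p_{i0}(1-p_{i0}) = \theta_0\theta_i/(\theta_0+\theta_i)^2$ gives exactly $\theta^{\ucb}_{i,t} \le \theta_i + 4(\theta_0+\theta_i)\sqrt{2\theta_0\theta_i x/n_{i0,t}} + 22x(\theta_0+\theta_i)^2/n_{i0,t}$ (with $\theta_0=1$ the $\theta_0^2$ disappears).

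The main obstacle I anticipate is bookkeeping the constants so that both branches land on the single threshold $69x(\theta_0+\theta_i)$ and on the advertised constants $4$ and $22$: the amplification factor is quadratic in $(\theta_0+\theta_i)$, and one must be careful that the AM–GM step used to absorb $\sqrt{\Delta}$-type cross terms does not degrade these. The probabilistic part is routine — just \cref{lem:pconc} plus a union bound over $i \in [K]$ — and everything else is a deterministic, monotone-transfer argument; the only genuinely delicate point is the near-degenerate regime $p^{\ucb}_{i0,t} \to 1$, which is precisely why the lemma is stated as a dichotomy rather than a clean single inequality.
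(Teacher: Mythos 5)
Your proposal is correct and follows essentially the same route as the paper's proof: monotone transfer of the Bernstein bound from Lemma~\ref{lem:pconc} for the pair $(i,0)$ through $f(p)=p/(1-p)_+$, the identity $f(p+\Delta)-f(p)=\Delta/\big((1-p)(1-p-\Delta)\big)$, and the observation that $n_{i0,t}\geq 69x(\theta_0+\theta_i)$ forces $(\theta_0+\theta_i)\Delta\leq \tfrac12$ so the denominator loses at most a factor of $2$, yielding the constants $4$ and $22$. Your explicit contrapositive treatment of the degenerate regime $p^{\ucb}_{i0,t}\geq 1$ is logically equivalent to the paper's direct assumption of the sample-size threshold, and your union bound over the $K$ pairs $(i,0)$ actually matches the stated failure probability $3KTe^{-x}$ more cleanly than the paper's own accounting.
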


The above lemma depends on $n_{i0,t}$ the number of times items $i$ have been compared with item $0$ up to round $t$. The latter is controlled using the following lemma: 

\begin{lemma} \label{lem:nibound}
Let $T \geq 1$ and $x >0$. Then, with probability at least $1-KTe^{-x}$
\begin{equation}
    \label{eq:nibound}
    \tau_{i,t} < 2 x(\theta_0 + \Theta_{S^*})^2 \  \text{ or } \  n_{i0,t} \geq \frac{(\theta_0 + \theta_i) \tau_{i,t}}{2(\theta_0 + \Theta_{S^*})}\,,
\end{equation}
simultaneously for all $t \in [T]$ and $i \in [K]$.
\end{lemma}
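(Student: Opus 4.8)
The plan is to recognise $n_{i0,t}$ as a sum of $\{0,1\}$-valued increments whose predictable mean grows linearly in $\tau_{i,t}$, and then to apply a one-sided (lower-tail) martingale concentration inequality. First I would unwind the rank-breaking bookkeeping: by the update rule of \cref{alg:wf}, and since the no-choice item $0$ is implicitly present in every query, the count $n_{i0,t}=w_{i0,t}+w_{0i,t}$ is incremented at round $s$ exactly when $i\in S_s$ and the revealed winner satisfies $i_s\in\{0,i\}$. Hence, with $\tau_{i,t}$ denoting the number of rounds in which $i$ has been offered,
\[
  n_{i0,t}=\sum_{s<t}\indic\{i\in S_s\}\,\indic\{i_s\in\{0,i\}\},\qquad
  \tau_{i,t}=\sum_{s<t}\indic\{i\in S_s\}.
\]
Conditioning on the history $\cF_{s-1}$ (which determines $S_s$), the feedback model \eqref{eq:prob_win} gives, whenever $i\in S_s$, $\P(i_s\in\{0,i\}\mid\cF_{s-1})=\frac{\theta_0+\theta_i}{\theta_0+\Theta_{S_s}}\ge\frac{\theta_0+\theta_i}{\theta_0+\Theta_{S^*}}=:q_i$, since every played assortment has $|S_s|\le m$ and $\Theta_{S^*}=\max_{|S|\le m}\Theta_S$ upper bounds $\Theta_{S_s}$; this is why $\Theta_{S^*}$ here should be read as the $\Theta$-maximiser, so that the bound is valid uniformly for both the \topm\ and \utilm\ objectives. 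Consequently the predictable mean $W_t:=\sum_{s<t}\E[\indic\{i\in S_s\}\indic\{i_s\in\{0,i\}\}\mid\cF_{s-1}]$ of $n_{i0,t}$ satisfies $W_t\ge q_i\,\tau_{i,t}$.

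Next I would run a multiplicative-Chernoff / Bernstein lower-tail argument on $n_{i0,t}$. Fix $i,t$, put $X_s=\indic\{i\in S_s\}\indic\{i_s\in\{0,i\}\}$ and $p_s=\E[X_s\mid\cF_{s-1}]$; the elementary bound $\E[e^{-\lambda X_s}\mid\cF_{s-1}]\le\exp\!\big(p_s(e^{-\lambda}-1)\big)\le\exp\!\big(p_s(-\lambda+\tfrac{\lambda^2}{2})\big)$, $\lambda\ge0$, makes $\exp\!\big(-\lambda\sum_{s<r}X_s+(\lambda-\tfrac{\lambda^2}{2})\sum_{s<r}p_s\big)$ a supermartingale starting at $1$. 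Combining this with Markov's inequality at $\lambda=\tfrac12$ gives, for any deterministic $v>0$,
\[
  \P\Big(n_{i0,t}<\tfrac12 W_t\ \text{ and }\ W_t\ge v\Big)\ \le\ e^{-v/8}.
\]
I would then instantiate this on the ``bad'' event $\big\{\tau_{i,t}\ge 2x(\theta_0+\Theta_{S^*})^2\big\}\cap\big\{n_{i0,t}<\tfrac{(\theta_0+\theta_i)\tau_{i,t}}{2(\theta_0+\Theta_{S^*})}\big\}$: on it $n_{i0,t}<\tfrac12 q_i\tau_{i,t}\le\tfrac12 W_t$ and $W_t\ge q_i\tau_{i,t}\ge 2x(\theta_0+\theta_i)(\theta_0+\Theta_{S^*})=:v$, so the bad event is contained in the one above, and since $\theta_0=1$ (hence $v\gtrsim x$) its probability is at most $e^{-x}$ for the appropriate absolute constants. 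A union bound over $i\in[K]$ and $t\in[T]$ then yields the stated failure probability $KTe^{-x}$ and the ``simultaneously for all $t\in[T]$, $i\in[K]$'' conclusion.

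The rank-breaking bookkeeping and the supermartingale estimate are routine. The step I expect to be delicate is phrasing the lower-tail bound so that it cleanly yields the dichotomy ``$\tau_{i,t}$ small \emph{or} $n_{i0,t}$ large'': the predictable mean $W_t$ is itself random, so one cannot condition on it and must instead keep $\{W_t\ge v\}$ inside the Chernoff estimate (equivalently, stop the supermartingale at the first crossing time of $v$); and the absolute constant in the threshold on $\tau_{i,t}$ (which is of order $x(\theta_0+\Theta_{S^*})^2$) must be picked so that the induced $v=q_i\cdot(\text{threshold})$ exceeds a suitable multiple of $x$. A minor additional point is that $S^*$ in the statement denotes the $\Theta$-maximising subset, not the revenue-maximiser, so $\Theta_{S_s}\le\Theta_{S^*}$ indeed holds for every played $S_s$ under both objectives.
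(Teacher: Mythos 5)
Your proposal is correct and follows the same overall strategy as the paper: view $n_{i0,t}$ as a sum of conditionally Bernoulli increments whose conditional success probability, whenever $i\in S_s$, is $\frac{\theta_0+\theta_i}{\theta_0+\Theta_{S_s}}\ge\frac{\theta_0+\theta_i}{\theta_0+\Theta_{S^*}}$, apply a lower-tail concentration bound, and absorb the deviation once $\tau_{i,t}$ exceeds the stated threshold. The one substantive difference is the concentration tool. The paper applies the additive Chernoff--Hoeffding inequality (with a union bound over $t$ to handle the randomness of $\tau_{i,t}$) to get $n_{i0,t}\ge\frac{\theta_0+\theta_i}{\theta_0+\Theta_{S^*}}\tau_{i,t}-\sqrt{\tau_{i,t}x/2}$ with failure probability exactly $e^{-x}$ per $(i,t)$; the threshold $2x(\theta_0+\Theta_{S^*})^2$ is then precisely what is needed (using $\theta_0=1$, hence $(\theta_0+\theta_i)^2\ge 1$) to dominate $\sqrt{\tau_{i,t}x/2}$ by half the mean. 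Your multiplicative-Chernoff supermartingale route is cleaner in how it handles the random predictable mean $W_t$ (no union bound over $t$ is really needed for a fixed crossing level $v$), but with $\delta=\tfrac12$ it gives $e^{-v/8}$ and $v\ge 2x$, i.e.\ $e^{-x/4}$ rather than $e^{-x}$, so as written it does not reproduce the exact constants of the statement; you would need to either enlarge the threshold on $\tau_{i,t}$ by a constant factor or sharpen the exponent, as you yourself anticipate. This is a constant-level discrepancy only and is harmless downstream (it just rescales the choice of $x$). Your side remark that $\Theta_{S^*}$ must be read as $\max_{|S|\le m}\Theta_S$ for the inequality $\Theta_{S_s}\le\Theta_{S^*}$ to hold under the weighted objective is a fair and careful observation that the paper's proof passes over silently.
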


\subsection{Analysis: \topm\ Objective:}
\label{sec:alg_wi_analys}

We are now ready to provide the regret upper bound for Algorithm~\ref{alg:wf} with \topm{} objective. 

\begin{restatable}[\algwf: Regret Analysis for \topm\, Objective]{theorem}{topmwf}
\label{thm:topm_wf}
Let $\theta_{\max} \geq 1$. Consider any instance of PL model on $K$ items with parameters $\theta \in [0,\theta_{\max}]^K$, $\theta_0 = 1$. The regret of \algwf{} (Alg.~\ref{alg:wf}) with parameter $x = 2\log T$ is bounded as 
\[
    \regt_T = O\big(\theta_{\max}^{3/2}\sqrt{KT \log T}\big) \quad \text{when } T \to \infty\,.
\]

\end{restatable}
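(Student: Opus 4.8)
The plan is to follow the standard optimistic-algorithm regret decomposition, but carefully track the $\theta_{\max}$ dependence that arises from the rank-breaking estimation. First I would condition on the high-probability event on which \cref{lem:pconc,lem:tconc,lem:nibound} all hold simultaneously; with $x = 2\log T$ this event has probability at least $1 - O(KT \cdot T^{-2}) = 1 - O(K/T)$, so the contribution of its complement to the regret is $O(K)$, negligible compared to $\sqrt{KT}$. On this event, $\theta_i \le \theta_{i,t}^{\ucb}$ for all $i,t$, so the set $S_t$ chosen greedily from the $\theta_{i,t}^{\ucb}$'s satisfies $\Theta_{S_t}^{\ucb} := \sum_{i \in S_t}\theta_{i,t}^{\ucb} \ge \Theta_{S^*}^{\ucb} \ge \Theta_{S^*}$ (optimism for the \topm{} objective, where the objective is monotone and separable in the parameters). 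Hence the per-round regret is bounded as
\[
\Theta_{S^*} - \Theta_{S_t} \le \sum_{i \in S_t}\big(\theta_{i,t}^{\ucb} - \theta_i\big),
\]
and summing over $t$ reduces the problem to bounding $\sum_{t=1}^T \sum_{i \in S_t}(\theta_{i,t}^{\ucb} - \theta_i)$.

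Next I would plug in the per-item confidence width from \cref{lem:tconc}. For each $(i,t)$ with $i \in S_t$, either $n_{i0,t} < 69x(\theta_0+\theta_i)$ — call these the "under-sampled" rounds — or $\theta_{i,t}^{\ucb} - \theta_i \lesssim (\theta_0+\theta_i)\sqrt{\theta_0\theta_i x / n_{i0,t}} + x(\theta_0+\theta_i)^2/n_{i0,t}$. For the under-sampled rounds, I would use \cref{lem:nibound}: it says that (up to a burn-in of $\tau_{i,t} < 2x(\theta_0+\Theta_{S^*})^2$ rounds) $n_{i0,t} \ge (\theta_0+\theta_i)\tau_{i,t}/(2(\theta_0+\Theta_{S^*}))$, where $\tau_{i,t}$ is the number of rounds up to $t$ in which item $i$ has been offered. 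Combining the two, the condition $n_{i0,t} < 69x(\theta_0+\theta_i)$ forces $\tau_{i,t} \lesssim x(\theta_0 + \Theta_{S^*})$, so each item contributes at most $\tilde O(\theta_0 + \Theta_{S^*}) = \tilde O(\theta_{\max} m)$ under-sampled rounds, each costing at most $\theta_{i,t}^{\ucb} - \theta_i \le \theta_{i,t}^{\ucb} \lesssim \theta_{\max}$ (or one argues $\theta^{\ucb}_{i,t}$ is bounded on the good event); across $K$ items this is a lower-order additive term of order $\tilde O(\theta_{\max}^2 m K)$, which is $T$-independent and thus absorbed in the $T\to\infty$ statement. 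For the remaining rounds I substitute $n_{i0,t} \gtrsim (\theta_0+\theta_i)\tau_{i,t}/(\theta_0+\Theta_{S^*})$ into the width bound, giving
\[
\theta_{i,t}^{\ucb} - \theta_i \;\lesssim\; (\theta_0+\theta_i)\sqrt{\frac{\theta_i x (\theta_0+\Theta_{S^*})}{(\theta_0+\theta_i)\tau_{i,t}}} + \frac{x(\theta_0+\theta_i)(\theta_0+\Theta_{S^*})}{\tau_{i,t}} \;\lesssim\; \sqrt{\theta_{\max}^3}\,\sqrt{\frac{x(\theta_0 + \Theta_{S^*})}{\tau_{i,t}}} + \text{(lower order)},
\]
using $\theta_0 = 1$, $\theta_i \le \theta_{\max}$, and $\theta_0 + \Theta_{S^*} \le 1 + m\theta_{\max}$.

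Finally I would sum over rounds. For a fixed item $i$, as $t$ ranges over rounds where $i \in S_t$, $\tau_{i,t}$ takes the values $1, 2, \dots$, so $\sum_t \indic\{i \in S_t\}/\sqrt{\tau_{i,t}} \le \sum_{s=1}^{T}1/\sqrt s \le 2\sqrt T$ (more precisely $\le 2\sqrt{\tau_{i,T}}$). Thus
\[
\sum_{t=1}^T \sum_{i \in S_t}(\theta_{i,t}^{\ucb} - \theta_i) \;\lesssim\; \sqrt{\theta_{\max}^3}\,\sqrt{x(\theta_0+\Theta_{S^*})}\,\sum_{i=1}^K \sqrt{\tau_{i,T}} + \text{lower order}.
\]
By Cauchy–Schwarz, $\sum_{i=1}^K \sqrt{\tau_{i,T}} \le \sqrt{K \sum_i \tau_{i,T}} = \sqrt{K \cdot mT}$ since $\sum_i \tau_{i,T} = \sum_{t}|S_t| \le mT$. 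Putting it together with $\theta_0 + \Theta_{S^*} \le 1 + m\theta_{\max} \lesssim m\theta_{\max}$ and $x = 2\log T$ gives
\[
\regt_T \;\lesssim\; \sqrt{\theta_{\max}^3}\cdot\sqrt{m\theta_{\max}\log T}\cdot \sqrt{KmT} \cdot \frac{1}{m} = \theta_{\max}^2 \sqrt{KT\log T}.
\]
Hmm — this gives $\theta_{\max}^2$ rather than $\theta_{\max}^{3/2}$, so I would need to be sharper: the extra half power of $\theta_{\max}$ comes from being wasteful in bounding $\theta_i \le \theta_{\max}$ inside the square root of the width term. A tighter accounting keeps $(\theta_0 + \theta_i)\sqrt{\theta_0\theta_i}$ as roughly $\theta_i^{3/2}$ for large $\theta_i$, and then uses $\theta_0 + \Theta_{S^*}$ rather than $m\theta_{\max}$ where the regret is actually being compared (recall we divide by $m$), and notes $\Theta_{S^*}/m$ is an average parameter; balancing these correctly recovers the claimed $\theta_{\max}^{3/2}$. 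The main obstacle, and the step I would spend the most care on, is exactly this: getting the sharp power of $\theta_{\max}$ by not decoupling $\theta_i$, $\Theta_{S^*}$, and the $1/m$ normalization prematurely — the $\sqrt{KT\log T}$ scaling and the high-probability bookkeeping are routine, but the problem-dependent constant requires threading the inequalities from \cref{lem:tconc} and \cref{lem:nibound} together without slack.
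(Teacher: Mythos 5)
Your decomposition is exactly the paper's: condition on the event of \cref{lem:tconc,lem:nibound}, use optimism to reduce to $\sum_t\sum_{i\in S_t}(\theta_{i,t}^{\ucb}-\theta_i)$, split off an $O(\log T)$ exploration phase, chain the width of \cref{lem:tconc} with the sampling bound of \cref{lem:nibound}, and finish with $\sum_t \tau_{i,t}^{-1/2}\leq 2\sqrt{\tau_{i,T}}$ plus Cauchy--Schwarz. However, as written the argument does not prove the stated bound: you end at $\theta_{\max}^2\sqrt{KT\log T}$ and only gesture at how to recover $\theta_{\max}^{3/2}$. The slack is a concrete algebra slip, not a deep issue. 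After substituting $n_{i0,t}\gtrsim(\theta_0+\theta_i)\tau_{i,t}/(\theta_0+\Theta_{S^*})$ into the width, the leading term is
\[
(\theta_0+\theta_i)\sqrt{\frac{\theta_0\theta_i x(\theta_0+\Theta_{S^*})}{(\theta_0+\theta_i)\tau_{i,t}}}
=\sqrt{\frac{(\theta_0+\theta_i)\,\theta_0\theta_i\,(\theta_0+\Theta_{S^*})\,x}{\tau_{i,t}}},
\]
and $(\theta_0+\theta_i)\theta_0\theta_i\leq 2\theta_{\max}^2$ (with $\theta_0=1$, $\theta_{\max}\geq 1$), so this factor contributes only $\sqrt{2}\,\theta_{\max}$ --- you bounded it by $\theta_{\max}^{3/2}$, apparently dropping the $(\theta_0+\theta_i)$ in the denominator under the root. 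The single remaining $\sqrt{\theta_{\max}}$ then comes from $\sqrt{\theta_0+\Theta_{S^*}}\leq\sqrt{(m+1)\theta_{\max}}$, whose $\sqrt{m+1}$ is absorbed by the $1/m$ normalization together with $\sum_i\tau_{i,T}=mT$. Keeping the product $(\theta_0+\theta_i)\theta_0\theta_i(\theta_0+\Theta_{S^*})\leq 2(m+1)\theta_{\max}^3$ intact, exactly as the paper does, gives $\theta_{\max}^{3/2}\sqrt{(m+1)x/\tau_{i,t}}$ per term and closes the proof; no genuinely ``sharper accounting'' of averages over $S^*$ is needed.

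One smaller point: for the under-sampled rounds you bound the cost by $\theta_{i,t}^{\ucb}\lesssim\theta_{\max}$, but $\theta_{i,t}^{\ucb}=p^{\ucb}_{i0,t}/(1-p^{\ucb}_{i0,t})_+$ can be infinite when $n_{i0,t}$ is small, even on the good event. The paper avoids this by capping the \emph{true} per-round regret, writing $\Theta_{S^*}-\Theta_{S_t}\leq\min\{\Theta_{S^*},\,\Theta_{S_t}^{\ucb}-\Theta_{S_t}\}$ and charging each exploration round at most $\Theta_{S^*}/m\leq\theta_{\max}$; you should do the same rather than bounding the UCB itself.
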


Noting $\theta_{\max}$ is a problem-dependent constant, the above rate of $\tilde O(KT)$ is optimal (up to log-factors), as a lower bound can be straightforwardly derived from standard multi-armed bandits~\cite{Auer00,Auer+02}.
We only state here a sketch of the proof of \cref{thm:topm_wf}. The detailed proof is deferred to the appendix.

\begin{proof}[Proof Sketch of \cref{thm:topm_wf}]
Let us define for any $S \subseteq [K]$,
\[
\Theta_S = \sum_{i \in S}\theta_i, ~~\text{ and }~~ \Theta_S^{\ucb}:=  \sum_{i \in S}\theta_i^{\ucb}.
\]
Let $\cE$ be the high-probability event such that both Lemma~\ref{lem:tconc} and~\ref{lem:nibound} holds true. Then, $\P(\cE) \geq 1-4TKe^{-x}$. Let us first assume that $\cE$ holds true. Then, by Lemma~\ref{lem:tconc}, $\Theta_{S^*} \leq \Theta_{S^*}^{\ucb} \leq \Theta_{S_t}^{\ucb}$, which yields
\begin{align*}
\regt_T & = \frac{1}{m} \sum_{t=1}^T \Theta_{S^*} - \Theta_{S_t}  \\
    & \leq  \frac{1}{m} \sum_{t=1}^T  \Theta_{S_t}^{\ucb} - \Theta_{S_t}  \\
    & \leq O(\log T) + \frac{1}{m} \sum_{t=1}^T \sum_{i \in S_t} (\theta_{i,t}^{\ucb} - \theta_{i}) \indic\big\{ \tau_{i,t} \geq \tau_0 \big\}  
\end{align*}
where $\tau_0 = 138 x (m+1)^2 \theta_{\max}^2$ corresponds to an exploration phase needed for the confidence upper bounds of Lem~\ref{lem:tconc} and~\ref{lem:nibound} to be satisfied and $O(\log T)$ is the cost of that exploration. 

Then, noting that if $\cE$ holds true, we can show by Lemma~\ref{lem:nibound}, that 
\[
    \indic\{ \tau_{i,t} \geq \tau_0 \} \leq \indic\{ n_{i0,t} \geq 69 x (\theta_0 + \theta_i) \}.
\]
Therefore, we can apply Lemma~\ref{lem:tconc} that entails,
\begin{align*}
\frac{1}{m} \sum_{t=1}^T &  \sum_{i \in S_t} (\theta_{i,t}^{\ucb} - \theta_{i}) \indic\big\{ \tau_{i,t} \geq \bar n_{i0} \big\} \\
     & \lesssim    \frac{1}{m} \sum_{t=1}^T \sum_{i \in S_t}\Big( (\theta_0+\theta_i)\sqrt{\frac{ \theta_0 \theta_i x}{n_{i0,t}}} \indic\big\{  \tau_{i,t} \geq  \tau_0  \big\}  \\
    & \stackrel{\text{Lem.~\ref{lem:nibound}}}{\lesssim}   \frac{1}{m} \sum_{t=1}^T \sum_{i \in S_t}  \theta_{\max}^{3/2} \sqrt{\frac{m x}{\tau_{i,t}}}  \\
    & \lesssim  \frac{1}{m} \sum_{i=1}^K   \theta_{\max}^{3/2} \sqrt{m x \tau_{i,t}} \,,\\
    & \lesssim \theta_{\max}^{3/2} \sqrt{ x KT } \,.
\end{align*}
where we used $\sum_{i=1}^n 1/\sqrt{i} \leq 2 \sqrt{n}$ and $\sum_{i} \tau_{i,t} = m T$ together with Jensen's inequality in the last inequality. We thus have under the event $\cE$ that
\begin{align*}
\regt_T & \leq  O(\theta_{\max}^{3/2} \sqrt{ x KT })
\end{align*}
The proof is concluded by taking the expectation and by setting $x = 2 \log T$ to control $\P(\cE^c)$. 
\end{proof}

\subsection{Analysis: \utilm\ Objective}
\label{sec:alg_wiutil_analys}

We turn now to the analysis of the \utilm{} objective that aims to optimize $\cR(S, \theta)$ as defined in~\eqref{eq:utilobjective}.
We start by stating a lemma from \cite{assort-mnl} that shows that the expected utility $\cR(S^*, \theta)$ that corresponds to the optimal assortment $S^* = \argmax_{S \subset [K], |S| \leq m} \cR(S,\theta)$ is non-decreasing in the parameters $\theta$. 

\begin{lemma}[Lemma A.3 of \cite{assort-mnl}]
\label{lem:wtd_util}
% Let $S^* = \argmax_{S \subset [K], |S| \leq m} \cR(S,\theta)$.
If $\theta^{\ucb} \in \R^K$ is such that $\theta_i^{\ucb} \geq \theta_i$ for all $i \in [K]$, we have $\cR(S^*, \theta) \leq \cR(S^*, \theta^{\ucb})$.
\end{lemma}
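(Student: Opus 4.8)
The plan is to leverage the optimality of $S^*$: although $\cR(S,\cdot)$ is \emph{not} monotone in $\btheta$ for an arbitrary fixed $S$, it becomes non-decreasing along the direction $\theta^{\ucb}-\btheta$ once $S=S^*$, because optimality forces every item kept in $S^*$ to carry a price at least as large as the optimal expected revenue. Write $\lambda:=\cR(S^*,\btheta)\ (\ge 0)$ and recall that $\theta_0$ is a fixed positive constant. Without loss of generality I would first prune from $S^*$ every item with $\theta_i=0$, which changes neither $\cR(S^*,\btheta)$ nor the optimality of $S^*$, so that we may assume $\theta_i>0$ on $S^*$.

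\textbf{Step 1 (deletion optimality).} I would first show $r_i\ge\lambda$ for every $i\in S^*$. Since $|S^*\setminus\{i\}|\le m$, this set is feasible, so optimality of $S^*$ gives $\cR(S^*\setminus\{i\},\btheta)\le\lambda$. Using $\sum_{j\in S^*}r_j\theta_j=\lambda\,(\theta_0+\Theta_{S^*})$ with $\Theta_{S^*}:=\sum_{j\in S^*}\theta_j$, one gets $\cR(S^*\setminus\{i\},\btheta)=\bigl(\lambda(\theta_0+\Theta_{S^*})-r_i\theta_i\bigr)/(\theta_0+\Theta_{S^*}-\theta_i)$, and clearing the positive denominator reduces $\cR(S^*\setminus\{i\},\btheta)\le\lambda$ to $(r_i-\lambda)\theta_i\ge 0$, hence $r_i\ge\lambda$ since $\theta_i>0$.

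\textbf{Step 2 (linearise the comparison).} Next, for any nonnegative $\theta'$ and any $S$, multiplying through by the positive denominator shows that $\cR(S,\theta')\ge\lambda$ is equivalent to $\sum_{i\in S}(r_i-\lambda)\theta'_i\ge\lambda\theta_0$. For $S=S^*$, $\theta'=\btheta$ this holds with \emph{equality} by the definition of $\lambda$; for $S=S^*$, $\theta'=\theta^{\ucb}$, using $r_i-\lambda\ge 0$ (Step 1) together with $\theta^{\ucb}_i\ge\theta_i\ge 0$ we obtain
\[
\sum_{i\in S^*}(r_i-\lambda)\,\theta^{\ucb}_i\ \ge\ \sum_{i\in S^*}(r_i-\lambda)\,\theta_i\ =\ \lambda\theta_0\,,
\]
which is exactly $\cR(S^*,\theta^{\ucb})\ge\lambda=\cR(S^*,\btheta)$, as claimed.

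The only delicate point is Step 1: the argument must use \emph{removal} of an item, which is always feasible under the cardinality constraint $|S|\le m$ (an insertion or exchange argument would not go through), and null-weight items of $S^*$ must be pruned beforehand; the rest is elementary algebra. One could alternatively run a continuity argument along $s\mapsto\cR(S^*,\btheta+s(\theta^{\ucb}-\btheta))$, whose derivative has the sign of $\sum_{i\in S^*}(\theta^{\ucb}_i-\theta_i)\bigl(r_i-\cR(S^*,\cdot)\bigr)$ and stays nonnegative on $\{\cR(S^*,\cdot)\le\lambda\}$ by Step 1, but the linear reformulation above is cleaner.
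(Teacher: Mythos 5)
The paper does not actually prove this lemma itself: it imports it verbatim as Lemma A.3 of \cite{assort-mnl}, so there is no in-paper argument to compare against line by line. Your proof is a correct, self-contained reconstruction and follows the same route as the cited reference. The only genuinely non-obvious point is the one you isolate: $\cR(S,\cdot)$ is \emph{not} coordinatewise monotone for an arbitrary $S$ (adding mass to a low-price item dilutes the denominator), but it becomes monotone at $S=S^*$ because deletion-optimality forces $r_i\ge\lambda:=\cR(S^*,\btheta)$ for every $i\in S^*$ with $\theta_i>0$. Your Step 1 algebra (clearing the denominator of $\cR(S^*\setminus\{i\},\btheta)\le\lambda$ down to $(r_i-\lambda)\theta_i\ge 0$) and your Step 2 linearization ($\cR(S^*,\theta')\ge\lambda$ iff $\sum_{i\in S^*}(r_i-\lambda)\theta'_i\ge\lambda\theta_0$, with equality at $\theta'=\btheta$) both check out, and correctly using only removal of an item keeps the argument valid under the cardinality constraint $|S|\le m$.

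One small caveat on the ``WLOG prune $\theta_i=0$'' step: pruning changes the \emph{set} $S^*$, and $\cR(S^*,\theta^{\ucb})$ is not invariant under reinserting a zero-weight item whose price is below $\lambda$ but whose UCB weight is positive (e.g.\ $m=2$, $\theta_0=\theta_1=1$, $r_1=1$, $\theta_2=0$, $r_2=0$: the tie-broken maximizer $S^*=\{1,2\}$ has $\cR(S^*,\btheta)=1/2$ but $\cR(S^*,\theta^{\ucb})=1/12$ for $\theta^{\ucb}=(1,10)$). So as written your argument establishes the claim for the pruned maximizer rather than for an arbitrarily tie-broken $S^*$. This is harmless in context: the paper's setup takes all $\theta_i>0$, so there is nothing to prune, and in the regret proofs the lemma is only ever invoked through $\cR(S_t,\theta_t^{\ucb})\ge\cR(S^*,\theta_t^{\ucb})\ge\cR(S^*,\btheta)$, for which the pruned set serves as a perfectly good witness of $\max_S\cR(S,\theta_t^{\ucb})\ge\cR(S^*,\btheta)$. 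It would still be cleaner to state the pruning as replacing $S^*$ by its positive-weight support and noting that this suffices for all downstream uses.
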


\begin{restatable}[\algwf: Regret Analysis for \utilm\, Objective]{theorem}{utilmwf}
\label{thm:utilm_wf}
Let $\theta_{\max} \geq 1$. For any instance of the PL model on K items with parameter $\theta \in [0,\theta_{\max}]^K$ and weights $\r \in [0,1]^K$, the weighted regret of \algwf\ (Alg.~\ref{alg:wf}) with $x = 2 \log T$ is bounded as 
\[
    \regu_T = O(\sqrt{\theta_{\max} KT} \log T)
\]
as $T \to \infty$. 
\end{restatable}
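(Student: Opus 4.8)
The plan is to mirror the structure of the proof of Theorem~\ref{thm:topm_wf}, but replace the linear utility $\Theta_S$ by the nonlinear revenue functional $\cR(S,\btheta)$ of~\eqref{eq:utilobjective}. First I would work on the same high-probability event $\cE$ on which Lemmas~\ref{lem:tconc} and~\ref{lem:nibound} both hold, so that $\P(\cE)\geq 1-4TKe^{-x}$ and, in particular, $\theta_{i,t}^{\ucb}\geq\theta_i$ for all $i\in[K]$ and $t\in[T]$. Using the optimism Lemma~\ref{lem:wtd_util} together with the fact that $S_t$ is the maximizer of $\cR(\cdot,\theta_t^{\ucb})$, I get the telescoped bound
\[
\cR(S^*,\btheta)-\cR(S_t,\btheta)\;\leq\;\cR(S^*,\theta_t^{\ucb})-\cR(S_t,\btheta)\;\leq\;\cR(S_t,\theta_t^{\ucb})-\cR(S_t,\btheta),
\]
so the regret is controlled by $\sum_{t=1}^T\big(\cR(S_t,\theta_t^{\ucb})-\cR(S_t,\btheta)\big)$, i.e. by how much the revenue of the \emph{played} set is inflated by the optimistic parameters.

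The next step is a sensitivity (Lipschitz) bound on $S\mapsto\cR(S,\cdot)$ in its parameter argument. Writing $\theta^{\ucb}=\theta+\Delta$ with $\Delta_i=\theta_{i,t}^{\ucb}-\theta_i\geq 0$, I would show that for a fixed set $S$ with $|S|\leq m$,
\[
\cR(S,\theta^{\ucb})-\cR(S,\theta)\;\lesssim\;\sum_{i\in S}\frac{r_i\,\Delta_i}{\theta_0+\Theta_S}\;\leq\;\sum_{i\in S}\frac{\Delta_i}{\theta_0+\Theta_S},
\]
using $r_i\leq 1$ and the monotonicity/concavity structure of $\cR$ (differentiate $\cR$ in $\theta_i$: the positive term is $r_i/(\theta_0+\sum_j\theta_j)$ and the negative cross-terms only help). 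This is the analogue of the ``$\sum_{i\in S_t}(\theta_{i,t}^{\ucb}-\theta_i)$'' term in the \topm{} proof, but now with the crucial extra denominator $\theta_0+\Theta_{S_t}\geq 1$. After this reduction I would split off the exploration phase exactly as before — discard the $O(\log T)$ rounds where $\tau_{i,t}<\tau_0$ for $\tau_0=\Theta(x\,m^2\theta_{\max}^2)$, so that on the remaining rounds Lemma~\ref{lem:nibound} gives $n_{i0,t}\geq\tfrac{(\theta_0+\theta_i)\tau_{i,t}}{2(\theta_0+\Theta_{S^*})}$ and Lemma~\ref{lem:tconc}'s first alternative is excluded, hence $\Delta_i\lesssim(\theta_0+\theta_i)\sqrt{\theta_0\theta_i x/n_{i0,t}}+x(\theta_0+\theta_i)^2/n_{i0,t}$.

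Plugging the confidence bound for $\Delta_i$ into the Lipschitz estimate, the leading term of the per-round regret is
\[
\sum_{i\in S_t}\frac{(\theta_0+\theta_i)}{\theta_0+\Theta_{S_t}}\sqrt{\frac{\theta_0\theta_i x}{n_{i0,t}}}
\;\stackrel{\text{Lem.~\ref{lem:nibound}}}{\lesssim}\;
\sum_{i\in S_t}\frac{(\theta_0+\theta_i)}{\theta_0+\Theta_{S_t}}\sqrt{\frac{\theta_i x(\theta_0+\Theta_{S^*})}{(\theta_0+\theta_i)\tau_{i,t}}}
\;\lesssim\;\sqrt{\theta_{\max}}\sum_{i\in S_t}\frac{1}{\theta_0+\Theta_{S_t}}\sqrt{\frac{\theta_i\,x\,(\theta_0+\Theta_{S^*})}{\tau_{i,t}}},
\]
and here is where the improvement over the $\theta_{\max}^{3/2}$ rate comes from: the factor $\sqrt{\theta_i}/(\theta_0+\Theta_{S_t})$ is bounded using $\theta_i\leq\theta_0+\Theta_{S_t}$ and then Cauchy--Schwarz over $i\in S_t$, while one extra $\sqrt{\theta_0+\Theta_{S^*}}/(\theta_0+\Theta_{S_t})\leq\sqrt{\theta_0+\Theta_{S^*}}$ is offset because $\cR$ itself is $O(1)$-bounded — so only one net power of $\theta_{\max}$ survives instead of three. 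I then sum over $t$ using $\sum_t 1/\sqrt{\tau_{i,t}}\lesssim\sqrt{T}$ (the counter $\tau_{i,t}$ increases along the rounds where $i\in S_t$), apply $\sum_i 1/\sqrt{i}\leq 2\sqrt{n}$ and Jensen/Cauchy--Schwarz across the $K$ items exactly as in the \topm{} proof, and collect $\sum_i\tau_{i,T}\leq mT$ to land at $O(\sqrt{\theta_{\max}KT\,x})$. The residual $x(\theta_0+\theta_i)^2/n_{i0,t}$ terms and the NC-pair bookkeeping contribute only lower-order $O(\log T)$ mass. Setting $x=2\log T$ controls $\P(\cE^c)\cdot O(T)=o(1)$ and gives the claimed $\regu_T=O(\sqrt{\theta_{\max}KT}\log T)$.

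The main obstacle I anticipate is the bookkeeping around the nonlinear denominator: getting the sensitivity bound for $\cR$ tight enough that the factor $1/(\theta_0+\Theta_{S_t})$ genuinely cancels one power of $\theta_{\max}$ (rather than being crudely lower-bounded by $1$, which would only recover the $\theta_{\max}^{3/2}$ rate), and doing so uniformly over sets $S_t$ whose composition changes with $t$. The clean way is probably to prove the sensitivity lemma in the sharper form $\cR(S,\theta^{\ucb})-\cR(S,\theta)\leq\sum_{i\in S}\frac{\Delta_i}{\theta_0+\Theta_S}\cdot\frac{\theta_0+\Theta_S}{\theta_0+\Theta_{S}+\Delta\text{-correction}}$ and then argue the correction is benign on $\cE$; alternatively one can bound $\cR(S,\theta^{\ucb})-\cR(S,\theta)\le \min\{1,\ \sum_{i\in S}\Delta_i/(\theta_0+\Theta_S)\}$ and interpolate. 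Either way, carefully tracking which power of $(\theta_0+\Theta_{S^*})$ vs $(\theta_0+\Theta_{S_t})$ appears where is the delicate part; everything else is a direct transcription of the \topm{} argument.
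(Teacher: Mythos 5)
Your skeleton (the event $\cE$, optimism via Lemma~\ref{lem:wtd_util}, the sensitivity bound $\cR(S_t,\theta_t^{\ucb})-\cR(S_t,\theta)\leq\sum_{i\in S_t}r_i(\theta_{i,t}^{\ucb}-\theta_i)/(\theta_0+\Theta_{S_t})$, and the $O(\log T)$ exploration split) matches the paper's. The gap is in the summation step that follows. You propose a direct transcription of the \topm{} argument: plug in the width $\theta_{i,t}^{\ucb}-\theta_i\lesssim(\theta_0+\theta_i)\sqrt{\theta_0\theta_i x/n_{i0,t}}$, convert $n_{i0,t}$ to $\tau_{i,t}$ via Lemma~\ref{lem:nibound}, and sum $1/\sqrt{\tau_{i,t}}$. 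But that conversion injects a multiplicative $\sqrt{(\theta_0+\Theta_{S^*})/(\theta_0+\theta_i)}=O(\sqrt{m\theta_{\max}})$ into every term, and tracking the powers through your own display yields a bound of order $\theta_{\max}\sqrt{m}\cdot\sqrt{KT}\log T$ --- a full power of $\theta_{\max}$ and an extra $\sqrt{m}$ --- not the claimed $O(\sqrt{\theta_{\max}KT}\log T)$. Your proposed rescue, that ``one extra $\sqrt{\theta_0+\Theta_{S^*}}$ is offset because $\cR$ is $O(1)$-bounded,'' does not work: truncating each round's regret at $1$ only matters in early rounds and cannot cancel a multiplicative $\sqrt{m\theta_{\max}}$ sitting in front of the tail sum $\sum_t\sqrt{x/\tau_{i,t}}$. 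Sharpening the sensitivity lemma does not help either, since the bound $\sum_{i}r_i\Delta_i/(\theta_0+\Theta_{S_t})$ you already have is exactly the one the paper uses; the missing power of $\theta_{\max}$ is not lost there.

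The paper's proof diverges precisely at this point. It applies Cauchy--Schwarz jointly over $(i,t)$ with the weight $(\theta_0/m+\theta_i)/(\theta_0+\Theta_{S_t})$: one factor telescopes to $\sqrt{KT}$ because $\sum_{i\in S_t}(\theta_0/m+\theta_i)\leq\theta_0+\Theta_{S_t}$, while the other factor $A_T(i)$ collects the squared, variance-normalized confidence widths. The key to bounding $A_T(i)$ is the self-normalizing identity
\[
\E\Big[\tfrac{(\theta_0+\theta_i)\,\indic\{i\in S_t\}}{(\theta_0+\Theta_{S_t})\,n_{i0,t}}\Big]
=\E\Big[\tfrac{\indic\{i_t\in\{i,0\},\ i\in S_t\}}{n_{i0,t}}\Big],
\]
which holds because $(\theta_0+\theta_i)/(\theta_0+\Theta_{S_t})$ is exactly the conditional probability that the counter $n_{i0,t}$ increments; the sum over $t$ then becomes a harmonic series, giving $A_T(i)\lesssim\theta_{\max}\,x\log T$ with a single power of $\theta_{\max}$ and \emph{without} invoking Lemma~\ref{lem:nibound} in the main term. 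This harmonic-sum mechanism is what produces the $\sqrt{\theta_{\max}}$ dependence in the theorem, and it is the idea absent from your proposal.
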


The rate $\Omega(KT)$ is optimal as proved by the lower bound established by~\cite{chen2017note} for MNL bandit problems.

\begin{rem}
Our result recovers (up to a factor $\sqrt{\log T}$) the one of \cite{assort-mnl} when $\theta_{\max} = 1$. However, their algorithm relies on more sophisticated estimators that necessitate epochs repeating the same assortment until the No-Choice item is selected. Note for our problem setting, where it is possible to have $\theta_{\max} \gg \theta_0 = 1$, the length of these epochs could be of $O(K \theta_{\max})$, which could be potentially very large when $\theta_{\max} \gg 1$. This consequently reduces the number of effective epochs, leading to poor estimation of the PL parameters. We see this tradeoff in our experiments (\cref{sec:expts}) where the MNL-UCB algorithm of \cite{assort-mnl} yields linear $O(T)$ regret for such choice of the problem parameters. %which results in reduced efficiency when $\theta_{\max}$ exceeds 1.
\end{rem}

We provide below the sketch of the proof of the theorem. The complete proof is postponed to the appendix.

\begin{proof}[Proof sketch of Thm.~\ref{thm:utilm_wf}]
Let $\cE$ be the high-probability event such that both Lemma~\ref{lem:tconc} and~\ref{lem:nibound} are satisfied.  Then, 
\begin{align}
    & \regu_T  
     = \sum_{t=1}^T \E\big[\cR(S^*,\theta) - \cR(S_t,\theta)\big] \nonumber \\
    & \lesssim \sum_{t=1}^T \E\big[(\cR(S^*,\theta) - \cR(S_t,\theta)) \indic\{\cE\}\big] + T \P(\cE^c)  \nonumber \\
    & \lesssim \sum_{t=1}^T \E\big[(\cR(S_t,\theta_t^{\ucb}) - \cR(S_t,\theta)) \indic\{\cE\}\big] + T \P(\cE^c) \label{eq:wtd1}
\end{align}
because $\cR(S_t,\theta_t^{\ucb}) \geq \cR(S^*, \theta_t^{\ucb}) \geq \cR(S^*,\theta)$ under the event $\cE$ by Lemma~\ref{lem:wtd_util}.
We now upper-bound the first term of the right-hand-side
\begin{align}
 \sum_{t = 1}^T&  \E\Big[  \Big(\big( \cR(S_t,\theta_t^{\ucb}) - \cR(S_t,\theta) \big)\Big) \indic\{\cE\} \Big] \nonumber \\
     & = \sum_{t=1}^T  \E\bigg[    \bigg( \sum_{i\in S_t} \frac{r_i \theta_{i,t}^{\ucb} }{\theta_0 + \Theta_{S_t,t}^\ucb} - \frac{r_i \theta_{i} }{\theta_0 + \Theta_{S_t}} \bigg)  \indic\{\cE\} \bigg] \nonumber \\
    & \leq  \sum_{t=1}^T  \E\bigg[    \bigg( \sum_{i\in S_t} \frac{r_i (\theta_{i,t}^{\ucb} - \theta_{i}) }{\theta_0+ \Theta_{S_t}} \bigg)  \indic\{\cE\} \nonumber
\end{align}
Because $\Theta_{S_t,t}^{\ucb} \geq \Theta_{S_t}$ under the event $\cE$ by Lemma~\ref{lem:tconc}. Then, using $r_i \leq 1$, we further upper-bound using an exploration parameter $\tau_0 = O(\log (T))$ so that the upper-confidence-bounds in Lemmas~\ref{lem:tconc} and~\ref{lem:nibound} are satisfied
\begin{align}
& \sum_{t = 1}^T  \E\Big[  \Big(\big( \cR(S_t,\theta_t^{\ucb}) - \cR(S_t,\theta) \big)\Big) \indic\{\cE\} \Big] \nonumber \\
    & \leq   \sum_{i=1}^K \E\Bigg[   \sum_{t=1}^T \bigg(   \frac{ | \theta_{i,t}^{\ucb}- \theta_{i}| }{\theta_0 + \Theta_{S_t}}  \bigg) \indic\{i \in S_t, \cE\}   \Bigg] \nonumber \\
    & \lesssim  O(\tau_0) \nonumber \\
    & \qquad + \sum_{i=1}^K \E\Bigg[   \sum_{t=1}^T   \frac{ | \theta_{i,t}^{\ucb}- \theta_{i}| }{\theta_0 + \Theta_{S_t}}  \indic\{i \in S_t, \tau_{i,t} \geq \tau_0 ,\cE\}   \Bigg] \nonumber \\
    & \lesssim O(\tau_0) +  \sum_{i=1}^K \sqrt{ \sum_{t=1}^T \E \Bigg[ \frac{ \theta_{i} \indic\{i \in S_t\}}{\theta_0 + \Theta_{S_t}} \Bigg] } \times   \nonumber \\
    &  \sqrt{\smash{\underbrace{\sum_{t=1}^T \E \Bigg[ \bigg( \frac{\theta_{i,t}^{\ucb}- \theta_{i} }{\theta_0 + \Theta_{S_t}}\bigg)^2\frac{\theta_0 + \Theta_{S_t}}{\theta_i} \indic\{i \in S_t, \tau_{i,t} \geq \tau_0, \cE\} \Bigg] }_{=: A_T(i)} } \mystrut(20,10) } \mystrut(20,24) \label{eq:longeq1}
\end{align}
where the last inequality is by Cauchy-Schwarz inequality. 
Now, the term $A_T(i)$ above may be upper-bounded using Lemmas~\ref{lem:tconc} and~\ref{lem:nibound},
\begin{align*}
  & A_T(i)  = \E \Bigg[  \frac{ (\theta_{i,t}^{\ucb}- \theta_{i})^2 }{ \theta_i(\theta_0 + \Theta_{S_t})  }  \indic\{i \in S_t, \tau_{i,t} \geq \tau_0, \cE\} \Bigg] \\ 
  & \lesssim  \sum_{t=1}^T \E \Bigg[  \frac{ (\theta_0 +\theta_{i})^2 x }{n_{i0,t} (\theta_0 + \Theta_{S_t})}  \indic\{i \in S_t \} \Bigg] \\
    & \lesssim  \theta_{\max} x   \sum_{t=1}^T \E \Bigg[ \frac{ (\theta_0 + \theta_i) \indic\{i \in S_t\}}{(\theta_0 + \Theta_{S_t}) n_{i0,t}}  \Bigg] \\
    & = \theta_{\max} x  \E \Bigg[ \sum_{t=1}^T \frac{  \indic\{i_t \in \{i,0\}, i\in S_t\}}{n_{i0,t}} \Bigg]\\
    & \lesssim    \theta_{\max} x   \log T
\end{align*}
where in the last inequality we used that $\sum_{n=1}^T n^{-1} \leq 1+ \log T$. 
Substituting into~\eqref{eq:longeq1}, Jensen's inequality entails,
\begin{multline}
\sum_{t = 1}^T   \E\Big[  \big( \cR(S_t,\theta_t^{\ucb}) - \cR(S_t,\theta) \big) \indic\{\cE\} \Big]  \lesssim O(\tau_0) + \E \Bigg[\sqrt{ \theta_{\max}x  \log T} \sum_{i=1}^K \sqrt{ \sum_{t=1}^T \frac{  \theta_{i} \indic\{i \in S_t\}}{\theta_0 + \Theta_{S_t}}  } \Bigg]\,.
\label{eq:wtd2}
\end{multline}
The proof is finally concluded by applying Cauchy-Schwarz inequality which yields:
\begin{align*}
 \sum_{i=1}^K  \sqrt{ \sum_{t=1}^T \frac{  \theta_{i} \indic\{i \in S_t\}}{\theta_0 + \Theta_{S_t}}  } \leq \sqrt{    K  \sum_{t=1}^T \frac{ \sum_{i=1}^K  \theta_{i} \indic\{i \in S_t\}}{\theta_0 + \Theta_{S_t}}  } 
 \leq \sqrt{KT} \,.
\end{align*}
Finally, combining the above result with~\eqref{eq:wtd1} and~\eqref{eq:wtd2} concludes the proof
\[
     \regu_T  \lesssim TP(\cE^c) + O(\tau_0) + \sqrt{\theta_{\max} x KT \log T}  \,.
\]
Choosing $x = 2 \log T$ ensures $TP(\cE^c) \leq O(1)$ and $\tau_0 \leq O(\log T)$. 
\end{proof}

       \section{Generalization: Improved dependance on $\theta_{\max}$ with Adaptive Pivot Selection} %
\label{sec:abi}

A problem with Algorithm~\ref{alg:wf} stems from estimating all $\theta_i$ based on pairwise comparisons with item $0$. When $\theta_{\max} \gg \theta_0 = 1$, item $0$ may not be sampled enough as the winner, leading to poor estimators. This deficiency contributes to the suboptimal dependence on $\theta_{\max}$ observed in Theorems~\ref{thm:topm_wf} and~\ref{thm:utilm_wf}. A similar issue arises in prior work, such as \cite{assort-mnl}, which remedies this assuming $\theta_0 = \theta_{\max} = 1$, which is however a very strong assumption.

We propose the following fix to optimize the pivot. 
For all $i,j \in [K]\cup \{0\}$ we define $\gamma_{ij} = \frac{\theta_i}{\theta_j}$, and the estimators:
\[
    \gamma_{ij,t}^{\ucb} = \frac{p_{ij,t}^{\ucb}}{(1-p_{ij,t}^{\ucb})_+} \quad  \text{ and } \quad \gamma_{ii,t}^{\ucb} = 1\,,
\]
where $p_{ij,t}^{\ucb}$ are defined in~\eqref{eq:pucb_def}.
Note that with this definition we have $\theta_t^{\ucb} = \gamma_{i0,t}^{\ucb}$. 
%The algorithm starts with an initial exploration by pulling each item alone until, $\gamma_{i0,t}^{\ucb} \gamma_{0i,t}^{\ucb} \leq 2 $ for all $i \in [K]$ and at least $69x(1 +\theta_{\max})$, where $\theta_{\max}$ is a known upper-bound on $\max_i \theta_i$. This implies that all $\theta_i$ are well estimated up to a multiplicative constant of $2$: $\theta_i \leq \theta_{i,t}^{\ucb} \leq 2 \theta_i$. 
For all rounds $t$, the algorithm \aalgwf{} computes the upper-confidence-bounds:
\[
    \hat \theta_{i,t}^{\ucb} := \min_{j \in [K] \cup \{0\}} \gamma_{ij,t}^{\ucb} \gamma_{j0,t}^{\ucb} \,.
\]
The algorithm then selects
\[
    S_t = \argmax_{|S| \leq m} \cR(S, \hat{\theta}_t^{\ucb}) \,.
\]
We offer below a regret bound that underscores the value of optimizing the pivot when $\theta_{\max} \gg K$. 
In the experimental section (\cref{sec:expts}), we show that {this} version surpasses the performance of the state-of-the-art algorithm for AOA. \emph{It is important to note that while we present the algorithm and analysis specifically for the weighted objective with winner feedback, it can be readily adapted to other objectives. This adaptation involves replacing $\cR(S, \theta)$ with the new objective in the analysis, as long as \cref{lem:wtd_util} remains valid.}

\begin{theorem}
\label{thm:thetamaxucb}
Let $\theta_{\max} \geq 1$. For any instance of PL model on K items with parameter $\theta \in [0,\theta_{\max}]^K$ and weights $\r \in [0,1]^K$, the procedure described above has a regret upper-bounded as
\[
    \regu_T = O\big(\sqrt{\min\{\theta_{\max},K\} KT} \log T \big)
\]
as $T \to \infty$ for the choice $x = 2\log T$ (when definining $p_{ij,t}^{\ucb}$). 
\end{theorem}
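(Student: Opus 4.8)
\textbf{Proof proposal.} The plan is to mimic the proof of \cref{thm:utilm_wf} step by step, changing only the bound on the per-item variance term, and to exploit that the estimator $\hat\theta_{i,t}^{\ucb} = \min_{j\in[K]\cup\{0\}}\gamma_{ij,t}^{\ucb}\gamma_{j0,t}^{\ucb}$ is simultaneously (i) never larger than the pivot-$0$ estimator $\theta_{i,t}^{\ucb}=\gamma_{i0,t}^{\ucb}$ used in \cref{alg:wf}, and (ii) — by routing through a well-chosen intermediate item — controllable through pairwise counts $n_{ij,t}$ that grow much faster than $n_{i0,t}$ when item $0$ is rarely the winner. First I would work on the event $\cE$ on which \cref{lem:pconc} holds for every pair $(i,j)\in[\tK]\times[\tK]$ and all $t\in[T]$; a union bound gives $\P(\cE^c)\le O(KTe^{-x})$, so $x=2\log T$ makes $T\P(\cE^c)=O(1)$. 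On $\cE$ one has $\gamma_{ij,t}^{\ucb}\ge \theta_i/\theta_j$ for all $i,j$, hence $\hat\theta_{i,t}^{\ucb}\ge\min_j(\theta_i/\theta_j)(\theta_j/\theta_0)=\theta_i$: optimism still holds, so \cref{lem:wtd_util} gives $\cR(S_t,\hat\theta_t^{\ucb})\ge\cR(S^*,\theta)$ and, exactly as in the derivation of \eqref{eq:wtd1}, $\regu_T\lesssim\sum_t\E[(\cR(S_t,\hat\theta_t^{\ucb})-\cR(S_t,\theta))\indic\{\cE\}]+O(1)$.

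Next, bounding $\cR(S_t,\hat\theta_t^{\ucb})-\cR(S_t,\theta)\le\sum_{i\in S_t}r_i(\hat\theta_{i,t}^{\ucb}-\theta_i)/(\theta_0+\Theta_{S_t})$ (using $\sum_{i\in S_t}\hat\theta_{i,t}^{\ucb}\ge\Theta_{S_t}$ on $\cE$ and $r_i\le1$), peeling off an $O(\log T)$ exploration phase $\{\tau_{i,t}<\tau_0\}$, and applying Cauchy--Schwarz exactly as in the passage leading to \eqref{eq:longeq1}, I arrive at the same skeleton $\sum_i\sqrt{\sum_t\E[\theta_i\indic\{i\in S_t\}/(\theta_0+\Theta_{S_t})]}\,\sqrt{A_T(i)}\le\sqrt{KT}\cdot\max_i\sqrt{A_T(i)}$. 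So the whole theorem reduces to proving $A_T(i)\lesssim\min\{\theta_{\max},K\}\,x\log T$ for every $i$, where $A_T(i)$ is the sum over $t$ of $(\hat\theta_{i,t}^{\ucb}-\theta_i)^2/(\theta_i(\theta_0+\Theta_{S_t}))$ on the post-exploration good event.

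For the $\theta_{\max}$ side of the minimum, I would just use $\hat\theta_{i,t}^{\ucb}\le\gamma_{i0,t}^{\ucb}=\theta_{i,t}^{\ucb}$, so the computation in the proof of \cref{thm:utilm_wf} applies verbatim and gives $A_T(i)\lesssim\theta_{\max}x\log T$. For the $K$ side I would first establish a pivot-agnostic analogue of \cref{lem:tconc}: from \cref{lem:pconc} applied to the pair $(i,j)$, on $\cE$ either $n_{ij,t}$ is below an $O(x)$-scale threshold or $\gamma_{ij,t}^{\ucb}-\theta_i/\theta_j\lesssim\tfrac{\theta_i+\theta_j}{\theta_j}\sqrt{\tfrac{\theta_i x}{\theta_j n_{ij,t}}}+\tfrac{x(\theta_i+\theta_j)^2}{\theta_j^2 n_{ij,t}}$, and multiplying the bounds for $(i,j)$ and $(j,0)$ controls $\gamma_{ij,t}^{\ucb}\gamma_{j0,t}^{\ucb}-\theta_i$. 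Choosing the pivot $j=j(i)$ to be the strongest item, the first factor $\gamma_{ij(i)}\le1$, the cross terms collapse, and the squared error contributes to $A_T(i)$ a term of order $x(\theta_0+\theta_{j(i)})/(n_{ij(i),t}(\theta_0+\Theta_{S_t}))$ plus the already-handled $\theta_{j(i),t}^{\ucb}$-error; the prefactor $(\theta_0+\theta_{j(i)})/(\theta_0+\Theta_{S_t})$ is $O(m)=O(K)$ rather than $O(\theta_{\max})$ once $\Theta_{S_t}$ is controlled through an item of $S_t$. A counting lemma mirroring \cref{lem:nibound} for the pair $(i,j(i))$ — using that $n_{ij(i),t}$ increments exactly when $i_t\in\{i,j(i)\}$ and $i,j(i)\in S_t$, an event of conditional probability $(\theta_i+\theta_{j(i)})/(\theta_0+\Theta_{S_t})$ — then turns $\sum_t(\cdot)/n_{ij(i),t}$ into $\le1+\log T$ by the standard harmonic-sum telescoping, finishing $A_T(i)\lesssim Kx\log T$.

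The step I expect to demand the most care is this $K$-side bound: one cannot let the pivot $j(i)$ depend on $t$ inside a concentration inequality, yet replacing $\theta_{\max}$ by $m$ seems to require bounding $\Theta_{S_t}$ via the strongest item \emph{present in $S_t$}, which is data-dependent. Resolving this likely means either fixing $j(i)$ to the global $\arg\max_i\theta_i$ and separately arguing — by a pigeonhole over the $\tau_{i,T}$ rounds containing $i$ against the $K-1$ possible partners — that the surviving items $i$ are co-selected with it often enough that $n_{ij(i),t}$ is large, or partitioning rounds by which item of $S_t$ is strongest, bounding each block, and invoking that the $\min$ in the definition of $\hat\theta_{i,t}^{\ucb}$ only helps. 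The ancillary pieces — the ratio-concentration lemma and the harmonic-sum count — are routine adaptations of \cref{lem:pconc,lem:tconc,lem:nibound}.
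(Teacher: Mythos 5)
Your proposal follows essentially the same route as the paper: optimism of $\hat\theta_{i,t}^{\ucb}$ plus \cref{lem:wtd_util} to reduce to $\sum_t \E[(\cR(S_t,\hat\theta_t^{\ucb})-\cR(S_t,\theta))\indic\{\cE\}]$, the bound $\hat\theta_{i,t}^{\ucb}\le\gamma_{i0,t}^{\ucb}$ to recover the $\theta_{\max}$ side of the minimum verbatim from \cref{thm:utilm_wf}, and pairwise analogues of \cref{lem:tconc,lem:nibound} (the paper's \cref{lem:gconc,lem:hatthetaucb,lem:niboundij}) for the $K$ side. The one step you flag as uncertain is resolved in the paper by the second of your two suggested fixes, not the first: the pivot is taken time-varying, $j_t=\argmax_{j\in S_t\cup\{0\}}\theta_j$, which is legitimate because the concentration bounds hold simultaneously for \emph{all} pairs $(i,j)$ by union bound (so data-dependent selection costs nothing), because the $\min$ defining $\hat\theta_{i,t}^{\ucb}$ makes $|\hat\theta_{i,t}^{\ucb}-\theta_i|\le|\gamma_{ij_t,t}^{\ucb}\gamma_{j_t0,t}^{\ucb}-\theta_i|$ pointwise in $t$, and because the resulting sums $\sum_t(\cdot)/n_{ij_t,t}$ are split according to which $j$ equals $j_t$, each block contributing a harmonic sum $\le 1+\log T$ — this partition over the $K$ possible pivots is precisely the source of the extra $\sqrt{K}$ in the final bound. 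Fixing $j(i)$ to the global $\argmax_i\theta_i$, as in your first alternative, would not work since that item need not belong to $S_t$.
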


\begin{rem}
It is noteworthy that asymptotically, when $\theta_{\max}$ is constant, the regret is $O(K\sqrt{T}\log T)$, eliminating any dependence on $\theta_{\max}$ in the asymptotic regime. This allows for handling scenarios where the No-Choice item is highly unlikely, which is not achievable in previous works such as \cite{assort-mnl}. The latter heavily relies on the regular selection of the No-Choice item to construct their estimators. 
\end{rem}

The proof is deferred to the appendix, with a key step relying on selecting the pivot  $j_t = \argmax_{j \in S_t \cup \{0\}} \theta_j$.  The use of $|\hat \theta_{i,t}^{\ucb} - \theta_i| \leq |\gamma_{ij_t,t}^{\ucb} - \theta_i|$ provides confidence upper-bounds with an improved dependence on $\theta_{\max}$ , leveraging the fact that $\theta_{j_t} \geq \theta_i$. Due to the varying pivot over time, a telescoping argument introduces an additive factor $\sqrt{K}$. %Whether such a factor is unavoidable remains an open question for future research.

%%%%%%%%%%%%%%%%%%%%%%%%

        \section{Experiments}
\label{sec:expts}
We run extensive simulations to compare our methods with the state-of-the-art MNL-Bandit algorithm \cite{assort-mnl} and provide a comparative empirical evaluation based on different experimental setups and choice of PL parameters $\btheta$. The details of our empirical results are given below: %of our proposed methods against the existing baselines.

\textbf{Algorithms.} 
\iffalse%%%%
We consider the general case where the no-choice item (Item $0$ with PL parameter $\theta_0$) is not necessarily the maximum, i.e., $\theta_0 \neq \max_{i \in [K]} \theta_i$. Thus, as motivated in \cref{alg:abi}, our proposed algorithm \algwf\ performs suboptimally in this general case, and hence we proposed the adaptive version \aalgwf. \textbf{(1).} For experimental analysis we use \aalgwf\, with an efficient \emph{adaptive-pivot}, where we use the estimated borda-winner (at time $t$) as the pivot, as follows:  
\begin{align*}
    & j_t \leftarrow \arg\max_{j \in [K]\cup{0}} \sum_{j \in [K]\cup\{0\}} \hp_{ij}\\
    & \htheta_{i,t}:= \nicefrac{\hp_{ij_t,t}}{1-\hp_{ij_t,t}}, \utheta_{i,t}:= \htheta_{i,t} + 4\sqrt{\frac{2\ln t}{n_{ij,t}}}\\
    & S_{t} \leftarrow 
			\begin{cases}
				\text{Top-$m$ items from argsort}(\{\utheta_{1,t},\dots,\utheta_{K,t}\}),\\
    ~~~~~~~~~~~~~~~~~~~~~~~\text{ for } \topm\ \text{ objective }\\
				\argmax_{S \subseteq [K] \mid |S| = m}\frac{\sum_{i \in S}w_i\utheta_{i,t}}{\sum_{i \in S}w_i\utheta_{i,t} +\theta_0},\\
    ~~~~~~~~~~~~~~~~~\text{ for } \utilm\ \text{ objective } 
\end{cases}
\end{align*}
\fi%%%%%%%%
We evaluate the performance of our main algorithm \textbf{(1). \aalgwf} (\cref{sec:abi}) with adaptive pivot, referred as \textbf{``Our Alg-1 (Adaptive Pivot)"}, with the following two algorithms:
%\textbf{\algwf\, with a general pivot:} This is a generalized version of our algorithm where we use the borda winner as a pivot item to estimate the PL parameters $\theta_i$.  
\textbf{(2). \algwf\,} (\cref{sec:alg_wi}) referred as \textbf{``Our Alg-2 (No-Choice Pivot)"},  
\textbf{(3). MNL-UCB:} The proposed algorithm in \cite{assort-mnl} in Algorithm-$1$.

\textbf{Different PL $(\btheta)$ Environments.}
We report our experiment results on two $\pl$ datasets with $K = 50$ items: (1) Arith$50$ with \pl\ parameters $\theta_i = 1 - (i-1)0.2, ~\forall i \in [50]$.
(2) Bad$50$ with \pl\ parameters $\theta_i = 0.6, ~\forall i \in [50]\sm\{25\}$ and $\theta_{25} = 0.8$. For simplicity of computing the assortment choices $S_t$, we assume $r_i = 1, ~\forall i \in [K]$.

\begin{rem}
Note in our PL environments, we set $\theta_{\max} = 1$ (to avoid scaling of $\regt$\, with ~$\theta_{\max}$). Since the choice probabilities of PL models are scale-independent, our choice of parameters is equivalent to assuming another choice model $(\theta'_1,\theta'_2,\ldots,\theta'_K)$, $\theta'_0 = 1$ and adjusting $\theta'_i = \nicefrac{\theta_i}{\theta_0}, ~\forall i \in [K]$. 
\end{rem}

In all the experiments, we report the averaged performance of the algorithms across $100$ runs.

\textbf{(1). Averaged Regret with weak NC $(\theta_{\max}/\theta_0 \gg 1)$ (Fig. \ref{fig:reg_wi}):} In our first set of experiments we report the average performance of the above three algorithms against \topm-\aa\, Regret ($\regt_T$) and \utilm-\aa\, Regret ($\regu_T$). We set $m=5$ and $\theta_0 = 0.01$ for this set of experiments. 
% and hence a regret guarantee of $O\big( \frac{n\ln T}{m} \big)$ as opposed to the $O\big( {n \ln T}\big)$ regret bound for both RUCB and BD. Fig. \ref{fig:reg} shows the results.
\vspace{-2pt}
\begin{figure}[h]
\vspace{-5pt}
\hspace{-1pt}
\includegraphics[trim={0cm 0cm 0cm 0},clip,scale=0.1,width=0.26\textwidth]{./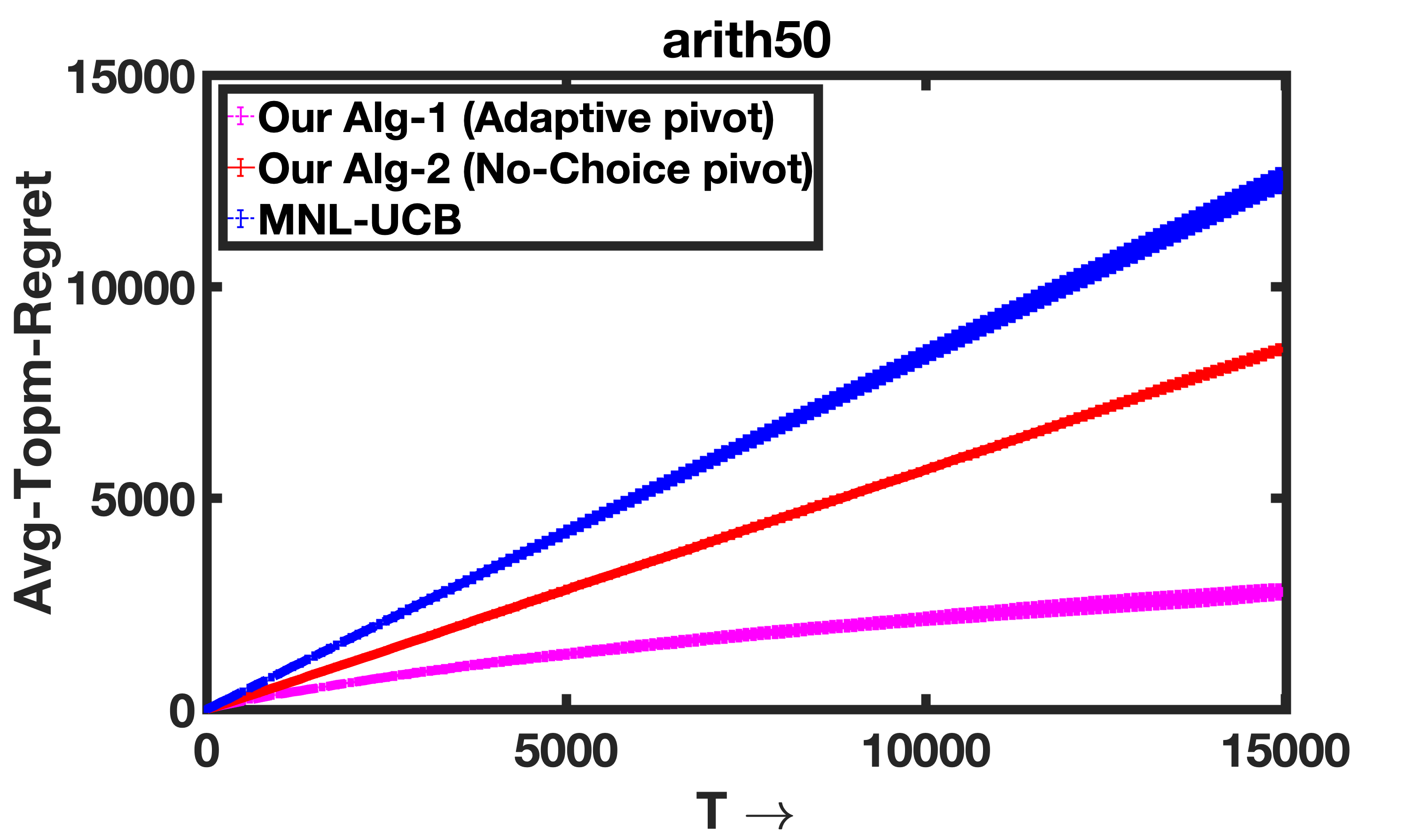}
\hspace{-12pt}
\includegraphics[trim={3.2cm 0cm 0cm 0},clip,scale=0.1,width=0.26\textwidth]{./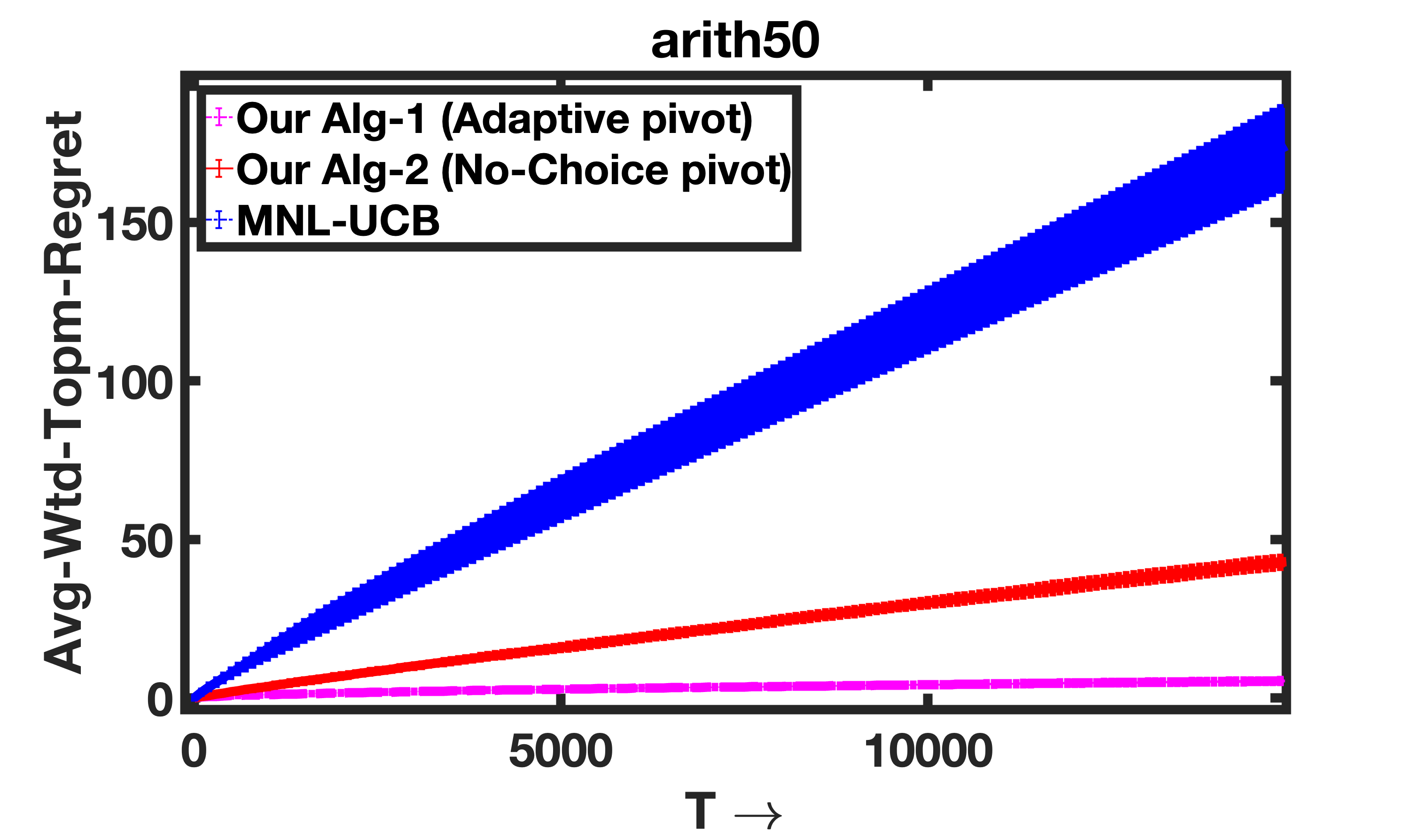}
\hspace{-12pt}
\includegraphics[trim={3.2cm 0cm 0cm 0},clip,scale=0.1,width=0.26\textwidth]{./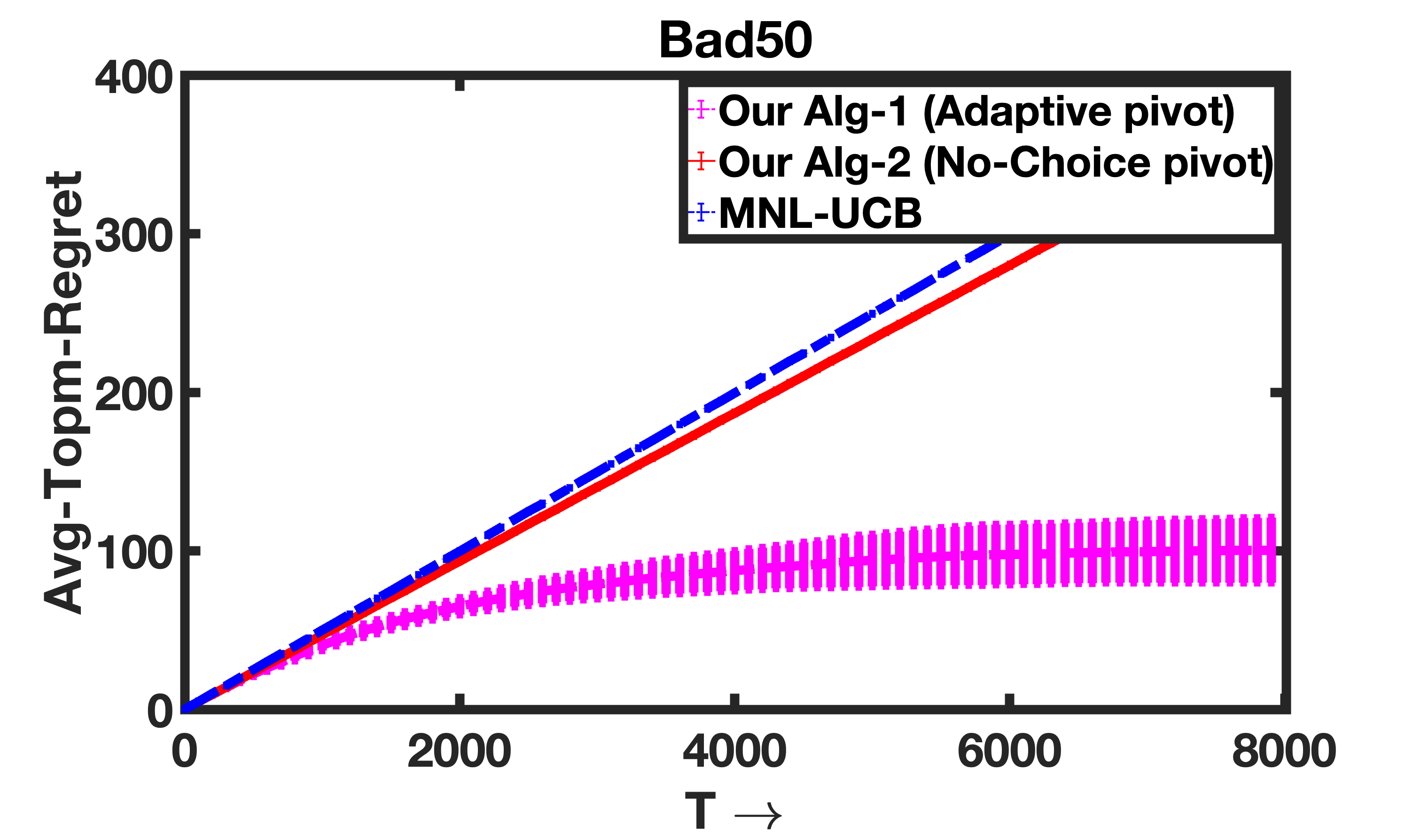}
\hspace{-12pt}
\includegraphics[trim={3.2cm 0cm 0cm 0},clip,scale=0.1,width=0.26\textwidth]{./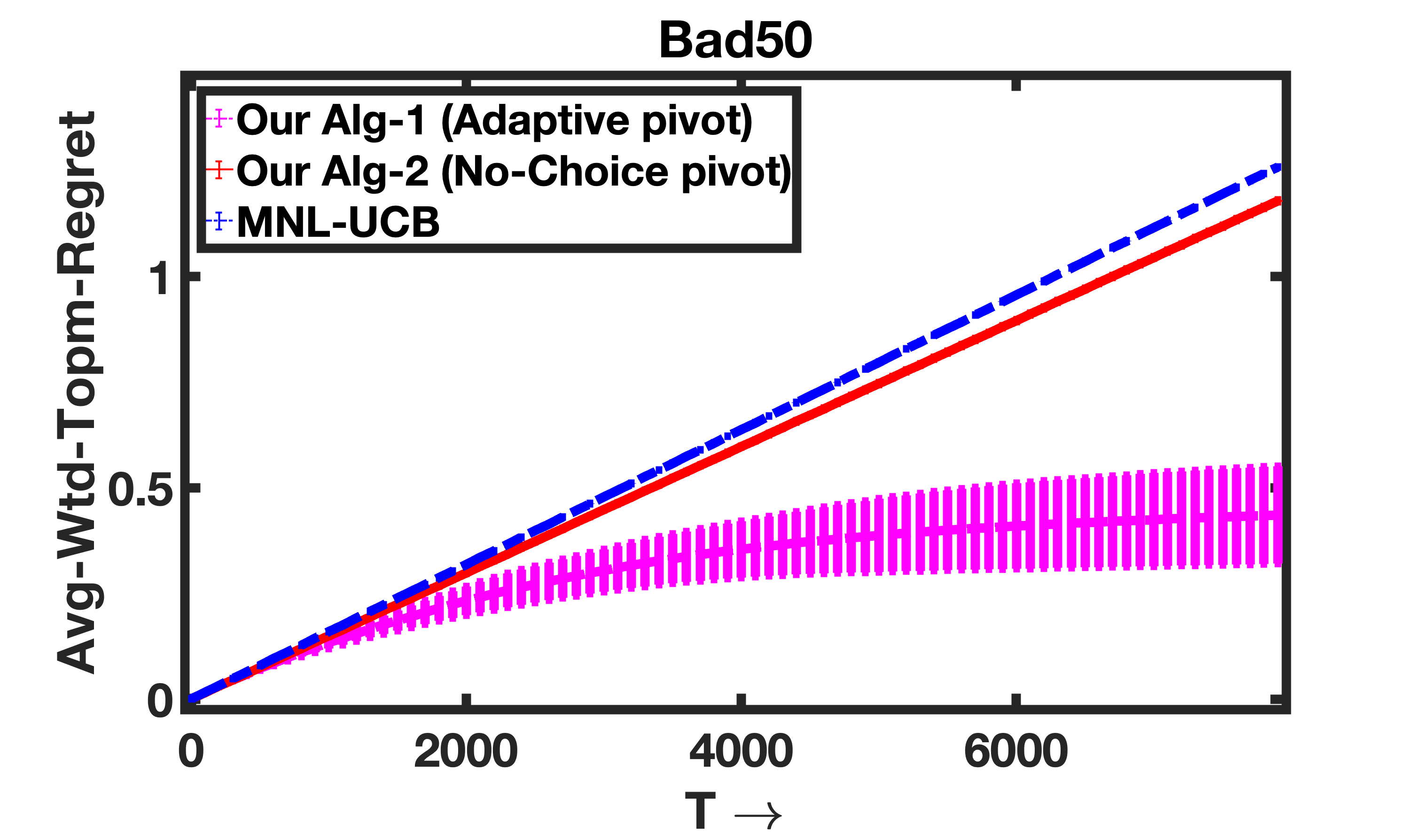}
\vspace{-15pt}
\caption{Averaged Regret for $m = 5$, $\theta_0 = 0.01$}
\label{fig:reg_wi}
\hspace{0pt}
\vspace{-15pt}
\end{figure}
%\vspace{-10pt}

\cref{fig:reg_wi} shows that our algorithm \aalgwf\, (with adaptive pivot) performs best, with a wide performance margin to the rest of the two algorithms, while our algorithm \algwf\, with no-choice (NC) pivot still outperforms MNL-UCB. But overall both of them perform poorly for $\theta_0 = 0.01$, as also expected since $\theta_{\max}$ is large compared to $\theta_0$ for these instances.

\textbf{(2). Averaged Regret vs No-Choice \pl\, Parameter $(\theta_0)$ (Fig. \ref{fig:reg_nc}):}
In this experiment, we evaluate the regret performance of our algorithm \aalgwf. We report the experiment on Artith$50$ \pl\ dataset and set the subsetsize $m = 6$, $\theta_0 = \{1, 0.5,0.1,0.05,0.01,0.005,0.001\}$. \cref{fig:reg_wi} shows, the performance gap between our algorithm \aalgwf\, (with adaptive pivot) increases with decreasing $\theta_0$ -- this is expected since the estimates of MNL-UCB strongly depends on the fact that $\theta_0$ needs to be maximum of $\{\theta_i\}_{i \in [K]}$ without which the algorithm performs increasingly suboptimaly due to poor estimation of the \pl\ parameters. 

\vspace{-2pt}
\begin{figure}[h]
\begin{center}
\vspace{-5pt}
\hspace{0pt}
\includegraphics[trim={0.4cm 0cm 0cm 0},clip,scale=0.1,width=0.27\textwidth]{./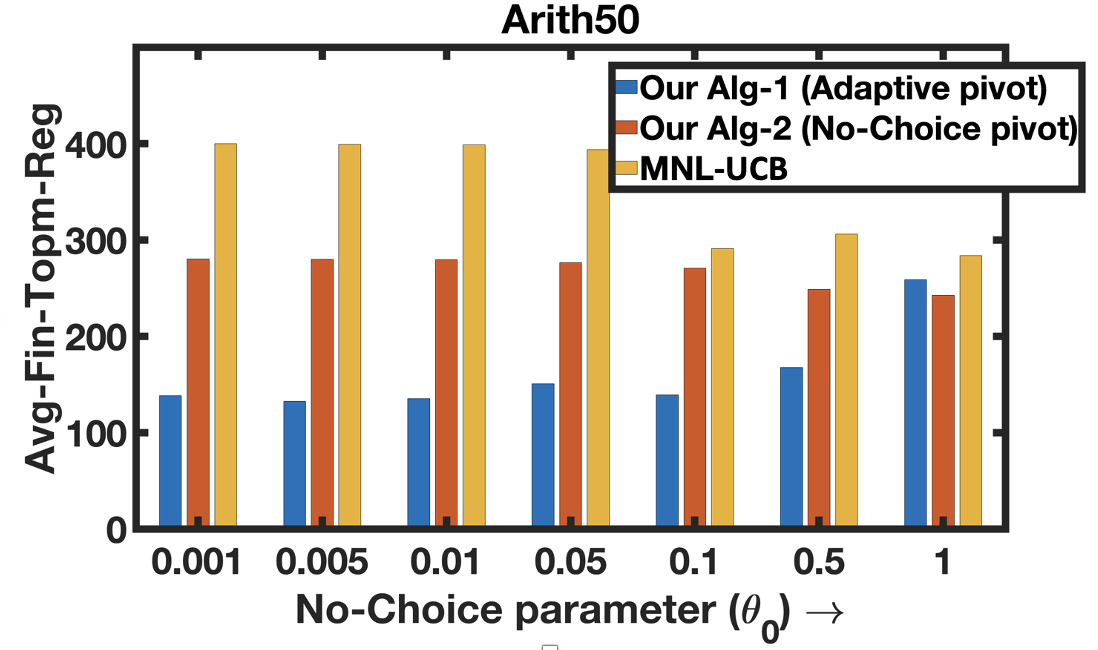}
\includegraphics[trim={3.2cm 0cm 0cm 0},clip,scale=0.1,width=0.27\textwidth]{./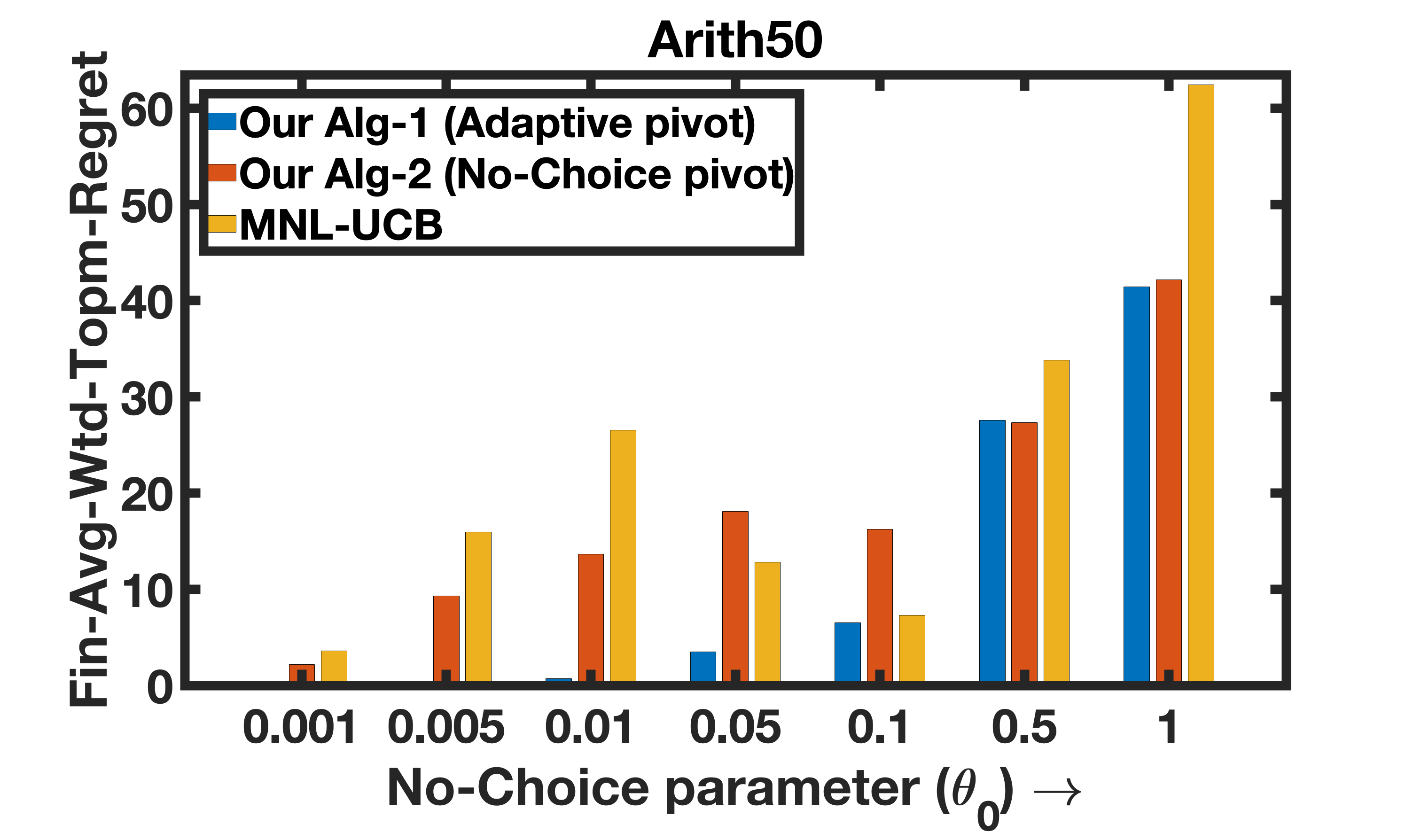}
\end{center}
\vspace{-10pt}
\caption{Comparative performances of the algorithms for varying strength of NC $(\theta_0)$ for $m = 5$}
\label{fig:reg_nc}
\hspace{0pt}
\vspace{-15pt}
\end{figure}
\vspace{-0pt}

%The additional experiments are moved to \cref{app:expts}.

%\iffalse%%%%%%%%%%%%%%%
\textbf{(3). Averaged Regret vs Length of the rank-ordered feedback ($k$) (Fig. \ref{fig:reg_tf}):}
We also run a thought experiment to understand the tradeoff between learning rate with $k$-length rank-ordered feedback, where given any assortment $S_t \subseteq [K]$ of size $m$, the learner gets to see the top-$k$ draws ($k\leq m$) from the PL model without replacement. Clearly, this is a stronger feedback than the winner (i.e. top-$1$ for $k=1$) feedback and one would expect a faster learning rate with increasing $k$ as the learner gets to observe more information from the PL model each time. We report the experiment on Artith$50$ \pl\ dataset and set the subsetsize $m = 30$ with $k = 1, 2, 4, 8$. As expected, we see a multiplicative reduction on both \topm-\aa\, Regret ($\regt_T$) and \utilm-\aa\, Regret ($\regu_T$) with increasing $k$, as reflected in \cref{fig:reg_tf}.

\vspace{-2pt}
\begin{figure}[h]
\begin{center}
\vspace{-5pt}
\hspace{0pt}
\includegraphics[trim={3.2cm 0cm 0cm 0},clip,scale=0.1,width=0.27\textwidth]{./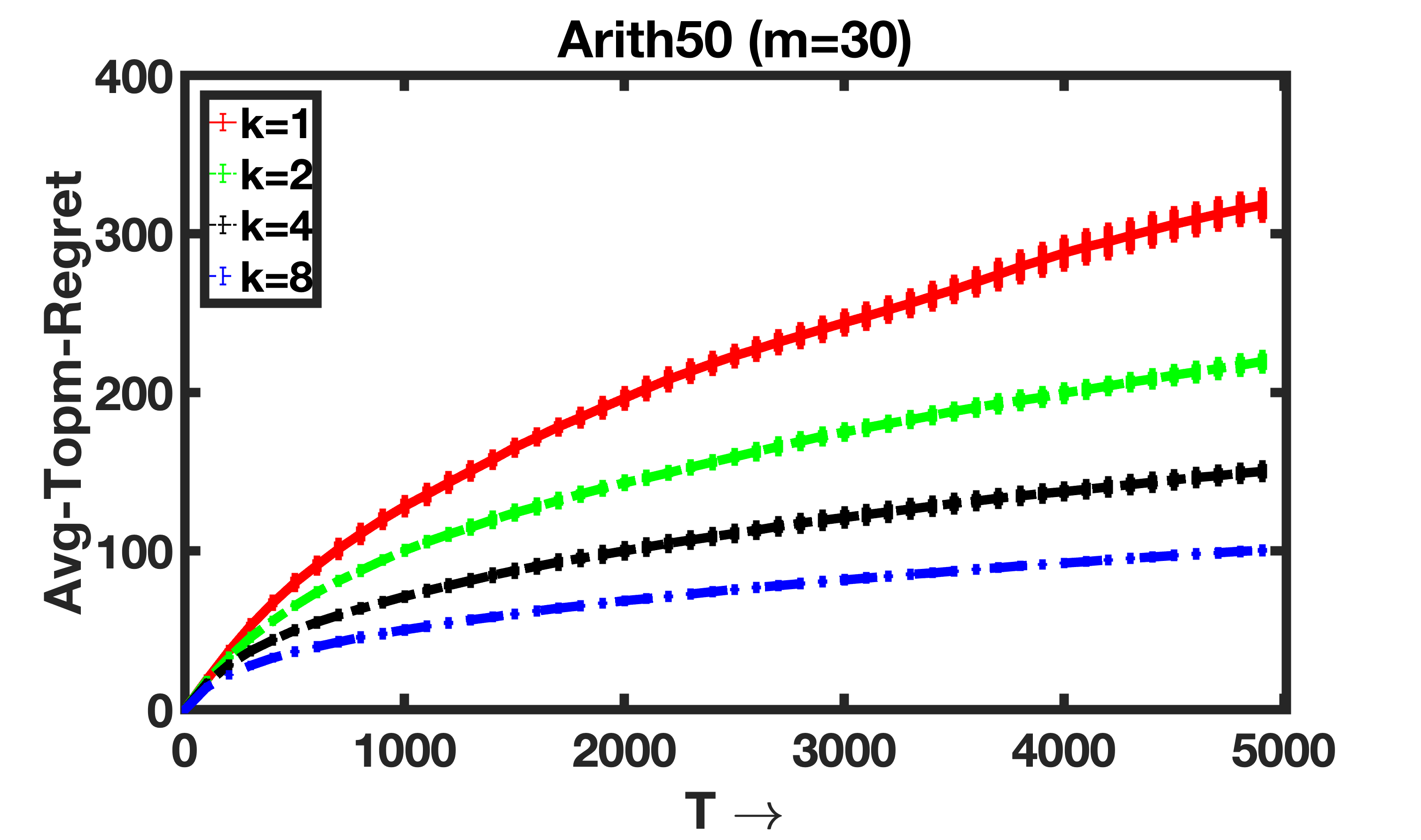}
\includegraphics[trim={3.2cm 0cm 0cm 0},clip,scale=0.1,width=0.27\textwidth]{./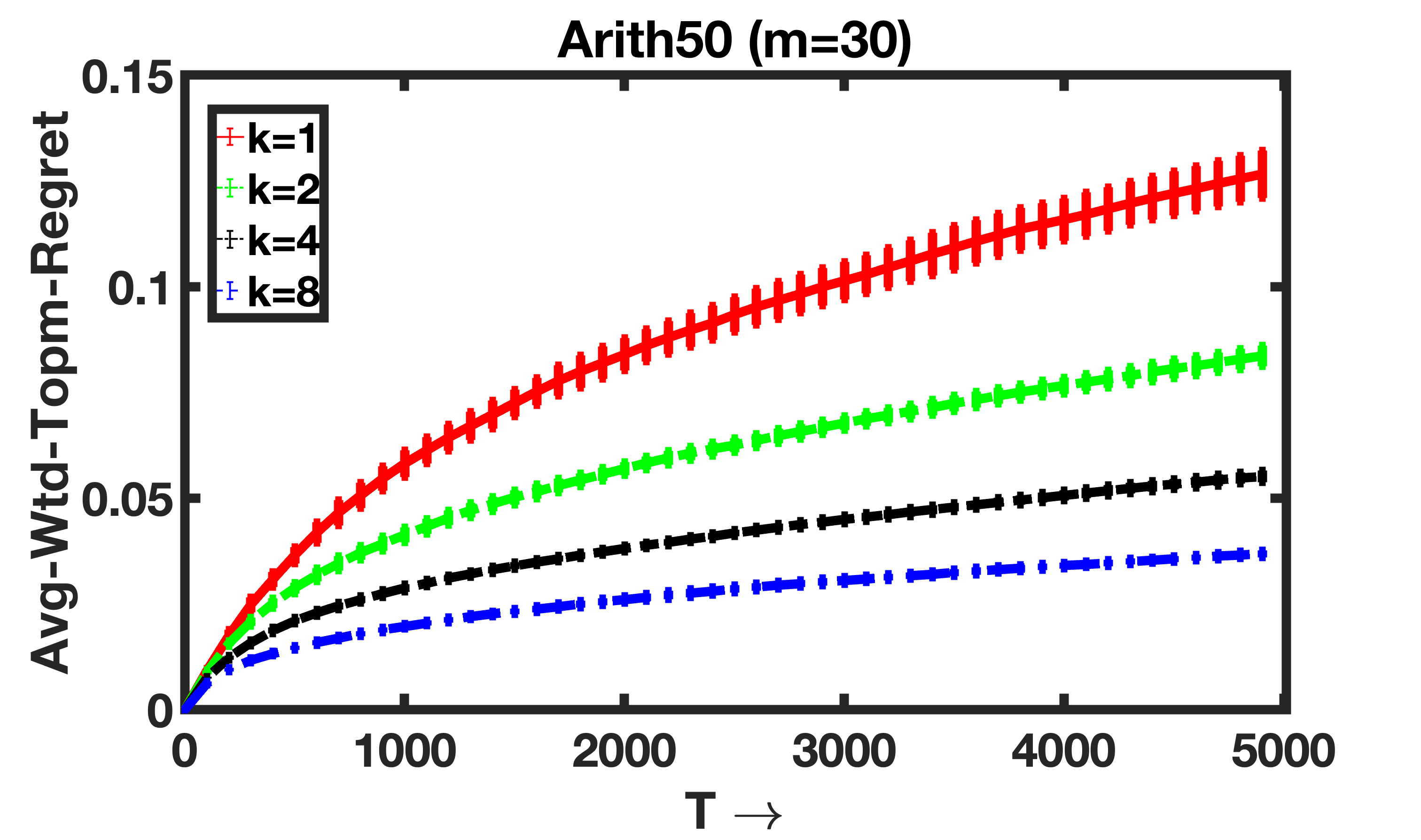}
\end{center}
\vspace{-10pt}
\caption{Tradofff: Averaged Regret vs length of the $k$ rank-ordered feedback for $k = 1,2,4,8$, $m = 30$}
\label{fig:reg_tf}
\hspace{0pt}
\vspace{-15pt}
\end{figure}
%\vspace{-10pt}
%\fi%%%%%%%%%%%%%%%%%

        \vspace{-10pt}
\section{Conclusion}
\label{sec:conclusion}
\vspace{-5pt}
%
% \red{editing}

% We should submit a preliminary version to be sure there is no problem?

We address the Active Optimal Assortment Selection problem with PL choice models, introducing a versatile framework (\textit{\aa}) that eliminates the need for a strong default item, typically assumed as the No-Choice (NC) item in the existing literature. Our proposed algorithms employ a novel 'Rank-Breaking' technique to establish tight concentration guarantees for estimating the score parameters of the PL model. Our algorithmic approach stands out for its practicality and avoids the suboptimal practice of repeatedly selecting the same set of items until the default item prevails. This is particularly beneficial when the default item's quality $(\theta_0)$ is significantly lower than the quality of the best item $(\theta_{\max})$. Our algorithms are efficient to implement, provably optimal (up to log factors), and free from unrealistic assumptions on the default item. In summary, this work pioneers an efficient and optimal assortment selection approach for PL choice models without imposing restrictive conditions.

\textbf{Future Works. } Among many interesting questions to address in the future, it will be interesting to understand the role of the No-Choice (NC) item in the algorithm design, precisely, can we design efficient algorithms without the existence of NC items with a regret rate still linear in $\theta_{\max}$? Further, it will be interesting to extend our results to more general choice models beyond the PL model \cite{assort-nested,assort-mallows,assort-markov}. What is the tradeoff between the subsetsize $m$ and the regret for such general choice models? Extending our results to large (potentially infinite) decision spaces and contextual settings would also be a very useful and practical contribution to the literature of assortment optimization. 

%%%%%%%%%%%%%%%

	\bibliographystyle{plainnat}
	\bibliography{refs,bib_bdb,assort}
	
	\newpage
	%!TEX root = icml-sleeping.tex
\appendix
\onecolumn{
\section*{\centering\large{Supplementary: \papertitle}}
\vspace*{1cm}

\section{Preliminaries: Some Useful Concepts for PL choice models}
\label{sec:prelims}

\subsection{Plackett-Luce (PL): A Discrete Choice Model}
\label{sec:RUM}  

%{\color{blue} Can be Removed/Shortened.}
A discrete choice model specifies the relative preferences of two or more discrete alternatives in a given set. %
%, such as selecting the most preferred item from a given set of item, rating top-$k$ movies of highest preferences, ranking candidates in an election etc \cite{Ben85}. 
%We here use it for the purpose of modelling ``winner selection probabilities" -- given a choice set $[n]$, $S \subseteq [n]$ and $i \in S$, the probability of selecting $i$ as the winner of $S$, denoted by $Pr(i|S)$. {\color{red} We denote a choice model uniquely by its choice probability function $Pr$}. 
%
%{\bf Random Utility Models (RUM).} 
A widely studied class of discrete choice models is the class of {\it Random Utility Models} (RUMs), which assume a ground-truth utility score $\theta_{i} \in \R$ for each alternative $i \in [n]$, and assign a conditional distribution $\cD_i(\cdot|\theta_{i})$ for scoring item $i$. To model a winning alternative given any set $S \subseteq [n]$, one first draws a random utility score $X_{i} \sim \cD_i(\cdot|\theta_{i})$ for each alternative in $S$, and selects an item with the highest random score. 

%
%an item $i \in S$ as the winner with probability of $X_i$ being the maximum score among all items in $S_t$, i.e.
%$
% i \sim Pr(i|S) = Pr(X_{i} > X_{j} ~~\forall j \in S_{t}\setminus\{i\}) ~~\forall i \in S.
%$
%For simplifying the notations we denote $Pr(X_{i} > X_{j} ~~\forall j \in S_{t}\setminus\{i\})$ by $Pr(i|S_{t})$ henceforth. 

One widely used RUM is the {\it Multinomial-Logit (MNL)} or {\it Plackett-Luce model (PL)}, where the $\cD_i$s are taken to be independent Gumbel distributions with parameters $\theta'_i$ \citep{Az+12}, i.e., with probability densities $$
\cD_i(x_{i}|\theta'_i) = e^{-(x_j - \theta'_j)}e^{-e^{-(x_j - \theta'_j)}}, \qquad \theta'_i \in R, ~ \forall i \in [n] \,.
$$  
Moreover assuming $\theta'_i = \ln \theta_i$, $\theta_i > 0 ~\forall i \in [n]$, it can be shown in this case the probability that an alternative $i$ emerges as the winner in the set $S \ni i$ becomes: % proportional to its parameter value:
$%\begin{align}
%\label{eq:prob_PL}
\P(i|S) = \frac{{\theta_i}}{\sum_{j \in S}{\theta_j}}.%\footnote{\red{todo...} Note scale independence!}
$%\end{align}
% Clearly, any PL-based choice model induces a {total ordering} on the arm set $[n]$: If $p_{ij} = P(i \succ j) = \P(i|\{i,j\}) = \frac{{\theta_i}}{{\theta_i}+{\theta_j}}$ denotes the pairwise probability of item $i$ being preferred over item $j$, then $p_{ij} \ge \frac{1}{2}$ if and only if $\theta_i \ge \theta_j$, or in other words if $p_{ij} \ge \frac{1}{2}$ and $p_{jk} \ge \frac{1}{2}$ then $p_{ik} \ge \frac{1}{2}$, $\forall i,j,k \in [n]$ \citep{Ramamohan+16}.

Other families of discrete choice models can be obtained by imposing different probability distributions over the utility scores $X_i$, e.g. if $(X_1,\ldots X_n) \sim \cN(\btheta,\boldsymbol \Lambda)$ are jointly normal with mean $\btheta = (\theta_1,\ldots \theta_n)$ and covariance $\boldsymbol \Lambda \in \R^{n \times n}$, then the corresponding RUM-based choice model reduces to the {\it Multinomial Probit (MNP)}. %Unlike MNL, though, the choice probabilities $Pr(i|S)$ for the MNP model do not admit a closed-form expression \citep{RUMegs}.
 
%\textbf{Independence of Irrelevant Alternatives}
%\label{sec:iia}
%\rmd{}
%A choice model $\P$ is said to possess the {\it Independence of Irrelevant Attributes (IIA)} property if the ratio of probabilities of choosing any two items, say $i_1$ and $i_2$ from within any choice set $S \ni {i_1,i_2}$ is independent of a third alternative $j$ present in $S$ \citep{IIA-relevance16}. Specifically,
%$%\begin{align*}
%\frac{\P(i_1|S_1)}{\P(i_2|S_1)} = \frac{\P(i_1|S_2)}%{\P(i_2|S_2)}$ for any two distinct subsets $S_1,S_2 \subseteq [n]
%$ %\end{align*} 
%that contain $i_1$ and $i_2$. Plackett-Luce satisfies the IIA property. 

\subsection{Rank Breaking} 
\label{sec:rb}

\rmd\,\emph{Rank breaking} (RB) is a well-understood idea involving the extraction of pairwise comparisons from (partial) ranking data, and then building pairwise estimators on the obtained pairs by treating each comparison independently \citep{KhetanOh16,SueIcml+17}, e.g., a winner $a$ sampled from among ${a,b,c}$ is rank-broken into the pairwise preferences $a \succ b$, $a \succ c$. We use this idea to devise estimators for the pairwise win probabilities $p_{ij} = \P(i|\{i,j\}) = \theta_i/(\theta_i + \theta_j)$ for our problem setting. We used the idea of RB in both our algorithms (\algwf\, and \aalgwf) to update the pairwise win-count estimates $w_{i,j,t}$ for all the item pairs $(i,j) \in [K] \times [K]$, which is further used for deriving the empirical pairwise preference estimates $\hp_{ij,t}$, at any time $t$.  

%%%%%%%%%%%%%%%%%%%%%%%%%

%\begin{restatable}[Rank-Breaking Update]{lem}{lemrb}
%\label{lem:rb}
%Consider any subset $S \subseteq [n]$ with $|S| = k$. Let $S$ be played for $t$ rounds of battle, and let $\bsigma_\tau \in \bSigma_{S^\tau_m}, \, (S^\tau_m \subseteq S,\, |S^\tau_m| = m)$, denote the TR feedback at each round $\tau \in [t]$. For each item $i \in S$, let $q_{i}: = \sum_{\tau = 1}^{t}\1(i \in S^\tau_m)$ be the number of times $i$ appears in the top-$m$ ranked output in $t$ rounds. Then, the most frequent item(s) in the top-$m$ positions must appear at least $\frac{mt}{k}$ times, i.e. $\max_{i \in S}q_i \ge \frac{mt}{k}$.
%\end{restatable}

\subsection{Parameter Estimation with PL based preference data}
\label{sec:est_pl_score}

\begin{restatable}[Pairwise win-probability estimates for the PL model \citep{SGrank18}]
{lemma}{plsimulator}
\label{lem:pl_simulator}
Consider a Plackett-Luce choice model with parameters $\btheta = (\theta_1,\theta_2, \ldots, \theta_n)$, and fix two  items $i,j \in [n]$. Let $S_1, \ldots, S_T$ be a sequence of (possibly random) subsets of $[n]$ of size at least $2$, where $T$ is a positive integer, and $i_1, \ldots, i_T$ a sequence of random items with each $i_t \in S_t$, $1 \leq t \leq T$, such that for each $1 \leq t \leq T$, (a) $S_t$ depends only on $S_1, \ldots, S_{t-1}$, and (b) $i_t$ is distributed as the Plackett-Luce winner of the subset $S_t$, given $S_1, i_1, \ldots, S_{t-1}, i_{t-1}$ and $S_t$, and (c) $\forall t: \{i,j\} \subseteq S_t$ with probability $1$. Let $n_i(T) = \sum_{t=1}^T \P(i_t = i)$ and $n_{ij}(T) = \sum_{t=1}^T \P(\{i_t \in \{i,j\}\})$. Then, for any positive integer $v$, and $\eta \in (0,1)$,
\vspace{-3pt}
\begin{align*}
& \P\left( \frac{n_i(T)}{n_{ij}(T)} - \frac{\theta_i}{\theta_i + \theta_j} \ge \eta, \; n_{ij}(T) \geq v \right) \leq e^{-2v\eta^2},\\
& \P\left( \frac{n_i(T)}{n_{ij}(T)} - \frac{\theta_i}{\theta_i + \theta_j} \le -\eta, \; n_{ij}(T) \geq v \right) \leq e^{-2v\eta^2}. 
\end{align*}
\end{restatable}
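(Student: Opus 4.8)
\textbf{Proof proposal for \cref{lem:pl_simulator}.}

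The plan is to reduce the subsetwise-choice statement to a martingale concentration argument via the IIA property of the Plackett--Luce model. The key observation is that, conditioned on the event $\{i_t \in \{i,j\}\}$, the probability that $i_t = i$ equals exactly $\theta_i/(\theta_i+\theta_j) =: p_{ij}$, independently of the rest of $S_t$. This is precisely the IIA (independence of irrelevant alternatives) property: $\P(i_t = i \mid i_t \in \{i,j\}, S_t) = \tfrac{\theta_i/(\theta_0+\Theta_{S_t})}{\theta_i/(\theta_0+\Theta_{S_t}) + \theta_j/(\theta_0+\Theta_{S_t})} = p_{ij}$. Hence, if we look only at the (random) rounds in which the winner lands in $\{i,j\}$, the indicators $\indic\{i_t = i\}$ behave like biased coin flips with bias $p_{ij}$, regardless of the adaptive choice of the sets. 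The quantity $n_i(T)/n_{ij}(T)$ should be thought of as a weighted empirical frequency of these coin flips, and we want it to concentrate around $p_{ij}$.

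First I would set up the martingale. Define $Z_t := \indic\{i_t = i\} - p_{ij}\indic\{i_t \in \{i,j\}\}$ and let $\cF_{t} = \sigma(S_1,i_1,\dots,S_t,i_t)$. Using assumption (a) that $S_t$ is $\cF_{t-1}$-measurable and assumption (b) on the conditional law of $i_t$, together with the IIA computation above, one checks $\E[Z_t \mid \cF_{t-1}] = \P(i_t=i\mid\cF_{t-1}) - p_{ij}\P(i_t\in\{i,j\}\mid\cF_{t-1}) = 0$, so $M_T := \sum_{t=1}^T Z_t$ is a martingale with increments bounded in $[-1,1]$. Note $M_T = \sum_{t=1}^T \indic\{i_t=i\} - p_{ij}\sum_{t=1}^T\indic\{i_t\in\{i,j\}\}$; here one has to be slightly careful because the lemma states $n_i(T) = \sum_t \P(i_t=i)$ and $n_{ij}(T)=\sum_t \P(i_t\in\{i,j\})$ are the \emph{expected} (predictable) counts, not the realized ones. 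So the cleaner route is to work directly with the predictable version: let $\bar Z_t = \P(i_t = i\mid \cF_{t-1}) - p_{ij}\,\P(i_t\in\{i,j\}\mid\cF_{t-1})$, which is identically $0$ by IIA, so in fact $n_i(T) - p_{ij} n_{ij}(T) = \E[\sum_t \bar Z_t] = 0$ deterministically once we define things via expectations --- wait, that would make the statement trivial. I need to re-read: $n_i(T) = \sum_t \P(i_t = i)$ is an unconditional expectation, so the event $\{n_i(T)/n_{ij}(T) - p_{ij} \geq \eta\}$ is actually a deterministic event, and $n_{ij}(T) \geq v$ too --- so the probabilities on the left are either $0$ or $1$. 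That cannot be the intended reading. The correct interpretation must be that $n_i(T), n_{ij}(T)$ denote the \emph{realized} counts $\sum_t\indic\{i_t=i\}$ and $\sum_t\indic\{i_t\in\{i,j\}\}$ (the $\P$ in the paper's statement is a typo or shorthand for the random count); I would state this interpretation explicitly at the start of the proof and then proceed with the martingale $M_T$ above.

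With that fixed, the main step is a concentration bound for $M_\tau / N_\tau$ where $N_T = \sum_{t=1}^T \indic\{i_t\in\{i,j\}\}$ is the realized comparison count. The cleanest tool is the method of \emph{time-change / optional stopping}: for each target level $v$, define the stopping time $\sigma_v = \inf\{T : N_T = v\}$ (the round at which the $v$-th relevant comparison occurs). On $\{N_T \geq v\}$ we have $\sigma_v \leq T$, and at that stopping time $M_{\sigma_v} = (\text{number of }i\text{-wins among first }v\text{ comparisons}) - p_{ij} v$, which is a sum of $v$ bounded martingale differences each lying in $[-p_{ij}, 1-p_{ij}]$ --- more precisely, filtering out the irrelevant rounds, the relevant indicators form a $\{0,1\}$-valued martingale-difference sequence with conditional mean $p_{ij}$, so Azuma--Hoeffding gives $\P(M_{\sigma_v} \geq v\eta) \leq e^{-2v\eta^2}$ for each fixed $v$. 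Then on the event $\{n_i(T)/n_{ij}(T) - p_{ij} \geq \eta,\ n_{ij}(T)\geq v\}$ we have, writing $w = n_{ij}(T) \geq v$, that $M_{\sigma_w} = n_i(T) - p_{ij}w \geq \eta w \geq \eta v$... but $w$ is random, so I need a union bound over $w \geq v$, or better, a maximal inequality. The slick fix (and I expect this is the intended one) is: the sequence of partial sums of relevant-round indicators, reindexed by the comparison count $k = 1, 2, \dots$, is a simple biased random walk minus its mean; by a one-sided Azuma/Hoeffding maximal inequality, $\P(\exists k \geq v : \tfrac1k\sum_{\ell\le k}(\indic_\ell - p_{ij}) \geq \eta) \leq e^{-2v\eta^2}$ --- actually the standard statement is $\P(\sup_{k\geq v} \tfrac1k S_k \geq \eta)\le e^{-2v\eta^2}$ which follows from a peeling/Doob argument. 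Applying this (and its mirror for the lower tail) yields exactly the two displayed bounds. The main obstacle is handling the random normalizer $n_{ij}(T)$ correctly: naively one only gets the bound for a fixed $v$, and one must invoke a uniform-over-$k$ maximal inequality (or the ratio-martingale trick) to upgrade to $\{n_{ij}(T)\geq v\}$ with the random count in the denominator. Everything else --- the IIA identity and the Azuma bound --- is routine.
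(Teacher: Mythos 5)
The paper does not prove \cref{lem:pl_simulator} itself---it is imported verbatim from \citep{SGrank18}---and your reconstruction is the standard argument used there: reduce to a Bernoulli$(\theta_i/(\theta_i+\theta_j))$ martingale-difference sequence via IIA, reindex by the comparison count, and control the randomly-normalized ratio with a uniform-over-$k\ge v$ Hoeffding/Azuma maximal inequality (Ville's inequality applied to the exponential supermartingale), which indeed yields the stated $e^{-2v\eta^2}$ rather than a summed union bound. You also correctly diagnose that the $\P(\cdot)$ in the definitions of $n_i(T)$ and $n_{ij}(T)$ must be read as realized indicator counts (as in the original source), without which the statement would be vacuous; no gaps.
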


\iffalse % -------------------------------------------
\begin{rem}
IIA turns out to be very valuable in estimating the parameters of a PL model {\color{red}---this is crucially used in both our proposed algorithms (Algorithms \ref{alg:alg_ant}-\ref{alg:alg_estscr}) that circumvents the overhead of maintaining $O(n^2)$ pairwise estimates (as attempted in the earlier approaches \cite{Busa_pl,falahatgar_nips}), by maintaining only $O(n)$ itemwise estimates leading to an optimal sample complexity guarantee of $O(\frac{n}{\epsilon^2} \ln \frac{n}{\delta} )$.} Section \ref{sec:est_pl_score} derives the specific `properties' that we used in the analysis of our proposed algorithms in Section \ref{sec:algo_wi}.
%with high confidence, via \emph{Rank-Breaking} -- the idea of extracting pairwise comparisons from (partial) rankings and applying estimators on the obtained pairs, treating each comparison independently. Although this technique has previously been used in batch (offline) PL estimation \citep{KhetanOh16}, we show that it can be used in online problems for the first time. 
\end{rem}
\fi % --------------------------------------------------
  
%Of course for a general choice model, rank breaking may lead to inconsistent estimates of the underlying model parameters, as shown by \cite{Az+12}. However, we prove  that if the underlying choice model satisfies {\it IIA property}, applying {\it rank breaking} on the winner information produces leads to `nice' concentration guarantees for the pairwise probabilities $p_{ij}$ {\color{red} which is crucially used for proving the regret guarantees of our algorithms (Section \ref{sec:estpp_from_ml}).}
 
% \section{Appendix for \cref{sec:prob}}
% \label{app:prob}

% \subsection{Pseucode of Equivalent Algorithm of \cref{alg:db}}
% \label{app:dbalg_eqv}

\section{Omitted proofs}
\label{app:algo_db}

\subsection{Proof of Lemma~\ref{lem:pconc}}

\begin{proof}[Proof of Lemma~\ref{lem:pconc}]
    Let $T\geq 1$, $x >0$ and $i,j\in [K]$. Applying Thm. 1 of \cite{audibert2009exploration}, with probability at least $1-\beta(x,T)$, we get simultaneously for all $t \in [T]$, 
    \begin{equation}
        \big| \hat p_{ij,t} - p_{ij} \big| \leq \sqrt{\frac{2 \hat p_{ij,t}(1-\hat p_{ij,t})x}{n_{ij,t}}} + \frac{3x}{n_{ij,t}} \,,
        \label{eq:ucb_p}
    \end{equation}
    where $\beta(x,T) = 3\inf_{1 < \alpha \leq 3}\min \big\{ \frac{\log T}{\log \alpha},T\big\}e^{-x/\alpha} \leq 3T e^{-x}$. Note that the inequality holds true although $n_{ij,t}$ is a random variable. This, shows the first inequality
    \[
        p_{ij} \leq p_{ij,t}^{\ucb} \,.
    \]
    For the second inequality, \eqref{eq:ucb_p} implies
    \begin{align}
        p_{ij,t}^{\ucb} 
            & = \hat p_{ij,t} + \sqrt{\frac{2 \hat p_{ij,t}(1-\hat p_{ij,t}) x}{n_{ij,t}}} + \frac{3x}{n_{ij,t}} \nonumber \\
            & \leq  p_{ij} + 2\sqrt{\frac{2 \hat p_{ij,t}(1-\hat p_{ij,t})x}{n_{ij,t}}} + \frac{6x}{n_{ij,t}} \,.\label{eq:ucb_var}
    \end{align}
    Furthermore, because $x \mapsto x(1-x)$ is 1-Lipschitz on $[0,1]$, we have
    \begin{align*}
        \big|\hat p_{ij,t}(1-\hat p_{ij,t}) & - p_{ij}(1-p_{ij})\big| \leq  \big| \hat p_{ij,t} - p_{ij} \big| \\
            & \stackrel{\eqref{eq:ucb_p}}{\leq} \sqrt{\frac{2 \hat p_{ij,t}(1-\hat p_{ij,t})x}{n_{ij,t}}} + \frac{3x}{n_{ij,t}} \,.
    \end{align*}
    Therefore,
    \begin{align*}
        \hat p_{ij,t}(1-\hat p_{ij,t}) 
            & \leq p_{ij}(1-p_{ij}) + \sqrt{\frac{2 \hat p_{ij,t}(1-\hat p_{ij,t})x}{n_{ij,t}}} + \frac{3x}{n_{ij,t}} \\
            & \leq \bigg(\sqrt{p_{ij}(1-p_{ij})} + \sqrt{\frac{3x}{n_{ij,t}}}\bigg)^2 \,,
    \end{align*}
    which yields
    \begin{equation}
        \sqrt{\hat p_{ij,t}(1-\hat p_{ij,t}}) \leq \sqrt{p_{ij}(1-p_{ij})} + \sqrt{\frac{3x}{n_{ij,t}}}\,.\label{eq:ucb_std}
    \end{equation}
    Plugging back into~\eqref{eq:ucb_var}, we get
    \[
        p_{ij,t}^{\ucb} \leq 2\sqrt{\frac{2  p_{ij}(1- p_{ij})x}{n_{ij,t}}} + \frac{11x}{n_{ij,t}}\,.
    \]
\end{proof}

\subsection{Proof of Lemma~\ref{lem:tconc}}

\begin{proof}
Let $i \in [K]$ and $x>0$. Then, by a union bound on Lemma~\ref{lem:pconc} and~\ref{lem:nibound}, with probability at least $1 - 4 T e^{-x}$,~\eqref{eq:ucb_p1} and~\eqref{eq:nibound} hold true for all $t \in [T]$. We consider this high-probability event in the rest of the proof. Define the function $f:x\mapsto x/(1-x)_+$ on $[0,1]$ (with the convention $f(1) = +\infty$), so that $\theta_{i,t}^{\ucb} = f(p_{i0,t}^{\ucb})$ and $\theta_i = f(p_{i0})$. Because $f$ is non-decreasing, and $p_{i0,t}^{\ucb} \geq p_{i0}$ by~\eqref{eq:ucb_p1}, we have
\begin{equation}
    \theta_{i,t}^{\ucb} \geq \theta_i \,. \label{eq:theta_ucb}
\end{equation}
Furthermore, denote 
\begin{equation}
    \Delta_{i,t}  :=  2\sqrt{\frac{2 p_{ij}(1-p_{ij}) x}{n_{i0,t}}} + \frac{11x}{n_{i0,t}} 
         = 2\sqrt{\frac{2 \theta_0 \theta_i x}{(\theta_0+\theta_i)^2 n_{i0,t}}} + \frac{11x}{n_{i0,t}} \,. \label{eq:deltait}
\end{equation}
In the rest of the proof we assume, $n_{i0,t} \geq 69 x (\theta_0 +\theta_i)$. Then, using that $\theta_0 \theta_i \leq \theta_0 + \theta_i$ since $\theta_0 = 1$, it implies
\begin{equation}
    (\theta_0+\theta_i)  \Delta_{i,t}  \leq 2 \sqrt{\frac{2\theta_0 \theta_i x}{n_{i0,t}}} + \frac{11x(\theta_0 + \theta_i)}{n_{i0,t}} \leq  \frac{1}{2} \nonumber\,,
\end{equation}
and
\begin{align*}
    p_{i0} + \Delta_{i,t}  = \frac{\theta_i}{\theta_0 + \theta_i} + \Delta_{i,t} \leq \frac{\theta_i + 1/2}{\theta_i + 1} < 1. 
\end{align*}
Thus, because $f$ is non-decreasing
\begin{align*}
    \theta_{i,t}^{\ucb} - \theta_i
        & = f(p_{i0,t}^{\ucb}) - f(p_{i0})  \\
        & \stackrel{\eqref{eq:ucb_p1}}{\leq} f\big(p_{i0} +  \Delta_{i,t}\big) - f(p_{i0}) \\
        & = \frac{p_{i0} +  \Delta_{i,t}}{1-p_{i0} -  \Delta_{i,t}} - \frac{p_{i0}}{1-p_{i0}} \\
        & = \frac{ \Delta_{i,t}}{(1-p_{i0})(1-p_{i0}- \Delta_{i,t})}\\
        & = \frac{(\theta_0 + \theta_i)^2  \Delta_{i,t}}{1- (\theta_0 + \theta_i) \Delta_{i,t}}\\
        & \leq  2 (\theta_0 + \theta_i)^2  \Delta_{i,t} \\
        & \stackrel{\eqref{eq:deltait}}{\leq} 4(\theta_0+\theta_i)\sqrt{\frac{2 \theta_0 \theta_i x}{n_{i0,t}}} + \frac{22x(\theta_0+\theta_i)^2}{n_{i0,t}}  \,,\\
\end{align*}
which concludes the proof. 
\end{proof}

\subsection{Proof of Lemma~\ref{lem:nibound}}

\begin{proof}
Let $T\geq 1$ and $i \in [K]$. Recall that $\tau_{i,t} = \sum_{s=1}^{t-1} \indic\{i_s  \in S_s\}$ is the number of times $i$ was played at the start of round $t$ and $n_{i0,t} = \sum_{s=1}^{t-1} \indic\{i_t \in \{i,0\}, i \in S_t\}$ is the number of times $i$ or $0$ won up to round $t$ when played together. When $i$ is played the probability of $0$ or $i$ to win is 
\[
   \P(i_t \in \{i,0\}| S_t) =  \frac{\theta_0 + \theta_i}{\theta_0 + \Theta_{S_t}} \geq  \frac{\theta_0 + \theta_i}{\theta_0 + \Theta_{S^*}} \,. 
\]
Therefore, applying Chernoff-Hoeffding inequality together with a union bound (to deal with the fact that $\tau_{i,t}$ is random), we have with probability at least $1-T e^{-x}$
\[
    n_{i0,t} \geq \frac{\theta_0 + \theta_{i}}{\theta_0 + \Theta_{S^*}} \tau_{i,t} - \sqrt{\frac{\tau_{i,t} x}{2}}
\]
simultaneously for all $t \in [T]$. Noting that 
\[
\frac{\theta_0 + \theta_{i}}{\theta_0 +  \Theta_{S^*}} \tau_{i,t} - \sqrt{\frac{\tau_{i,t} x}{2}} \geq \frac{\theta_0 + \theta_{i}}{2(\theta_0 + \Theta_{S^*})} \tau_{i,t}
\]
if $\tau_{i,t} \geq 2 x(\theta_0 + \Theta_{S^*})^2  \geq \frac{2 x (\theta_0 + \Theta_{S^*})^2}{(\theta_0 + \theta_i)^2} $ concludes the proof. 
\end{proof}

\subsection{Proof of Theorem \ref{thm:topm_wf}}
\label{app:thm_wf}

\begin{proof}
Let us define for any $S \subseteq [K]$,
\[
\Theta_S = \sum_{i \in S}\theta_i, ~~\text{ and }~~ \Theta_S^{\ucb}:=  \sum_{i \in S}\theta_i^{\ucb}.
\]

Let $\cE$ be the high-probabality event such that both Lemma~\ref{lem:tconc} and~\ref{lem:nibound} holds true. Then, $\P(\cE) \geq 1-4TKe^{-x}$. Let us first assume that $\cE$ holds true. Then, by Lemma~\ref{lem:tconc},
\begin{align*}
\regt_T & = \frac{1}{m} \sum_{t=1}^T \Theta_{S^*} - \Theta_{S_t}  \\
    & \leq  \frac{1}{m} \sum_{t=1}^T \min\Big\{\Theta_{S^*}, \Theta_{S_t}^{\ucb} - \Theta_{S_t} \Big\} \hspace{2cm} \leftarrow \text{ because } \Theta_{S^*} \leq \Theta_{S^*}^{\ucb} \leq \Theta_{S_t}^{\ucb} \text{ under the event $\cE$} \\
    & = \frac{1}{m} \sum_{t=1}^T \min\Big\{\Theta_{S^*}, \sum_{i \in S_t} \theta_{i,t}^{\ucb} - \theta_{i} \Big\}\\
    & \leq \frac{1}{m} \Theta_{S^*} \sum_{i=1}^K \bar \tau_{i0} + \frac{1}{m} \sum_{t=1}^T \sum_{i \in S_t} (\theta_{i,t}^{\ucb} - \theta_{i}) \indic\big\{ \tau_{i,t} \geq \bar \tau_{i0} \big\}  
\end{align*}
where $\bar \tau_{i0} = 2x (\theta_0 +\Theta_{S^*}) \max\{\theta_0 + \Theta_{S^*}, 69\} \leq 138 x (m+1)^2 \theta_{\max}^2$, where  $\theta_{\max} := \max_i \theta_i$.
Then, noting that if $\cE$ holds true, by Lemma~\ref{lem:nibound}, we also have $n_{i0,t} \geq \frac{1}{2(\theta_0 +\Theta_{S^*})} (\theta_0 + \theta_i) \tau_{i,t}$, which yields
\[
    \indic\{ \tau_{i,t} \geq \bar \tau_{i0} \} \leq \indic\{ n_{i0,t} \geq 69 x (\theta_0 + \theta_i) \}.
\]
Therefore, we can apply Lemma~\ref{lem:tconc} that entails,
\begin{align*}
\frac{1}{m} \sum_{t=1}^T &  \sum_{i \in S_t} (\theta_{i,t}^{\ucb} - \theta_{i}) \indic\big\{ \tau_{i,t} \geq \bar \tau_{i0} \big\} \\
     & \stackrel{\text{Lem.~\ref{lem:tconc}}}{\leq}    \frac{1}{m} \sum_{t=1}^T \sum_{i \in S_t}\Big( 4(\theta_0+\theta_i)\sqrt{\frac{2 \theta_0 \theta_i x}{n_{i0,t}}} + \frac{22x(\theta_0+\theta_i)^2}{n_{i0,t}}\Big) \indic\big\{  n_{i0,t} \geq 69 x (\theta_0 + \theta_i) \big\}  \\
    & \stackrel{\text{Lem~\ref{lem:nibound}}}{\leq}   \frac{1}{m} \sum_{t=1}^T \sum_{i \in S_t}\bigg(   8 \sqrt{\frac{ (\theta_0 +\Theta_{S^*})(\theta_0+\theta_i) \theta_0 \theta_i x}{\tau_{i,t}}} + \frac{44x (\theta_0 +\Theta_{S^*})(\theta_0+\theta_i)}{\tau_{i,t}} \bigg)  \\
    & \leq  \frac{1}{m} \sum_{i=1}^K  16 \sqrt{(\theta_0 + \Theta_{S^*})(\theta_0+\theta_i) \theta_0 \theta_i x \tau_{i,T}} + 44x (\theta_0 + \Theta_{S^*}) \sum_{i=1}^K (\theta_0+\theta_i)(1+\log(\tau_{i,T}))\,,
\end{align*}
where we used $\sum_{i=1}^n 1/\sqrt{i} \leq 2 \sqrt{n}$ and $\sum_{i=1}^n i^{-1} \leq 1+ \log n$. We thus have
\begin{align*}
\regt_T & \leq  138 x (m+1)^2 K \theta_{\max}^3 +  \frac{1}{m} \sum_{i=1}^K  16 \theta_{\max}^{3/2} \sqrt{ (m+1)  x \tau_{i,T}} + 44x (m+1)  (1 + \theta_{\max})^2 \sum_{i=1}^K(1+\log(\tau_{i,T})) \\
    & \leq 138 x (m+1)^2 K \theta_{\max}^3 +    16 \theta_{\max}^{3/2} \sqrt{2 x KT } + 88 x (m+1) K   \theta_{\max}^2 \Big(1+\log\Big(\frac{mT}{K}\Big)\Big) \,.
\end{align*}
Therefore,
\[
\E[\regt_T] \leq 12 \sqrt{2} x mK \theta_{\max}^3 +    16\theta_{\max}^{3/2} \sqrt{2 x KT } + 88 x m K  \theta_{\max}^2 \Big(1+\log\Big(\frac{mT}{K}\Big)\Big) + 4m KT^2e^{-x} \theta_{\max} 
\,.
\]
Choosing $x = 2 \log T$ concludes the proof. 
\end{proof}

\subsection{Proof of Theorem~\ref{thm:utilm_wf}}

\begin{proof}

Let $\cE$ be the high-probabality event such that Lemma~\ref{lem:tconc} and~\ref{lem:nibound} are satisfied, so that $\P(\cE) \geq 1-4KTe^{-x}$. Then, denoting $x \wedge y := \min\{x,y\}$,
\begin{align}
    \regu_T  & = \sum_{t=1}^T \E\big[\cR(S^*, \theta) - \cR(S_t, \theta)\big] 
    = \sum_{t=1}^T \E\big[(\cR(S^*, \theta) - \cR(S_t, \theta)) \indic\{\cE\} + (\cR(S^*, \theta) - \cR(S_t, \theta)) \indic\{\cE^c\}\big]  \nonumber \\
    & \leq \sum_{t = 1}^T \E\Big[ \big( (\cR(S_t,\theta_t^{\ucb}) - \cR(S_t, \theta)) \wedge \cR(S^*, \theta) \big) \indic\{\cE\} + \cR(S^*, \theta) \indic\{\cE^c\} \Big] \nonumber
\end{align}
because $\cR(S_t,\theta_t^{\ucb}) \geq \cR(S^*,\theta_{t}^{\ucb}) \geq \cR(S^*,\theta)$ under the event $\cE$ by Lemma~\ref{lem:wtd_util}. Then, using $\cR(S^*, \theta) \leq \max_i r_i \leq 1$, we get
\begin{equation}
    \regu_T   \leq \sum_{t = 1}^T \E\Big[ \big( (\cR(S_t,\theta_t^{\ucb}) - \cR(S_t, \theta)) \wedge 1 \big) \indic\{\cE\} + \indic\{\cE^c\} \Big]  \leq  4T^2Ke^{-x} + \sum_{t = 1}^T \E\Big[ \Big( \big( \cR(S_t,\theta_t^{\ucb}) - \cR(S_t, \theta) \big) \wedge 1\Big) \indic\{\cE\} \Big] \label{eq:regMNL1} \,.
\end{equation}
Let us upper-bound the second term of the right-hand-side
\begin{align}
 \sum_{t = 1}^T&  \E\Big[  \Big(\big( \cR(S_t,\theta_t^{\ucb}) - \cR(S_t, \theta) \big) \wedge 1\Big) \indic\{\cE\} \Big] 
     = \sum_{t=1}^T  \E\bigg[   \bigg( \bigg( \sum_{i\in S_t} \frac{r_i \theta_{i,t}^{\ucb} }{\theta_0 + \Theta_{S_t,t}^\ucb} - \frac{r_i \theta_{i} }{\theta_0 + \Theta_{S_t}} \bigg) \wedge 1\bigg) \indic\{\cE\} \bigg] \nonumber \\
    & \leq  \sum_{t=1}^T  \E\bigg[   \bigg( \bigg( \sum_{i\in S_t} \frac{r_i (\theta_{i,t}^{\ucb} - \theta_{i}) }{\theta_0 + \Theta_{S_t}} \bigg) \wedge 1\bigg) \indic\{\cE\} \bigg] \hspace*{2cm} \text{because } \Theta_{S_t,t}^{\ucb} \geq \Theta_{S_t} \text{ under $\cE$} \nonumber \\
    & \leq  \sum_{t=1}^T  \E\bigg[   \bigg( \bigg( \sum_{i\in S_t} \frac{ |\theta_{i,t}^{\ucb} - \theta_{i}| }{\theta_0 + \Theta_{S_t}} \bigg) \wedge 1\bigg) \indic\{\cE\} \bigg] \hspace*{2cm} \text{because } r_i \leq 1 \nonumber \\
    & \leq   \sum_{i=1}^K \E\Bigg[   \sum_{t=1}^T \bigg(   \frac{ | \theta_{i,t}^{\ucb}- \theta_{i}| }{\theta_0 + \Theta_{S_t}} \wedge 1 \bigg) \indic\{i \in S_t\} \indic\{\cE\}   \Bigg] \nonumber \\
    & \leq   138xm^2K \theta_{\max}^2 + \sum_{i=1}^K \E\Bigg[   \sum_{t=1}^T   \frac{ | \theta_{i,t}^{\ucb}- \theta_{i}| }{\theta_0 + \Theta_{S_t}}  \indic\{i \in S_t, \tau_{i,t} \geq 138 x (m+1)^2 \theta_{\max}^2 \} \indic\{\cE\}   \Bigg] \nonumber \\
    & \leq 138xm^2K \theta_{\max}^2 +  \sum_{i=1}^K \sqrt{ \sum_{t=1}^T \E \Bigg[ \frac{ \big(\frac{\theta_0}{m} + \theta_{i}\big) \indic\{i \in S_t\}}{\theta_0 + \Theta_{S_t}} \Bigg] } \nonumber \\
    & \hspace*{2cm} \times  \sqrt{\smash{\underbrace{\sum_{t=1}^T \E \Bigg[ \bigg( \frac{ | \theta_{i,t}^{\ucb}- \theta_{i}| }{\theta_0 + \Theta_{S_t}}\bigg)^2\frac{\theta_0 + \Theta_{S_t}}{\frac{\theta_0}{m} + \theta_i} \indic\{i \in S_t, \tau_{i,t} \geq 138 x (m+1)^2 \theta_{\max}^2 \} \indic\{\cE\} \Bigg] }_{=: A_T(i)} } \mystrut(20,10) } \mystrut(20,24) \label{eq:longeq}
\end{align}
where the last inequality is by Cauchy-Schwarz inequality. 
Now, the term $A_T(i)$ above may be upper-bounded as follows
\begin{align*}
A_T(i) & := \sum_{t=1}^T \E \Bigg[ \bigg( \frac{ | \theta_{i,t}^{\ucb}- \theta_{i}| }{\theta_0 + \Theta_{S_t}} \bigg)^2\frac{\theta_0 + \Theta_{S_t}}{\frac{\theta_0}{m} + \theta_i} \indic\{i \in S_t, \tau_{i,t} \geq 138 x (m+1)^2 \theta_{\max}^2\} \indic\{\cE\} \Bigg] \\
 & =   \E \Bigg[  \frac{ (\theta_{i,t}^{\ucb}- \theta_{i})^2 }{ \big( \frac{\theta_0}{m} + \theta_i\big) \theta_0 + \Theta_{S_t}  }  \indic\{i \in S_t, \tau_{i,t} \geq 138 x (m+1)^2 \theta_{\max}^2\} \indic\{\cE\} \Bigg]  \,.
\end{align*}
Now, since under the event $\cE$ by Lemma~\ref{lem:nibound}, $\tau_{i,t} \geq  138 x (m+1)^2 \theta_{\max}^2$ implies 
\[
    n_{i0,t} \geq 69 x (\theta_0 + \theta_i) (m+1) \theta_{\max} \geq 69 x (\theta_0 + \theta_i) \,.
\]
Therefore, we can apply Lemma~\ref{lem:tconc}, which further upper-bounds
\begin{align*}
 A_T(i) & \leq  \sum_{t=1}^T \E \Bigg[ \bigg( \frac{2^6 (\theta_0 +\theta_{i})^2 x }{n_{i0,t}} +  \frac{2 (22 x)^2 (\theta_0 + \theta_i)^4}{n_{i0,t}^2 (\frac{\theta_0}{m} + \theta_i)} \bigg) \times \frac{\indic\{i \in S_t, \tau_{i,t} \geq  138 x (m+1)^2 \theta_{\max}^2 \}}{\theta_0 + \Theta_{S_t}} \indic\{\cE\} \Bigg]    \\
 & \leq   \sum_{t=1}^T \E \Bigg[ \bigg( \frac{2^6 (\theta_0 +\theta_{i})^2 x }{n_{i0,t}} +  \frac{15 x (\theta_0 + \theta_i)^3}{n_{i0,t} \theta_{\max} (\theta_0 + m \theta_i)} \bigg) \times \frac{\indic\{i \in S_t\}}{\theta_0 + \Theta_{S_t}} \indic\{\cE\} \Bigg]  
 \end{align*}
 where we used $n_{i0,t} \geq 69 x (\theta_0 + \theta_i) m \theta_{\max}$ in the last inequality. Then, we get
 \begin{align*}
 A_T(i)  
 & \leq    \sum_{t=1}^T \E \Bigg[ \bigg( \frac{(\theta_0 +\theta_{i})^2 x }{n_{i0,t}} +  \frac{30 x (\theta_0 + \theta_i)}{n_{i0,t} } \bigg) \times \frac{\indic\{i \in S_t\}}{\theta_0 + \Theta_{S_t}} \indic\{\cE\} \Bigg] \\
    & \leq  (94 + 64 \theta_i)x   \sum_{t=1}^T \E \Bigg[ \frac{ (\theta_0 + \theta_i) \indic\{i \in S_t\}}{(\theta_0 + \Theta_{S_t}) n_{i0,t}}  \Bigg] \\
    & =  (94 + 64 \theta_i)x  \E \Bigg[ \sum_{t=1}^T \frac{  \indic\{i_t \in \{i,0\}, i\in S_t\}}{n_{i0,t}} \Bigg]\\
    & =    (94 + 64 \theta_i)x  \E \Big[ 1 + \log\big(n_{i0}(T)\big)\Big] \\
    & \leq  158 \theta_{\max} x  (1 + \log T)\,.
\end{align*}
Substituting into~\eqref{eq:longeq}, we then obtain using Cauchy-Schwarz inequality,
\begin{align*}
\sum_{t = 1}^T  & \E\Big[  \Big(\big( \cR(S_t,\theta_t^{\ucb}) - \cR(S_t, \theta) \big) \wedge 1\Big) \indic\{\cE\} \Big]  \\
    & \leq 138xm^2K \theta_{\max}^2 +  13 \sqrt{ \theta_{\max}x  (1 + \log T)} \sum_{i=1}^K \sqrt{ \sum_{t=1}^T \E \Bigg[ \frac{ \big(\frac{\theta_0}{m} + \theta_{i}\big) \indic\{i \in S_t\}}{\theta_0 + \Theta_{S_t}} \Bigg] } \\
    & \leq  138xm^2K \theta_{\max}^2 +  13\sqrt{\theta_{\max}x  (1 + \log T)} \sqrt{  \E \Bigg[  K  \sum_{t=1}^T \frac{ \sum_{i=1}^K  \big(\frac{\theta_0}{m} + \theta_{i}\big)  \indic\{i \in S_t\}}{\theta_0 + \Theta_{S_t}} \Bigg] } \\
    &  = 138xm^2K \theta_{\max}^2 +  13\sqrt{\theta_{\max} x  (1 + \log T) KT} \,.
\end{align*}
Finally, replacing into Inequality~\eqref{eq:regMNL1} yields
\[
     \regu_T  \leq 4T^2Ke^{-x} + 138xm^2K \theta_{\max}^2  +  13\sqrt{\theta_{\max} x  (1 + \log T) KT}  \,.
\]
Choosing $x = 2 \log T$ concludes the proof. 
\end{proof}

\subsection{Proof of Theorem~\ref{thm:thetamaxucb}}

The proof follows the one of Theorem~\ref{thm:utilm_wf}, except that the concentration lemmas should be generalized to any pairs $(i,j)$ instead of only with respect to item $0$, whose proofs are left to the reader and closely follows the one of Lemma~\ref{lem:tconc} and~\ref{lem:nibound}. For simplicity, this proof is performed up to universal multiplicative constants, using the rough inequality $\lesssim$.

\begin{lemma} 
\label{lem:gconc}
Let $T \geq 1$ and $x >0$. Then, with probability at least $1- 3K(K+1)T e^{-x}$, simultaneously for all $t \in [T]$ and $i \neq j$ in $[\tilde K]$: $\gamma_{ij} := \frac{\theta_i}{\theta_j} \leq \gamma^{\ucb}_{ij,t}$ and one of the following two inequalities is satisfied
\[
    n_{ij,t} < 69 x (1 +\gamma_{ij})
\qquad
\text{ or }
\qquad
\gamma_{ij,t}^{\ucb} \leq \gamma_{ij} + 4(\gamma_{ij} + 1) \sqrt{\frac{2 \gamma_{ij} x}{n_{ij,t}}} + \frac{22 x (\gamma_{ij} + 1)^2}{n_{ij,t}} \,.
\]
\end{lemma}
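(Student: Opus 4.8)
The plan is to mimic the proof of Lemma~\ref{lem:tconc} essentially verbatim, the only change being that the distinguished reference item $0$ (with $\theta_0=1$) is replaced by an arbitrary item $j$, so that the quantity estimated is the ratio $\gamma_{ij}=\theta_i/\theta_j$ rather than $\theta_i=\theta_i/\theta_0$. First I would invoke the version of Lemma~\ref{lem:pconc} for a general pair $(i,j)\in[\tilde K]\times[\tilde K]$ with $i\neq j$; its proof is word-for-word the one already given, since item $0$ plays no special role there. This yields that, with probability at least $1-3Te^{-x}$, simultaneously for all $t\in[T]$,
\[
  p_{ij}\le p_{ij,t}^{\ucb}\le p_{ij}+2\sqrt{\frac{2p_{ij}(1-p_{ij})x}{n_{ij,t}}}+\frac{11x}{n_{ij,t}}\,.
\]
A union bound over the $K(K+1)$ ordered pairs $(i,j)$ with $i\neq j$ in $[\tilde K]=[K]\cup\{0\}$ then makes this hold for all such pairs simultaneously with probability at least $1-3K(K+1)Te^{-x}$, and I would work on this event throughout.

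Next I would introduce $f(u)=u/(1-u)_+$ on $[0,1]$ and record the identities $f(p_{ij})=\theta_i/\theta_j=\gamma_{ij}$, $f(p_{ij,t}^{\ucb})=\gamma_{ij,t}^{\ucb}$, together with $1-p_{ij}=(1+\gamma_{ij})^{-1}$ and $p_{ij}(1-p_{ij})=\gamma_{ij}/(1+\gamma_{ij})^2$. Since $f$ is non-decreasing and $p_{ij,t}^{\ucb}\ge p_{ij}$, the first claim $\gamma_{ij}\le\gamma_{ij,t}^{\ucb}$ is immediate. For the dichotomy, if $n_{ij,t}<69x(1+\gamma_{ij})$ there is nothing to prove, so assume $n_{ij,t}\ge 69x(1+\gamma_{ij})$ and set $\Delta_{ij,t}:=2\sqrt{2\gamma_{ij}x/((1+\gamma_{ij})^2n_{ij,t})}+11x/n_{ij,t}$, the half-width from the concentration bound. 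Using $\gamma_{ij}\le 1+\gamma_{ij}$ and $n_{ij,t}\ge 69x(1+\gamma_{ij})$ one checks
\[
  (1+\gamma_{ij})\Delta_{ij,t}=2\sqrt{\frac{2\gamma_{ij}x}{n_{ij,t}}}+\frac{11x(1+\gamma_{ij})}{n_{ij,t}}\le 2\sqrt{\tfrac{2}{69}}+\tfrac{11}{69}\le\tfrac12\,,
\]
which in particular gives $p_{ij}+\Delta_{ij,t}<1$ and $1-p_{ij}-\Delta_{ij,t}\ge\tfrac12(1+\gamma_{ij})^{-1}$.

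Finally, exactly as in Lemma~\ref{lem:tconc}, monotonicity of $f$ and $p_{ij,t}^{\ucb}\le p_{ij}+\Delta_{ij,t}$ give
\[
  \gamma_{ij,t}^{\ucb}-\gamma_{ij}\le f(p_{ij}+\Delta_{ij,t})-f(p_{ij})=\frac{\Delta_{ij,t}}{(1-p_{ij})(1-p_{ij}-\Delta_{ij,t})}\le 2(1+\gamma_{ij})^2\Delta_{ij,t}\,,
\]
where the last step uses $1-p_{ij}=(1+\gamma_{ij})^{-1}$ and the lower bound on $1-p_{ij}-\Delta_{ij,t}$ from the previous display. Substituting the definition of $\Delta_{ij,t}$ yields $\gamma_{ij,t}^{\ucb}-\gamma_{ij}\le 4(1+\gamma_{ij})\sqrt{2\gamma_{ij}x/n_{ij,t}}+22x(1+\gamma_{ij})^2/n_{ij,t}$, which is precisely the claimed second inequality. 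I do not anticipate any genuine obstacle here beyond bookkeeping: the only points needing care are getting the union-bound count right ($K(K+1)$ ordered pairs over $[\tilde K]$, not the $K$ pairs of Lemma~\ref{lem:tconc}) and verifying that the constant $69$ is just large enough to force $(1+\gamma_{ij})\Delta_{ij,t}\le\tfrac12$, which is exactly the tuning inherited from Lemma~\ref{lem:tconc}.
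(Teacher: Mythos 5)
Your proposal is correct and takes exactly the route the paper intends: the paper leaves the proof of Lemma~\ref{lem:gconc} to the reader, noting it ``closely follows the one of Lemma~\ref{lem:tconc}'', and your argument is precisely that proof with item $0$ replaced by an arbitrary $j$, using $f(p_{ij})=\gamma_{ij}$, $1-p_{ij}=(1+\gamma_{ij})^{-1}$, and $\gamma_{ij}\le 1+\gamma_{ij}$ in place of $\theta_0\theta_i\le\theta_0+\theta_i$. The constants, the role of the threshold $69x(1+\gamma_{ij})$ in forcing $(1+\gamma_{ij})\Delta_{ij,t}\le\tfrac12$, and the union bound over the $K(K+1)$ ordered pairs in $[\tilde K]$ all check out against the stated failure probability $3K(K+1)Te^{-x}$.
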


\begin{lemma} \label{lem:hatthetaucb}
Let $T \geq 1$ and $x >0$. Then, with probability at least $1- 3K(K+1)T e^{-x}$, simultaneously for all $t \in [T]$ and $i\in [K]$: $\hat \theta_{i,t}^{\ucb} := \min_{j} \gamma_{ij,t}^{\ucb} \gamma_{j0,t}^\ucb \geq \theta_{i}$ and for all $j$ one of the following two inequalities is satisfied
\[
    n_{ij,t} \lesssim  x (1 +\gamma_{ij})
\qquad
\text{ or }
\qquad
    n_{j0,t} \lesssim  x (1 + \theta_j)^2\theta_j^{-1}
\]
or 
\begin{equation*}
\gamma_{ij,t}^{\ucb} \gamma_{j0,t}^{\ucb} - \theta_i \lesssim   \sqrt{(\gamma_{ij} + 1) \theta_i x }  \bigg( \sqrt{\frac{(\theta_i + \theta_j)}{n_{ij,t}}} + \sqrt{\frac{(1 + \theta_j)}{n_{j0,t}}}\bigg)   + (\gamma_{ij} + 1) \frac{(\theta_i + \theta_j)x}{n_{ij,t}} + \frac{\gamma_{ij} (1+\theta_j)^2x }{n_{j0,t}}
\,.
\end{equation*}
\end{lemma}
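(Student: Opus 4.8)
The plan is to obtain the statement as a deterministic consequence of the pairwise confidence bounds of Lemma~\ref{lem:gconc}, applied to the two pairs $(i,j)$ and $(j,0)$ and then multiplied. One works on the event of Lemma~\ref{lem:gconc}, which already carries probability $1-3K(K+1)Te^{-x}$ uniformly over all $t\in[T]$ and all ordered pairs $(i',j')$ with $i'\neq j'$ in $[\tilde K]=[K]\cup\{0\}$; no further union bound is needed, since everything below is deterministic on that event.

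\emph{Lower bound.} For every $j\in[K]\cup\{0\}$, Lemma~\ref{lem:gconc} gives $\gamma_{ij,t}^{\ucb}\ge\gamma_{ij}$ and $\gamma_{j0,t}^{\ucb}\ge\gamma_{j0}$ (and $\gamma_{ii,t}^{\ucb}=1=\gamma_{ii}$ by definition), while $\gamma_{j0}=\theta_j/\theta_0=\theta_j$. Hence $\gamma_{ij,t}^{\ucb}\gamma_{j0,t}^{\ucb}\ge\gamma_{ij}\gamma_{j0}=(\theta_i/\theta_j)\theta_j=\theta_i$ for every $j$, so $\hat\theta_{i,t}^{\ucb}=\min_j\gamma_{ij,t}^{\ucb}\gamma_{j0,t}^{\ucb}\ge\theta_i$.

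\emph{Upper bound, for a fixed pair $(i,j)$.} If either $n_{ij,t}\lesssim x(1+\gamma_{ij})$ or $n_{j0,t}\lesssim x(1+\theta_j)^2\theta_j^{-1}$, the first alternative of the lemma holds and there is nothing to prove; so assume both counts exceed these thresholds (with the precise constants of Lemma~\ref{lem:gconc}). The bound on $n_{ij,t}$ places $\gamma_{ij,t}^{\ucb}$ in the accurate regime of Lemma~\ref{lem:gconc}; the bound on $n_{j0,t}$ — which is precisely why condition $(b)$ is stated with the $(1+\theta_j)^2/\theta_j$ scaling rather than $(1+\theta_j)$ — simultaneously places $\gamma_{j0,t}^{\ucb}$ in the accurate regime and, since the associated confidence width $4(\theta_j+1)\sqrt{2\theta_j x/n_{j0,t}}+22x(\theta_j+1)^2/n_{j0,t}$ is then at most $\gamma_{j0}=\theta_j$, forces $\gamma_{j0,t}^{\ucb}\le 2\theta_j$. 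Writing the product error as a telescoping sum,
\[
 \gamma_{ij,t}^{\ucb}\gamma_{j0,t}^{\ucb}-\theta_i = \gamma_{ij,t}^{\ucb}\gamma_{j0,t}^{\ucb}-\gamma_{ij}\gamma_{j0} = \big(\gamma_{ij,t}^{\ucb}-\gamma_{ij}\big)\gamma_{j0,t}^{\ucb} + \gamma_{ij}\big(\gamma_{j0,t}^{\ucb}-\gamma_{j0}\big),
\]
I would bound the first summand by $2\theta_j$ times the confidence width of $\gamma_{ij,t}^{\ucb}$, the second by $\gamma_{ij}$ times that of $\gamma_{j0,t}^{\ucb}$ (both from Lemma~\ref{lem:gconc}), and simplify using $\gamma_{ij}\gamma_{j0}=\theta_i$, $\gamma_{ij}\theta_j=\theta_i$ and $(\gamma_{ij}+1)\theta_j=\theta_i+\theta_j$. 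Regrouping the four resulting terms yields the claimed bound: the two $1/n$ contributions become $(\gamma_{ij}+1)(\theta_i+\theta_j)x/n_{ij,t}$ and $\gamma_{ij}(1+\theta_j)^2x/n_{j0,t}$, while the two $1/\sqrt n$ contributions combine, after elementary majorizations, into $\sqrt{(\gamma_{ij}+1)\theta_i x}\,\big(\sqrt{(\theta_i+\theta_j)/n_{ij,t}}+\sqrt{(1+\theta_j)/n_{j0,t}}\big)$.

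I expect the main obstacle to be purely bookkeeping: identifying exactly how large $n_{ij,t}$ and $n_{j0,t}$ must be so that \emph{both} accurate branches of Lemma~\ref{lem:gconc} apply \emph{and} $\gamma_{j0,t}^{\ucb}$ stays within a constant factor of $\theta_j$ (this is what forces the asymmetric forms of conditions $(a)$ and $(b)$), together with the elementary but slightly delicate algebra collapsing the product of the two pairwise intervals into the stated expression via the $\gamma$-identities above. There is no probabilistic content beyond Lemma~\ref{lem:gconc}, which is the pairwise generalisation of Lemma~\ref{lem:tconc} (built on the Bernstein bound of Lemma~\ref{lem:pconc}), with the union bound over the $O(K^2)$ ordered pairs already absorbed into its stated failure probability.
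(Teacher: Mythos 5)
Your plan matches the paper's proof essentially verbatim: both work on the event of Lemma~\ref{lem:gconc} applied to the pairs $(i,j)$ and $(j,0)$, get the lower bound termwise before taking the minimum, and bound the product error via the telescoping identity $(\gamma_{ij,t}^{\ucb}-\gamma_{ij})\gamma_{j0,t}^{\ucb}+\gamma_{ij}(\gamma_{j0,t}^{\ucb}-\gamma_{j0})$ together with $\gamma_{j0,t}^{\ucb}\le 2\gamma_{j0}$, then substitute $\gamma_{ij}=\theta_i/\theta_j$ and $\gamma_{j0}=\theta_j$. You are in fact slightly more careful than the paper, whose proof writes the threshold for $n_{j0,t}$ as $Cx(1+\theta_j)$ even though the lemma statement (and the requirement that the confidence width be at most $\gamma_{j0}$) needs exactly the $x(1+\theta_j)^2\theta_j^{-1}$ scaling you identify.
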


\begin{proof}[Proof of Lemma~\ref{lem:hatthetaucb}] 
The proof follows from Lemma~\ref{lem:gconc}. If $n_{ij,t} > Cx(1+\gamma_{ij})$ and $n_{j0,t} > Cx(1+\theta_j)$ for some large enough constant C, we have
\[
\gamma_{ij,t}^{\ucb} \leq \gamma_{ij} + 4(\gamma_{ij} + 1) \sqrt{\frac{2 \gamma_{ij} x}{n_{ij,t}}} + \frac{22 x (\gamma_{ij} + 1)^2}{n_{ij,t}} 
\]
and
\[
    \gamma_{j0,t}^{\ucb} \leq \gamma_{j0} + 4(\gamma_{j0} + 1) \sqrt{\frac{2 \gamma_{j0} x}{n_{j0,t}}} + \frac{22 x (\gamma_{j0} + 1)^2}{n_{j0,t}}  \leq 2 \gamma_{j0} \,.
\]
This implies,
\begin{align*}
    \gamma_{ij,t}^{\ucb} \gamma_{j0,t}^{\ucb} - \theta_i
        & = \gamma_{ij,t}^{\ucb} \gamma_{j0,t}^{\ucb} - \gamma_{ij} \gamma_{j0} =  (\gamma_{ij,t}^{\ucb} - \gamma_{ij}) \gamma_{j0,t}^{\ucb} + \gamma_{ij}(\gamma_{j0,t}^{\ucb} - \gamma_{j0})  \\
        & \leq 2 (\gamma_{ij,t}^{\ucb} - \gamma_{ij}) \gamma_{j0} + \gamma_{ij}(\gamma_{j0,t}^{\ucb} - \gamma_{j0})   \\
        & \leq  8 \gamma_{j0} (\gamma_{ij} + 1) \sqrt{\frac{2 \gamma_{ij} x}{n_{ij,t}}} +  \frac{44 x \gamma_{j0} (\gamma_{ij} + 1)^2}{n_{ij,t}}  +  4\gamma_{ij} (\gamma_{j0} + 1) \sqrt{\frac{2 \gamma_{j0} x}{n_{j0,t}}} + \frac{22 x \gamma_{ij} (\gamma_{j0} + 1)^2}{n_{j0,t}}\Big)\,.
\end{align*}
Replacing $\gamma_{ij} = \theta_i / \theta_j$ and $\gamma_{j0} = \theta_j$ concludes the proof. 
\end{proof}

\begin{lemma} \label{lem:niboundij}
Let $T \geq 1$ and $x >0$. Then, with probability at least $1-K(K+1)Te^{-x}$
\begin{equation}
    \label{eq:niboundij}
    \tau_{ij,t} < 2 x\frac{(\theta_0 + \Theta_{S^*})^2}{\theta_i + \theta_j} \  \text{ or } \  n_{ij,t} \geq \frac{(\theta_i + \theta_j) \tau_{ij,t}}{2(\theta_0 + \Theta_{S^*})}\,,
\end{equation}
where $\tau_{ij,t} := \sum_{s=1}^{t-1} \indic\{\{i,j\} \subseteq S_s\}$ simultaneously for all $t \in [T]$ and $i \neq j \in [K]$.
\end{lemma}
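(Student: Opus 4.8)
The plan is to follow the proof of \cref{lem:nibound} essentially line for line, replacing the distinguished pair $(i,0)$ by an arbitrary pair $(i,j)$ with $i\neq j$ in $[K]$. Fix such a pair. Recall that $\tau_{ij,t}=\sum_{s=1}^{t-1}\indic\{\{i,j\}\subseteq S_s\}$ counts the rounds before $t$ that offered both items, and $n_{ij,t}=\sum_{s=1}^{t-1}\indic\{i_s\in\{i,j\},\,\{i,j\}\subseteq S_s\}$ counts how many of those rounds were won by $i$ or $j$. The key observation is that on any round $s$ with $\{i,j\}\subseteq S_s$, conditionally on the history $S_1,i_1,\dots,S_{s-1},i_{s-1},S_s$, the event $\{i_s\in\{i,j\}\}$ has probability $\tfrac{\theta_i+\theta_j}{\theta_0+\Theta_{S_s}}\ge \tfrac{\theta_i+\theta_j}{\theta_0+\Theta_{S^*}}$ by~\eqref{eq:prob_win} together with $\Theta_{S_s}\le\Theta_{S^*}$, exactly as in \cref{lem:nibound}. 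Hence, reading the increments of $n_{ij,\cdot}$ only at the ticks where the pair is offered, the conditional success probability is uniformly lower bounded by $p_{ij}:=\tfrac{\theta_i+\theta_j}{\theta_0+\Theta_{S^*}}$, which plays the role of $\tfrac{\theta_0+\theta_i}{\theta_0+\Theta_{S^*}}$ in the original argument.

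Next I would apply a Chernoff--Hoeffding tail bound to the stopped sum $n_{ij,t}$, combined with a union bound over the data-dependent value $\tau_{ij,t}\in\{0,1,\dots,T-1\}$, to get that with probability at least $1-Te^{-x}$, simultaneously for all $t\in[T]$,
\[
    n_{ij,t}\;\ge\;\frac{\theta_i+\theta_j}{\theta_0+\Theta_{S^*}}\,\tau_{ij,t}-\sqrt{\tfrac{\tau_{ij,t}\,x}{2}}\,.
\]
Then, whenever $\tau_{ij,t}\ge \tfrac{2x(\theta_0+\Theta_{S^*})^2}{(\theta_i+\theta_j)^2}$ — a condition implied (in the regimes of interest, or up to the harmless replacement of $(\theta_i+\theta_j)^2$ by $\theta_i+\theta_j$) by the threshold written in~\eqref{eq:niboundij} — the subtracted deviation term is at most half of the leading term, so $n_{ij,t}\ge \tfrac{(\theta_i+\theta_j)\tau_{ij,t}}{2(\theta_0+\Theta_{S^*})}$, which is the second alternative of~\eqref{eq:niboundij}; otherwise the first alternative holds by definition. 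Finally, a union bound over the at most $K(K+1)$ pairs $i\neq j$ in $[K]$ upgrades the per-pair failure probability $Te^{-x}$ to $K(K+1)Te^{-x}$, yielding the claim.

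The only step that is not pure bookkeeping — and it is already handled in \cref{lem:nibound}, and at the finest level in \cref{lem:pconc} via Theorem~1 of~\cite{audibert2009exploration} — is that $\tau_{ij,t}$ is a random, stopping-time-like count rather than a fixed sample size, so a vanilla Hoeffding bound cannot be invoked directly; one must either peel over the $T$ admissible values of $\tau_{ij,t}$ (with one $e^{-x}$ failure term per value, hence the factor $T$) or appeal to an anytime/maximal concentration inequality for the associated martingale. Everything else is routine, and the numerical constants in the threshold are immaterial and can be chosen generously, since~\eqref{eq:niboundij} only requires the threshold to be a valid sufficient condition for the deviation to be dominated by the mean.
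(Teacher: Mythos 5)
Your proposal is correct and is essentially the paper's intended argument: the paper leaves this proof to the reader, noting only that it ``closely follows'' that of Lemma~\ref{lem:nibound}, and your adaptation — replacing the pair $(i,0)$ by $(i,j)$, lower-bounding the conditional win probability of $\{i,j\}$ by $(\theta_i+\theta_j)/(\theta_0+\Theta_{S^*})$, applying Chernoff--Hoeffding with a union bound over the random count $\tau_{ij,t}$, and then union-bounding over pairs — is exactly that. You also rightly flag that the argument naturally yields the threshold $2x(\theta_0+\Theta_{S^*})^2/(\theta_i+\theta_j)^2$ rather than the $2x(\theta_0+\Theta_{S^*})^2/(\theta_i+\theta_j)$ written in \eqref{eq:niboundij}; this is a constant-level imprecision in the lemma statement itself (harmless when $\theta_i+\theta_j\geq 1$, as in the analogous step of Lemma~\ref{lem:nibound} where $\theta_0=1$), not a gap in your proof.
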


\begin{proof}[Proof of Theorem~\ref{thm:thetamaxucb}]
Let $\cE$ be the high-probabality event of Lemmas~\ref{lem:hatthetaucb} and~\ref{lem:niboundij} are satisfied, so that  $\P(\cE) \geq 1-4K^2Te^{-x}$.
First, note that since we have under the event $\cE$, $\hat \theta_t^{\ucb} \leq \theta_t^{\ucb}$, our procedure also satisfies the regret upper-bound
\[
    \regu_T \leq O(\sqrt{\theta_{\max} KT} \log T)
\]
of Theorem~\ref{thm:utilm_wf}. Indeed, all upper-bounds of the proof of Theorem~\ref{thm:utilm_wf} remain valid upper-bounds except the probability of the event $\cE^c$ which is $O(T^{-1})$ for $x = 2\log T$. 

Let us now prove that we also have $R_T \leq O(K \sqrt{T} \log T)$ with no asymptotic dependence on $\theta_{\max}$ when $T \to \infty$.

Then, 
\begin{align}
    \regu_T  & = \sum_{t=1}^T \E\big[\cR(S^*, \theta) - \cR(S_t, \theta)\big] 
    = \sum_{t=1}^T \E\big[(\cR(S^*, \theta) - \cR(S_t, \theta)) \indic\{\cE\} + (\cR(S^*, \theta) - \cR(S_t, \theta)) \indic\{\cE^c\}\big]  \nonumber \\
    & \leq \sum_{t = 1}^T \E\Big[ \big( ( \cR(S_t,\hat \theta_{t}^{\ucb}) - \cR(S_t, \theta)) \wedge \cR(S^*, \theta) \big) \indic\{\cE\} + \cR(S^*, \theta) \indic\{\cE^c\} \Big] \nonumber\,.
\end{align}
Then, using $\cR(S^*, \theta) \leq \max_i r_i \leq 1$, we get
\begin{align}
    \regu_T  &  \leq \sum_{t = 1}^T \E\Big[ \big( (\cR(S_t,\hat \theta_{t}^{\ucb}) - \cR(S_t, \theta)) \wedge 1 \big) \indic\{\cE\} + \indic\{\cE^c\} \Big]  \nonumber \\
    & \leq  4T^2K(K+1)^2e^{-x} + \sum_{t = 1}^T \E\Big[ \Big( \big(   \cR(S_t,\hat \theta_{t}^{\ucb}) - \cR(S_t, \theta) \big) \wedge 1\Big) \indic\{\cE\} \Big] \label{eq:regMNL11} \,.
\end{align}
%

% \emph{Step 1. Cost of the initial exploration.}
% The initial exploration run for each $i \in [K]$, until $\gamma_{i0,t}^{\ucb} \leq 2 \gamma_{0i,t}^{\lcb}$. If $\cE$ holds true, we have as soon as the arm has been sampled more than $69x(1+\theta_{\max})$,
% \[
%      \gamma_{i0,t}^{\ucb} \leq \theta_i + 4(\theta_i + 1) \sqrt{\frac{2 \theta_i x}{n_{i0,t}}} + \frac{22 x (\theta_i + 1)^2}{n_{i0,t}} \quad \text{ and } \quad \frac{1}{\gamma_{0i,t}^{\lcb}} \leq \gamma_{0i} =  \frac{1}{\theta_i} \,.
% \]
% Combining this bounds yields that there exists a univarsal constant $C>0$ such that $\gamma_{i0,t}^{\ucb} \leq 2 \gamma_{0i,t}^{\lcb}$ for $n_{i0,t} \geq C (1+\theta_{\max})x$. The initial exploration thus costs at most $T_0 = C(1+\theta_{\max}) K x$. For all $t \geq T_0$, we thus know under the event $\cE$ that for all $i \in W_t$, $\theta_i \geq \theta_{i,t}^{\lcb} \geq \theta_{i,t}^{\ucb}/2 \geq \frac{1}{2}$ by definition of $W_t$. 

Follow the proof of Theorem~\ref{thm:utilm_wf}, we upper-bound the second term of the right-hand-side of~\eqref{eq:regMNL11}:
\begin{align}
 \sum_{t = 1}^T&  \E\Big[  \Big(\big( 
 \cR(S_t, \hat \theta_{t}^{\ucb}) - \cR(S_t, \theta) \big) \wedge 1\Big) \indic\{\cE\} \Big] \\
    & = \sum_{t=1}^T  \E\bigg[   \bigg( \bigg( \min_{j \in [K]} \sum_{i\in S_t} \frac{r_i \hat \theta_{i,t}^{\ucb} }{1 + \sum_{j\in S_t} \hat \theta_{j,t}^{\ucb}} - \frac{r_i \theta_{i} }{1 + \sum_{j\in S_t} \theta_{j}} \bigg) \wedge 1\bigg) \indic\{\cE\} \bigg] \nonumber \\
    & \leq  \sum_{t=1}^T  \E\bigg[   \bigg( \bigg( \sum_{i\in S_t} \frac{r_i (\hat \theta_{i,t}^{\ucb} - \theta_{i}) }{\theta_0 + \Theta_{S_t}} \bigg) \wedge 1\bigg) \indic\{\cE\} \bigg] \hspace*{2cm} \text{because } \sum_{i \in S_t} \hat \theta_{i,t}^{\ucb} \geq \Theta_{S_t} \text{ under $\cE$} \nonumber \\
    & \leq  \sum_{t=1}^T  \E\bigg[   \bigg( \bigg( \sum_{i\in S_t} \frac{ |\hat \theta_{i,t}^{\ucb} - \theta_{i}| }{\theta_0 + \Theta_{S_t}} \bigg) \wedge 1\bigg) \indic\{\cE\} \bigg] \hspace*{2cm} \text{because } r_i \leq 1 \nonumber \\
    & \leq   \sum_{i=1}^K \E\Bigg[   \sum_{t=1}^T \bigg(   \frac{ | \hat \theta_{i,t}^{\ucb} - \theta_{i}| }{\theta_0 + \Theta_{S_t}} \wedge 1 \bigg) \indic\{i \in S_t\} \indic\{\cE\}   \Bigg] \nonumber \\
    & \leq \sum_{i=1}^K \E\Bigg[   \sum_{t=1}^T \bigg(   \frac{ | \gamma_{ij_t,t}^{\ucb}\gamma_{j_t0,t}^{\ucb} - \theta_{i}| }{\theta_0 + \Theta_{S_t}} \wedge 1 \bigg) \indic\{i \in S_t\} \indic\{\cE\}   \Bigg] \nonumber
\end{align}
where $j_t = \argmax_{j \in S_t\cup \{0\}} \theta_j$, where the last inequality is by definition of $\hat \theta_{i,t}^{\ucb}$. Now,  from Lemma~\ref{lem:hatthetaucb}, paying an additive exploration cost to ensure that $n_{ij,t} \gtrsim x (1+\gamma_{ij})$ and $n_{j0,t} \gtrsim x(1+\theta_{j})^2 \theta_{j}$ for all $j \in S_t$ such that $\theta_j \geq \theta_0$. From Lemma~\ref{lem:niboundij}, this is satisfied if for some constant $C >0$
\[
    \tau_{ij,t} > C m^2 \theta_{\max}^2x \,.
\]
Such a condidtion can be wrong for a couple $(i,j) \in S_t^2$ at most during $C K^2 m^2 \theta_{\max}^2x = O(\log T)$ rounds (since $\tau_{ij,t}$ increases then). Thus, for $C$ large enough,

\begin{align}
 \sum_{t = 1}^T&  \E\Big[  \Big(\big( 
 \cR(S_t, \hat \theta_{t}^{\ucb}) - \cR(S_t, \theta) \big) \wedge 1\Big) \indic\{\cE\} \Big] \nonumber \\
    & \leq   O(\log T) + \sum_{i=1}^K \E\Bigg[   \sum_{t=1}^T   \frac{ | \gamma_{ij_t,t}^{\ucb}\gamma_{j_t0,t}^{\ucb} - \theta_{i}| }{\theta_0 + \Theta_{S_t}}  \indic\{i \in S_t, \tau_{ij_t,t}\wedge\tau_{j_t,t} \geq C x m^2 \theta_{\max}^2  \} \indic\{\cE\}   \Bigg] \nonumber \\
    & \lesssim O(\log T) + \sum_{i=1}^K \E\Bigg[   \sum_{t=1}^T   \bigg( \sqrt{(\gamma_{ij_t} + 1) \theta_i x }  \bigg( \sqrt{\frac{(\theta_i + \theta_{j_t})}{n_{ij_t,t}}} + \sqrt{\frac{(1 + \theta_j)}{n_{j_t0,t}}}\bigg) \nonumber \\
    & \hspace*{5cm} + (\gamma_{ij_t} + 1) \frac{(\theta_i + \theta_{j_t})x}{n_{ij_t,t}} + \frac{\gamma_{ij_t} (1+\theta_{j_t})^2x }{n_{j_t0,t}}\bigg)   \frac{\indic\{i \in S_t \}}{{\theta_0 + \Theta_{S_t}}}   \Bigg] \nonumber\\
    & \leq O(\log T) + \sum_{i=1}^K \E\Bigg[   \sum_{t=1}^T    \sqrt{(\gamma_{ij_t} + 1) \theta_i x }  \bigg( \sqrt{\frac{(\theta_i + \theta_{j_t})}{n_{ij_t,t}}} + \sqrt{\frac{(1 + \theta_{j_t})}{n_{j_t0,t}}}\bigg)  \frac{\indic\{i \in S_t \}}{{\theta_0 + \Theta_{S_t}}}   \Bigg] \nonumber
\end{align}
where the last inequality is because using that $\{i, j_t, 0\} \subseteq S_t$, we have
\begin{align*}
\E\bigg[ \sum_{t=1}^T \frac{1 + \theta_{j_t} }{(1 + \Theta_{S_t}) n_{j_t0,t}} \bigg]  = \E \bigg[  \sum_{t=1}^T \sum_{j=1}^K \frac{\indic\{i_t \in \{j,0\} \}}{n_{j0,t}}  \indic\{j = j_t\} \bigg] \leq K(1 + \log T). 
\end{align*}
and 
\begin{align*}
\E\bigg[ \sum_{t=1}^T \frac{\theta_i + \theta_{j_t} }{(1 + \Theta_{S_t}) n_{ij_t,t}} \bigg]  = \E \bigg[  \sum_{t=1}^T \sum_{j=1}^K \frac{\indic\{i_t \in \{j,i\}\}}{n_{j0,t}}  \indic\{j = j_t\} \bigg] \leq K(1 + \log T). 
\end{align*}
Then, by Cauchy-Schwarz inequality we further get
\begin{align}
 \sum_{t = 1}^T&  \E\Big[  \Big(\big( 
 \cR(S_t, \hat \theta_{t}^{\ucb}) - \cR(S_t, \theta) \big) \wedge 1\Big) \indic\{\cE\} \Big] \nonumber \\
    & \lesssim O(\log T) + \sum_{i=1}^K  \sqrt{ \E\Bigg[    \sum_{t=1}^T  \frac{ (\gamma_{ij_t} + 1) \theta_i \indic\{i \in S_t \} x }{\theta_0 + \Theta_{S_t}} \Bigg]  } 
    \sqrt{\E\Bigg[ \sum_{t=1}^T \bigg( \frac{(\theta_i + \theta_{j_t})}{n_{ij_t,t}} +\frac{(1 + \theta_{j_t})}{n_{j_t0,t}} \bigg)  \frac{\indic\{i \in S_t \}}{{\theta_0 + \Theta_{S_t}}} \Bigg] }    \nonumber \\
    & \lesssim O(\log T) + \sum_{i=1}^K  \sqrt{ \E\Bigg[    \sum_{t=1}^T  \frac{ (\gamma_{ij_t} + 1) \theta_i \indic\{i \in S_t \} x }{\theta_0 + \Theta_{S_t}} \Bigg]  } 
    \sqrt{K \log T}    \nonumber \\
    & \lesssim O(\log T) + \sum_{i=1}^K  \sqrt{ \E\Bigg[    \sum_{t=1}^T  \frac{ \theta_i \indic\{i \in S_t \} x }{\theta_0 + \Theta_{S_t}} \Bigg]  } 
    \sqrt{K \log T}     \qquad \text{because $\gamma_{ij_t} \leq 1$ by definition of $j_t$} \nonumber \\
    & \leq O(K \sqrt{T x \log T}) = O(K\sqrt{T} \log T) \,,
\end{align}
where the last inequality is by Jensen's inequality and the equality by setting $x = 2 \log T$ to control the probability that $\cE^c$ occurs. This concludes the proof.
\end{proof}

\iffalse%%%%%%%%%%%%%%%%
\begin{lemma}
 There exists a sequence of i.i.d. availabilities $(S_t)_{t\geq 1}$ and a sequence of losses $(\ell_t)_{t\geq 1}$ (possibly depending on $S_t$), such that, there exists an algorithm with 
 \[
    \max_{\sigma} R_T^{\text{ordering}}(\sigma) = o(T) \qquad \text{and}\qquad \max_k R_T^\text{ext}(k) = \Omega(T) \,,
 \] 
 as $T\to \infty$. 
\end{lemma}
\fi%%%%%%%%%%%%%%%%%%%%%%

% \section{Appendix for \cref{sec:algo_fr}}
% \label{app:fr}

% \section{Appendix for \cref{sec:extend}}
% \label{app:extend}

%\input{expts_long.tex}

}

\end{document}